\def\eqref#1{equation~\ref{#1}}
\def\1{\bm{1}}
\DeclareMathAlphabet{\mathsfit}{\encodingdefault}{\sfdefault}{m}{sl}
\SetMathAlphabet{\mathsfit}{bold}{\encodingdefault}{\sfdefault}{bx}{n}
\DeclareMathOperator*{\argmin}{arg\,min}
\newcommand{\cF}{\mathcal{F}}
\newcommand{\cD}{\mathcal{D}}
\newcommand{\cX}{\mathcal{X}}
\newcommand{\cY}{\mathcal{Y}}
\newcommand{\wcF}{\widetilde{\mathcal{F}}}
\newcommand{\tl}{\tilde{l}}
\newcommand{\tu}{\tilde{u}}
\newcommand{\fopt}{f_{\text{opt}}}
\newcommand{\opt}{\text{OPT}}
\newcommand{\err}{\operatorname{err}}
\newcommand{\xtx}{x' \to x}
\newcommand{\Dtx}{\mathcal{D} \to x}
\newtheorem{theorem}{Theorem}[section]  
\newtheorem{lemma}[theorem]{Lemma}
\newtheorem{corollary}[theorem]{Corollary}
\newtheorem{definition}[theorem]{Definition}
\newtheorem{proposition}[theorem]{Proposition}
\newtheorem{assumption}{Assumption}
\newif\ifshowupdates
\newcommand{\update}[1]{\ifshowupdates\textcolor{black}{#1}\else#1\fi}
\newcommand{\colt}[1]{\ifshowupdates\textcolor{black}{#1}\else#1\fi}
\newcommand{\reb}[1]{\ifshowupdates\textcolor{black}{#1}\else#1\fi}
\title{Learning from Interval Targets}
\author{%
  Rattana Pukdee \thanks{This work was conducted during an internship at Bloomberg.} \\
  Carnegie Mellon University\\
  \texttt{rpukdee@cs.cmu.edu} 
  \And
  Ziqi Ke \\
  Bloomberg\\
  \texttt{zke7@bloomberg.net} 
  \And
  Chirag Gupta\\
  Bloomberg \\
  \texttt{cgupta61@bloomberg.net} 
}
\begin{document}

\maketitle

\begin{abstract}
We study the problem of regression with interval targets, where only upper and lower bounds on target values are available in the form of intervals. This problem arises when the exact target label is expensive or impossible to obtain, due to inherent uncertainties. In the absence of exact targets, traditional regression loss functions cannot be used. First, we study the methodology of using a loss function compatible with interval targets, for which we establish non-asymptotic generalization bounds based on smoothness of the hypothesis class that significantly relax prior assumptions. Second, 
  we propose a novel minmax learning formulation: \textit{minimize} against the worst-case (\textit{maximized}) target labels within the provided intervals. The maximization problem in the latter is non-convex, but we show that good performance can be achieved by incorporating smoothness constraints. Finally, we perform extensive experiments on real-world datasets and show that our methods achieve state-of-the-art performance.
\end{abstract}

\section{Introduction}
Supervised learning has achieved significant empirical success, largely due to the availability of extensive labeled datasets. However, in many real-world tasks, obtaining target labels is challenging, which hampers the performance of these methods. This difficulty arises either from high labeling costs—for example, certain medical measurements are expensive—or from practical limitations, such as sensors that only record target values at discrete intervals (e.g., every hour), leaving intermediate values unobserved. Prior work has addressed this issue by incorporating additional information into the learning pipeline. For instance, some approaches encourage model outputs to be smooth over unlabeled data \citep{zhu2005semi, chapellesemi}, while others enforce models to satisfy constraints derived from domain knowledge, such as physical laws \citep{willard2020integrating, swischuk2019projection}.\\

In this work, we focus on regression tasks where only the lower and upper bounds of the target values (intervals) are available. Our setting relates to both weak supervision and learning with side information. Learning with interval targets generalizes supervised learning, which corresponds to the special case where the lower and upper bounds are equal. On the other hand, for many tasks, it is easier and more practical for human labelers to provide interval targets instead of precise single values; thus, these intervals can be viewed as a form of weak supervision. Additionally, in various settings, such intervals are readily available for unlabeled data, either from domain knowledge or inherent properties of the data, serving as side information e.g., in bond pricing.\\

A natural strategy for learning from interval targets is to learn a hypothesis whose outputs always lie within the provided intervals. Despite its simplicity, previous work \citep{cheng2023weakly} has shown that this method leads to a hypothesis that converges to the optimal one under two assumptions: (i) the true target function belongs to the hypothesis class, and (ii) the intervals have an ambiguity degree smaller than 1 (Section \ref{section: og learning objective}). However, these assumptions are unlikely to hold in practice. In particular, (ii) is often violated; for example, even in the simple case where the interval is a ball of radius $\epsilon$ around the target value $y$, the ambiguity degree equals 1. It is important to understand whether this approach can be effective under more relaxed assumptions. 

\subsection{Summary of contributions}
\begin{itemize}[left=0pt]
    \item {First, we study the approach of modifying the typical regression loss to make it compatible with interval learning. This setup was first studied by \citet{cheng2023weakly}, and our result improves upon theirs.} \colt{We show that for any hypothesis class $\cF$ with Rademacher complexity decaying as $O(1/\sqrt{n})$ such as for a class of two-layer neural networks with bounded weights, we prove that, with high probability, the error decomposes into an irreducible term depending on the quality of the intervals and the Lipschitz constant of the hypothesis class, plus terms that vanish at $O(1/\sqrt{n})$ (Theorem \ref{thm: sample complexity, realizable}). Compared to the previous bound by \citep{cheng2023weakly}, our result: (1) applies even when a so-called ``ambiguity degree" is large {(this roughly corresponds to going from the well-specified case to the agnostic case)}, (2) provides non-asymptotic guarantees, and (3) reveals how hypothesis class structure affects the learning guarantee. The key insight is that, when the hypothesis class is smooth, the outputs for two close inputs cannot differ significantly. As a result, portions of the original intervals can be ruled out, leading to much smaller valid intervals (Theorem \ref{thm: main bound v2} and Figure \ref{fig:main result}).}
    \item Second, we explore an alternative approach that learns a hypothesis minimizing the loss with respect to the worst-case labels within the given intervals. Since we assume that the true target values lie within these intervals, the worst-case loss serves as an upper bound on the regression loss. We consider two variants of the second approach: i) we allow the worst-case labels to be any points within the intervals, ii) we restrict the worst-case labels to be outputs of some hypothesis in our hypothesis class, thereby incorporating the smoothness property. We show that there are scenarios where the second variant performs arbitrarily better than the first (Proposition \ref{prop: minmax in F is better}), indicating that constraining the worst-case labels to the hypothesis class is preferable in the worst-case scenario.
    \item We complement the theory with experiments that demonstrate the effectiveness of both methods on real-world datasets. 
\end{itemize}

\subsection{Related work}
Our problem is closely related to partial-label learning, where each training point is associated with a set of candidate labels instead of a single target label \citep{cour2011learning, ishida2017learning, feng2020learning, ishida2019complementary, yu2018learning}. In classification with finite label sets, 
common approaches include minimizing the average loss over the label set \citep{jin2002learning, zhang2017disambiguation, wang2019partial, xu2021instance, wu2022revisiting, gong2022partial} and identifying the true label from the candidate set \citep{lv2020progressive, zhang2016partial, yu2016maximum}. 
Theoretical work has established learnability conditions \citep{liu2014learnability, cour2011learning} and statistically consistent estimators \citep{lv2020progressive, feng2020provably, wen2021leveraged} based on the small ambiguity degree assumption or specific label set generating distributions.

The regression setting has received less attention. While \citet{cheng2023partial} introduced partial-label regression with finite label sets and \citet{cheng2023weakly} extended it to intervals, both rely heavily on the small ambiguity degree assumption. However, this assumption---originally proposed for classification \citet{cour2011learning}---may not be suitable for regression tasks. In classification, a hypothesis is either correct or incorrect, and a small ambiguity degree ensures that, with enough observed label sets, we can recover the true label. However, in regression, we are often satisfied with predictions that are sufficiently close to the target—for example, within an error tolerance of~$\epsilon$—making the concept of ambiguity degree less applicable. \colt{We explore a natural extension of the ambiguity degree to ambiguity radius for the regression task in Section \ref{sec: ambiguity radius} and argue that our theoretical analysis not only is applicable to this extension but do also provide a stronger result.
} In our work, we study a projection loss, which is equivalent to the partial-label learning loss (PLL loss) in \citet{lv2020progressive} for the classification, and generalizing the limiting method in \citet{cheng2023weakly}. We provide a non-asymptotic error bound that does not rely on the ambiguity degree and extend our analysis to the agnostic setting. Additional related work appears in Appendix~\ref{sec: related work}.

\begin{figure}

     \centering
     \begin{minipage}[b]{0.42\textwidth}
         \centering
         \includegraphics[width=\textwidth]{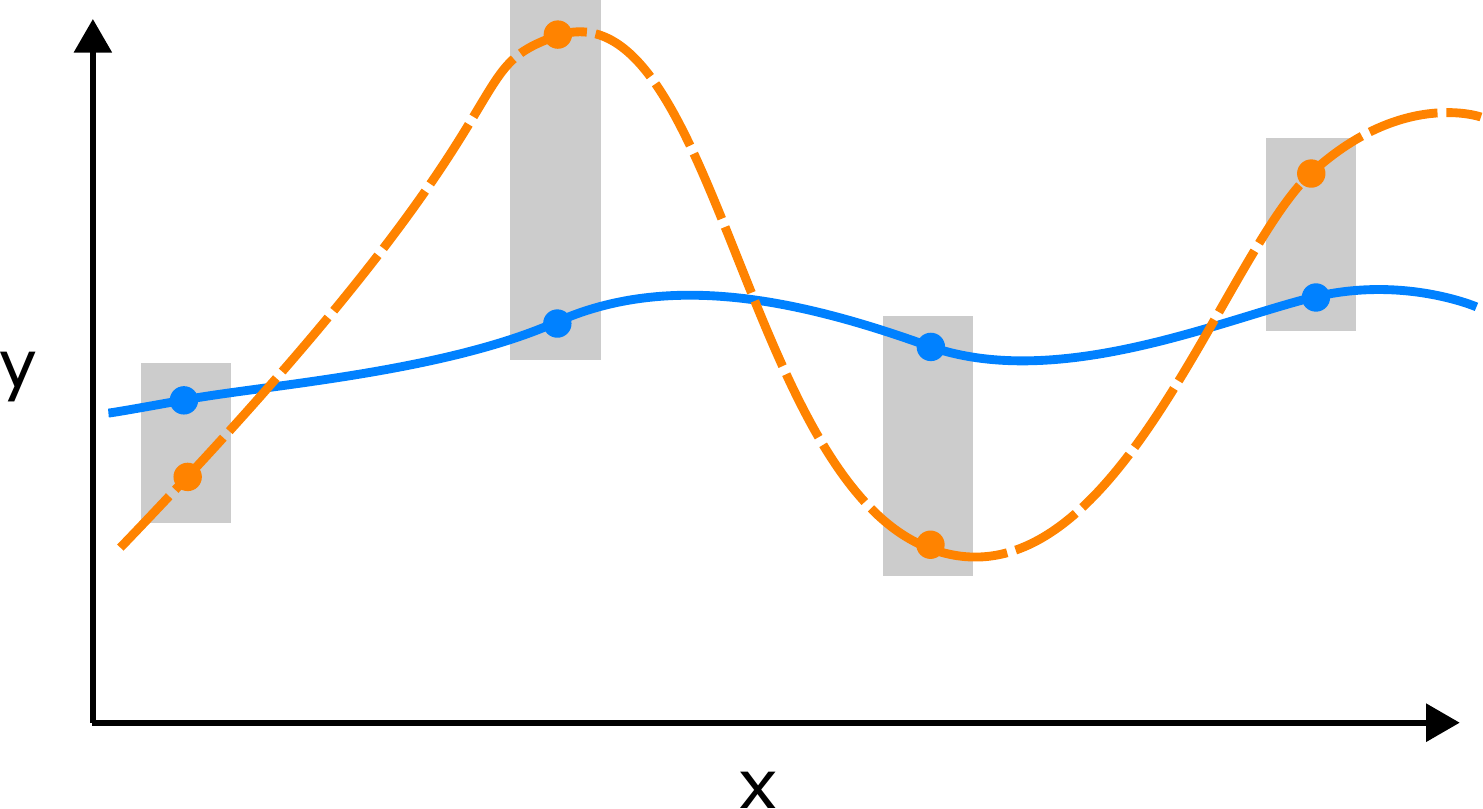}
         \caption{One dimension example of learning from interval targets}
         \label{fig:1 dim example}
     \end{minipage}
     \quad\quad\quad
     \begin{minipage}[b]{0.42\textwidth}
         \centering
         \includegraphics[width=\textwidth]{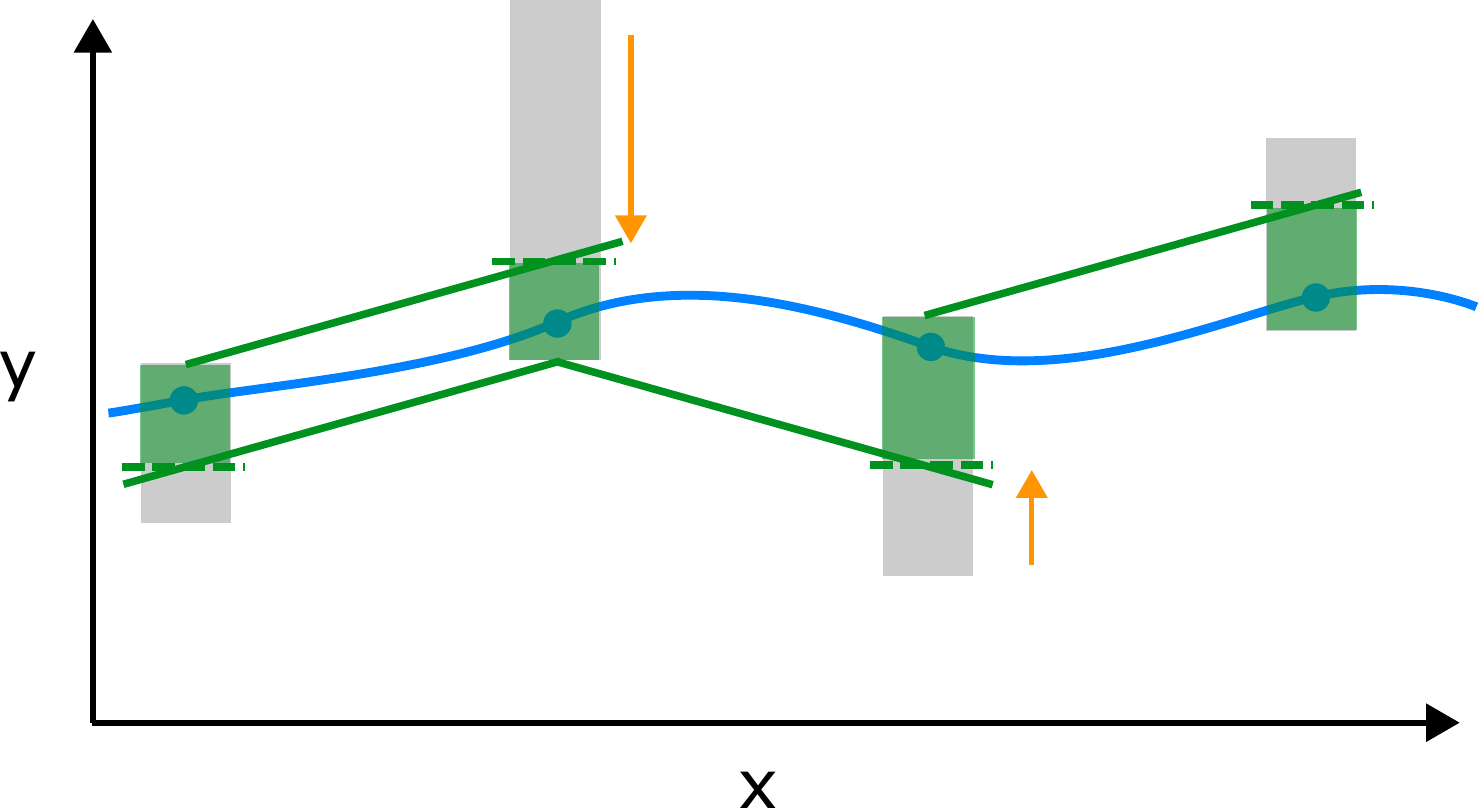}
         \caption{Smooth hypothesis leads to a smaller interval}
         \label{fig:main result}
     \end{minipage}
        \caption{ (\ref{fig:1 dim example}) An example of learning from intervals where the input is one dimension. The intervals are shown as gray boxes. A natural method is to learn a hypothesis that always lies within these intervals. Here, we illustrate two such hypotheses that are both valid but have different levels of smoothness. (\ref{fig:main result}) When the hypothesis is smooth (blue line), it lies within intervals much smaller than the original ones, depicted by the green region (Proposition~\ref{prop: f0 with smooth v2}). We can extend this result to hypotheses that approximately lie within the intervals (Theorem~\ref{thm: main bound v2}).}
\end{figure}

\subsection{Preliminaries and notation}
Let $\mathcal{X}$ be the feature space and $\mathcal{Y}$ be the label space. Let $f^* \colon \mathcal{X} \to \mathcal{Y}$ denote the target function. We use uppercase letters (e.g., $X$) to represent random variables and lowercase letters (e.g., $x$) for deterministic variables. We consider a regression problem where our goal is to learn a function $f \colon \mathcal{X} \to \mathcal{Y}$ from a hypothesis class $\mathcal{F}$ that approximates the target function $f^*$ in the deterministic label setting. Let $\mathcal{D}$ be the distribution over $\mathcal{X} \times \mathcal{Y}$ where, for each $x \in \mathcal{X}$, the label $y$ is deterministically given by $y = f^*(x)$. \update{\textbf{Our goal is to learn a function $f$ that minimizes the expected loss $\operatorname{err}(f) := \mathbb{E}_{(X, Y) \sim \mathcal{D}} \left[ \ell\big( f(X), Y \big) \right]$
for some loss function $\ell \colon \mathcal{Y} \times \mathcal{Y} \to \mathbb{R}$}}, \colt{satisfying the following, 
\begin{assumption}
\label{assum: loss}
    The loss function $\ell: \cY \times \cY \to \mathbb{R}$ can be written as   $\ell(y,y') = \psi(|y - y'|)$ for some  non-decreasing function $\psi$,  and satisfies $\ell(y,y') = 0$ if and only if $y = y'$.
\end{assumption}}
\noindent \textbf{Interval targets.} We assume \update{that we have access only to interval samples of the form} $\{ (x_i, l_i, u_i) \}_{i=1}^n$, where $l_i$ and $u_i$ are the lower and upper bounds of $y_i$, respectively. While we assume that the label is fixed to $f^*(x_i)$, we allow the intervals---that is, the bounds $(l_i, u_i)$---to be random and assume that each tuple $(x_i, l_i, u_i)$ is sampled from some distribution $\mathcal{D}_I$. \colt{To deal with singular events of measure zero, we assume that $\cD_I$ is a nonatomic distribution i.e. it does not contain a point mass (see Appendix \ref{sec: probabilistic interval} for a full definition). We also use $p$ to refer to the probability density function.
}

\section{Learning from intervals using a projection loss}
\label{section: og learning objective}
Since the target label $y$ always lies within the interval $[l,u]$, a natural strategy is to learn a hypothesis $f \in \cF$ such that $f(x) \in [l,u]$ for all $x \in \cX$ (Figure \ref{fig:1 dim example}). In previous work, \citet{cheng2023weakly} analyzed the following strategy. 
\begin{equation}
    \text{Learn $f$ that minimizes the empirical risk of the 0-1 loss: $\sum_{i=1}^n \ell_{0-1}(f(x_i), l_i, u_i)$, }
    \label{eq: informal-goal}
\end{equation}
where $\ell_{0-1}(f(x), l, u) := 1[f(x) < l] + 1[f(x) > u]$. Using $\ell_1$ loss as the surrogate \update{(equation (12))}, they showed that $f$ converges to $f^*$ as $n \to \infty$ if two assumptions are satisfied, (i) Realizability, that is, $f^* \in \cF$, (ii) Ambiguity degree is smaller than 1. Ambiguity degree is the maximum probability of a specific incorrect target $y'$, belonging to the same interval $[l,u]$ as the true target $y$:
\reb{
\begin{equation}
   \text{Ambiguity degree}(\mathcal{D},\mathcal{D}_I) := \sup_{(x,y,y')} \Big\{ \Pr_{\mathcal{D}_\mathcal{I}}(y' \in [L,U] \mid X = x) : p_{\mathcal{D}}(x,y) > 0, \, y' \neq y \Big\} < 1
\end{equation}}
These assumptions can be impractical and restrictive. 
First, our hypothesis class may not contain $f^*$. Second, an ambiguity degree smaller than 1  implies that for any fixed $x$, if we keep sampling the interval $[l,u]$, the intersection of such intervals (in the limit) would only be the set of the true target $\{y\}$; that is, we can recover the true $y$ given an infinite number of intervals. However, this assumption is unlikely to hold in practice because there is usually a gap between the upper and lower bounds and the target $y$. For example, in the simple case where $[l,u] = [y - \epsilon, y+\epsilon]$ (a ball with radius $\epsilon > 0$ around the true target $y$), the assumption fails since $y + \epsilon / 2 $ always lies within the interval at the same time with the true $y$. \\

\noindent We begin by defining a suitable learning objective. Since the 0-1 loss above is not continuous, it is not suitable for gradient-based optimization techniques. To address this, we relax the loss by considering a projection
\begin{equation}
    \pi_\ell(f(x), l , u)  := \min_{\tilde{y} \in [l,u]} \ell(f(x), \tilde{y})
\end{equation}
for any general loss function $\ell$. 
The following proposition shows that  $\pi_\ell$ is a meaningful proxy for the 0-1 loss, and can be evaluated efficiently by only considering the boundaries of the interval.
\begin{proposition}
\label{prop: proj loss}
    Suppose that $\ell: \cY \times \cY \to \mathbb{R}$ is a loss function that satisfies Assumption \ref{assum: loss} then  $\pi_\ell(f(x), l , u) = 0$ if and only if $f(x) \in [l,u]$, and we can write 
    \begin{equation}
        \label{eq: closed-form projection}
        \pi_\ell(f(x), l , u) = 1[f(x) < l] \ell(f(x), l) + 1[f(x) > u] \ell(f(x), u).
    \end{equation}
\end{proposition}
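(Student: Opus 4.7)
The plan is to reduce the minimization defining $\pi_\ell$ to an ordinary one-dimensional projection, exploiting the structural form $\ell(y,y') = \psi(|y-y'|)$ with $\psi$ non-decreasing, and then read off both claims by simple casework.

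First, I would observe that for any fixed $f(x)$, the map $\tilde y \mapsto \ell(f(x), \tilde y) = \psi(|f(x) - \tilde y|)$ is a composition of the non-decreasing function $\psi$ with the distance $\tilde y \mapsto |f(x) - \tilde y|$. Hence minimizing $\ell(f(x), \tilde y)$ over $\tilde y \in [l,u]$ is equivalent to minimizing $|f(x) - \tilde y|$ over $\tilde y \in [l,u]$, whose minimizer is the Euclidean projection $\Pi_{[l,u]}(f(x))$. This reduction bypasses any concern about continuity or strict monotonicity of $\psi$, since we explicitly identify a minimizer rather than appealing to an abstract existence argument.

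Next, I would split into three cases based on the sign of $f(x) - l$ and $f(x) - u$. If $f(x) \in [l,u]$, the projection is $\tilde y = f(x)$ itself, so $\pi_\ell(f(x),l,u) = \psi(0) = 0$, using that Assumption \ref{assum: loss} forces $\psi(0) = \ell(y,y) = 0$; the indicators in \eqref{eq: closed-form projection} are both zero, matching the formula. If $f(x) < l$, then $|f(x) - \tilde y|$ is minimized on $[l,u]$ at $\tilde y = l$, giving $\pi_\ell(f(x),l,u) = \ell(f(x), l)$; only the indicator $\mathbf{1}[f(x) < l]$ fires in \eqref{eq: closed-form projection}, again matching. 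The case $f(x) > u$ is symmetric with $\tilde y = u$. This establishes the closed-form identity.

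Finally, the ``if and only if'' characterization follows directly from this formula together with the second half of Assumption \ref{assum: loss}. If $f(x) \in [l,u]$, the formula gives $\pi_\ell(f(x),l,u) = 0$ by the first case above. Conversely, if $f(x) \notin [l,u]$, then either $f(x) < l$ (in which case $\pi_\ell = \ell(f(x),l)$, which is nonzero because $f(x) \neq l$) or $f(x) > u$ (similarly nonzero). The main ``obstacle'' here is purely bookkeeping: the reader must be convinced that the reduction to Euclidean projection is valid without extra regularity on $\psi$. I would address this by making the monotonicity step the first sentence of the proof, since the rest is casework.
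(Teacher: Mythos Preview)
Your proof is correct and follows essentially the same approach as the paper: both reduce $\min_{\tilde y \in [l,u]} \psi(|f(x)-\tilde y|)$ to $\psi(\min_{\tilde y \in [l,u]} |f(x)-\tilde y|)$ via the monotonicity of $\psi$, then do casework on the Euclidean projection. The only cosmetic difference is ordering: the paper first proves the ``if and only if'' statement directly and then derives the closed form, whereas you establish the closed form first and read off the equivalence from it; your route is arguably a bit tidier since the equivalence becomes a corollary rather than a separate argument.
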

The proof is provided in Appendix \ref{sec: proof proj loss}. In the rest of the paper, we refer to $\pi_l$ as the \textbf{projection loss}. Consequently, the informal goal given in \eqref{eq: informal-goal} can be formalized as the following objective: 
\begin{equation}
\label{eq: proj obj}
\min_f \sum_{i=1}^n 1[f(x_i) < l_i] \ell(f(x_i), l_i) + 1[f(x_i) > u_i] \ell(f(x_i), u_i).
\end{equation}

\ifx false 
\begin{equation*}
    \underset{\Delta}{\argmin} \ |\text{median}(\{y_1 - (p_1 + \Delta), y_2 - (p_2 + \Delta), \ldots, y_n - (p_n + \Delta)\})| = \underset{\Delta}{\argmin} \ \sum_{i=1}^n|y_i - (p_i + \Delta)|
\end{equation*}
\begin{equation*}
    \Delta \leftarrow \Delta - \eta \sum_{i=1}^b \partial |y_i - (p_i + \Delta)| = \Delta - \eta\sum_{i=1}^b \text{sign}(p_i + \Delta - y_i)
\end{equation*}
\fi 

\colt{\section{Properties of a hypothesis that lie inside the interval targets}}
\label{sec: theoretical analysis}
\noindent \colt{We will derive key properties of a hypothesis that lie inside the interval targets which will provide an essential setup for our main theoretical results in the next section.  We denote } $\wcF_\eta := \{f \in \cF \mid \mathbb{E}[\pi_\ell(f(X), L, U)] \leq \eta \}$
as a class of hypotheses with the expected projection loss is smaller than $\eta$. \colt{This is an interesting hypothesis class to study because as we minimize the projection objective \eqref{eq: proj obj}, a uniform convergence argument (e.g. \citet{mohri2018foundations}) would guarantee that the result hypothesis $f$ belong to $\wcF_\eta$. The value of $\eta$ depends on the number of data points and the complexity of $\cF$.} In particular, with probability at least $1 - \delta$ over the draws $(x_i, l_i,u_i) \sim \cD_I$, for all, $f \in \mathcal{F}$,
\begin{equation}
\label{eq: rademacher}
    \mathbb{E}[\pi_\ell(f(X), L, U)] \leq \frac{1}{n}\sum_{i=1}^n \pi_\ell(f(x_i), l_i, u_i) + 2R_n(\Pi(\cF)) + M\sqrt{\frac{\ln(1/\delta)}{n}}.
\end{equation}
  Here, $R_n(\Pi(\cF))$ is the Rademacher complexity of the function class $\Pi(\cF):= \{ \pi_\ell(f(x), l, u) \mapsto \mathbb{R}  \mid f \in \cF\}$ and we assume that the $\pi_\ell$ is uniformly bounded by $M$. Thus, given $n$, $M$, and the empirical loss on observed data (first term in R.H.S.), \update{we have an \textbf{upper bound} of $\eta$ which $f \in \wcF_\eta$} \colt{which decreases with $n$}. \colt{ In the rest of this section, we will provide a property of a hypothesis $f \in \wcF_\eta$ for any fixed $\eta > 0$. In particular, we show that for any $x$, $f(x)$ belongs to an interval that is smaller than the original interval targets (Theorem \ref{thm: main bound v2}) where the size of the reduced intervals depend on the Lipschitz constant of $\cF$ and $\eta$. } This leads to our main result: a generalization bound on the loss of $f$ w.r.t. actual labels $y$, thus showing that regression can be done using interval targets (Section \ref{sec: generalization bound}).\\

\subsection{Effect of realizability and small ambiguity degree assumptions on $\wcF_\eta$}
\label{sec: property of wcf eta}
We begin by examining the implications of the assumptions made in prior work (Section \ref{section: og learning objective}). The realizability assumption \reb{implies that} $f^* \in \wcF_0$ since the projection loss of $f^*$ is always zero. Second, the small ambiguity degree assumption implies that, for any $x$, the intersection of the intervals can only be the singleton set $\{y\}$. As a result, we have $\wcF_0 = \{ f \in \cF \mid \err(f) = 0\} \neq \emptyset$. \\

\noindent \update{With these} assumptions, we can show that minimizing the projection objective will converge to a hypothesis with zero error. The following informal argument summarizes the \colt{asymptotic} analysis of \citet{cheng2023partial}. Here is the high-level idea: let $f_n$  be the hypothesis that minimizes the empirical projection objective \eqref{eq: proj obj}. Realizability implies that there exists $f^* \in \cF$ with an expected loss of zero. Since $f_n$ achieves the empirical risk no larger than that of $f^*$, it must achieve an empirical risk of zero. From \eqref{eq: rademacher}, we have $f_n \in \wcF_{\eta_n}$ with high probability, where $\eta_n = 2R_n(\Pi(\cF)) + M\sqrt{\frac{\ln(1/\delta)}{n}}$. In general, \colt{for a hypothesis class with the Rademacher complexity decays as $O(1/\sqrt{n})$, we have }$\eta_n = O(1/\sqrt{n})$. Now as $n \to \infty$, we have $\eta_n \to 0$ which means that $\wcF_{\eta_n} \to \wcF_0$. Consequently, $\err(f_n) \to 0$ since any member of $\wcF_0$ has zero error. \\

\noindent However, when the realizability and ambiguity degree assumptions do not hold, there may be $f \in \wcF_0$ with $\err(f) > 0$. Additionally, with a finite amount of data, we can only learn a hypothesis $f \in \wcF_{\eta}$ for some $\eta > 0.$ In the next section, we will analyze $\wcF_\eta$ without relying on the small ambiguity degree assumption and in finite samples.

\subsection{Properties of $\wcF_\eta$}
Although our results extend to the probabilistic interval setting, where multiple intervals $[l,u]$ are drawn for each $x$, we focus on the deterministic interval setting in the main paper for simplicity. In this case, each $x$ is associated with a fixed interval $[l_x, u_x]$. A detailed discussion of the probabilistic interval setting is in Appendix \ref{sec: probabilistic interval}. \colt{Now, we start with the following characterization of $f(x)$ for  $f \in \wcF_0$ and then later we will consider when $f \in \wcF_\eta$. First, we can see that when the expected projection loss is zero, $f(x)$ must lie inside the given interval.}

\begin{proposition}
     For any $f \in \wcF_0$, we have $f(x) \in [l_x, u_x]$ for \colt{any $x$ with $p(x) > 0$.}
\end{proposition}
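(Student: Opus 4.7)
The proof is a short measure-theoretic argument that only needs non-negativity of $\pi_\ell$ combined with Proposition~\ref{prop: proj loss}. My plan proceeds in three steps.

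First, I would check that $\pi_\ell(f(x), l, u) \geq 0$ always. By Assumption~\ref{assum: loss}, $\ell(y, y') = \psi(|y - y'|)$ with $\psi$ non-decreasing and $\psi(0) = \ell(y,y) = 0$, so $\ell \geq 0$; the closed-form expression from Proposition~\ref{prop: proj loss}, $\pi_\ell(f(x), l, u) = 1[f(x) < l]\,\ell(f(x), l) + 1[f(x) > u]\,\ell(f(x), u)$, is then a sum of non-negative terms.

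Second, the definition $\wcF_0 = \{f \in \cF : \mathbb{E}[\pi_\ell(f(X), L, U)] \leq 0\}$ combined with $\pi_\ell \geq 0$ forces $\mathbb{E}[\pi_\ell(f(X), L, U)] = 0$. A non-negative random variable with zero expectation must equal zero almost surely, so $\pi_\ell(f(X), L, U) = 0$ almost surely under $\cD_I$. In the deterministic interval setting, $(L, U) = (l_X, u_X)$ is a fixed function of $X$, so this descends to the marginal of $X$: $\pi_\ell(f(x), l_x, u_x) = 0$ for almost every $x$ (under the marginal density $p$). Finally, the ``if and only if'' clause of Proposition~\ref{prop: proj loss} converts this to $f(x) \in [l_x, u_x]$ for almost every such $x$, which matches the paper's convention of stating the conclusion ``for any $x$ with $p(x) > 0$''.

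The main (mild) subtlety is the passage from ``almost every $x$'' to the pointwise phrasing ``for any $x$ with $p(x) > 0$''; this is a matter of convention, since the exceptional set has zero marginal measure and the nonatomic assumption on $\cD_I$ prevents isolated positive-mass points from being the issue. No regularity of $f$ or the interval endpoints is required for the almost-sure form, and the argument does not use anything about the hypothesis class beyond the definition of $\wcF_0$.
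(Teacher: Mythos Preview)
Your core argument---zero expectation of a nonnegative function forces it to vanish almost surely, then Proposition~\ref{prop: proj loss} converts $\pi_\ell=0$ into $f(x)\in[l_x,u_x]$---is correct and is exactly the spine of the paper's proof. The issue is the last step, which you flag but then dismiss as ``a matter of convention.'' The statement is genuinely pointwise (``for any $x$ with $p(x)>0$''), not almost-sure, and these are not equivalent: a nonnegative function can have zero integral yet be strictly positive on a measure-zero set of points with positive density. Your remark that the nonatomic assumption ``prevents isolated positive-mass points from being the issue'' is not what does the work here, and your claim that ``no regularity of $f$ or the interval endpoints is required'' is precisely where the gap lies.

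The paper closes this gap via Lemma~\ref{lemma: nonatomic}: if $P$ is nonatomic and $g:\cX\to[0,\infty)$ is \emph{continuous} with $\mathbb{E}_P[g(X)]=0$, then $g(x)=0$ at every $x$ with $p(x)>0$. The proof is the obvious one (if $g(x)>0$, continuity gives a ball where $g\geq g(x)/2$, and nonatomicity guarantees a subset of that ball with positive probability, contradicting zero expectation). Applying this with $g(x)=\pi_\ell(f(x),l_x,u_x)$ requires continuity of $f$ and of $x\mapsto(l_x,u_x)$, which is implicit in the paper's setting. So the nonatomic assumption is used in tandem with continuity, not in the way you describe; without that regularity the pointwise conclusion does not follow.
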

\colt{The proof is based on the Assumption \ref{assum: non-atomic} and the fact that the expected projection loss is zero.} Next, we can further show that the interval in which $f(x)$ must lie can be made smaller than $[l_x, u_x]$) if we assume that the class $\cF$ contains only $m$-Lipschitz function.
\colt{
\begin{definition}
    [m-Lipschitz] A class $\cF$ is $m$-Lipschitz when for any $f \in \cF$ and any $x,x' \in \cX$
    \begin{equation}
        |f(x) - f(x')| \leq m \lVert x - x' \rVert
    \end{equation}
\end{definition}
}
\noindent \colt{We can rearrange the inequality into  $f(x') - m\lVert x - x'\rVert \leq f(x) \leq f(x') + m\lVert x - x'\rVert$. For $f \in \wcF_0$, we can substitute $f(x')$ with its lower and upper bound $l_{x'}, u_{x'}$, which implies $l_{x'} - m\lVert x - x'\rVert \leq f(x) \leq u_{x'} + m\lVert x - x'\rVert$. We denote this as a lower and upper bound of $f(x)$ induced by $x'$.}

\colt{
\begin{definition} [A lower and upper bound induced by $x'$]
For any $x, x' \in \cX$, a lower and upper bound of $f(x)$ induced by $x'$ is given by
\[
l_{x' \to x}^{(m)} := l_{x'}- m \lVert x - x'\rVert, u_{x' \to x}^{(m)} := u_{x'} + m \lVert x - x'\rVert.
\]
Furthermore, the intersection of such bound over all $x'$ with $p(x') > 0$ is denoted by
\begin{equation}
    [l_{\cD \to x}^{(m)}, u_{\cD \to x}^{(m)}] = \bigcap_{p(x') > 0} [l_{x' \to x}^{(m)} , u_{x' \to x}^{(m)} ].
\end{equation}
\end{definition}
}
\noindent \colt{Following the argument above, we can derive a reduced interval for any $f \in \wcF_0$.}
\begin{proposition}
\label{prop: f0 with smooth v2}
    Let $\cF$ be a class of hypotheses that are $m$-Lipschitz and suppose that $\ell$ satisfies Assumption \ref{assum: loss}. Then for any $f \in \wcF_0$ and for each $x$ \update{with $p(x) > 0$},
    \begin{equation}
        f(x) \in  [l_{\cD \to x}^{(m)}, u_{\cD \to x}^{(m)}].
    \end{equation}
\end{proposition}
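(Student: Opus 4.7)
The plan is to combine the preceding proposition with the $m$-Lipschitz property of $\cF$ to transport pointwise interval bounds from points in the support of $\cD$ to the point $x$ of interest. The preceding proposition already guarantees that for any $f \in \wcF_0$ and any $x'$ with $p(x') > 0$ one has $l_{x'} \leq f(x') \leq u_{x'}$; the goal is to use smoothness to push this bound from $x'$ to $x$, and then intersect over all admissible $x'$.

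Concretely, I would fix $f \in \wcF_0$ and any $x$ with $p(x) > 0$, and pick an arbitrary $x'$ with $p(x') > 0$. The $m$-Lipschitz condition gives $|f(x) - f(x')| \leq m\|x-x'\|$, which combined with $l_{x'} \leq f(x') \leq u_{x'}$ immediately yields
\[
l_{x'} - m\|x-x'\| \;\leq\; f(x) \;\leq\; u_{x'} + m\|x-x'\|,
\]
i.e., $f(x) \in [l_{x' \to x}^{(m)}, u_{x' \to x}^{(m)}]$ by the definition of the induced bounds. Since $x'$ was an arbitrary point with $p(x') > 0$, taking the intersection over all such $x'$ gives exactly $f(x) \in [l_{\cD \to x}^{(m)}, u_{\cD \to x}^{(m)}]$, as claimed.

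I do not expect a significant obstacle: the core of the argument is a one-line Lipschitz chain followed by an intersection. The only subtle point is that the preceding proposition delivers interval containment only on the support of $\cD$, but that is precisely the index set used in the definition of $[l_{\cD \to x}^{(m)}, u_{\cD \to x}^{(m)}]$, so the two match without any extra measure-theoretic work. Assumption \ref{assum: loss} and the specific form of $\ell$ enter only through the preceding proposition and play no further role in this step; likewise, the interval distribution $\cD_I$ is invoked only indirectly via the support condition $p(x') > 0$.
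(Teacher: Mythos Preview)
Your proposal is correct and matches the paper's approach essentially line for line: the paper also applies the $m$-Lipschitz inequality to transport the interval containment $l_{x'} \leq f(x') \leq u_{x'}$ (supplied by the preceding proposition) into the induced bounds $[l_{x'\to x}^{(m)}, u_{x'\to x}^{(m)}]$, and then intersects over all $x'$ in the support. Your remark that Assumption~\ref{assum: loss} enters only through the preceding proposition is also consistent with the paper's treatment.
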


\begin{figure}
\vspace{-0.75cm}
     \centering
     \begin{minipage}[b]{0.33\textwidth}

         \centering

         \includegraphics[width=\textwidth]{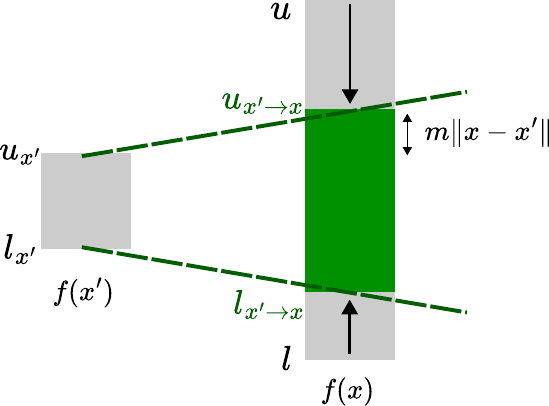}
         \caption{An interval of $x$ induced by $x'$}
         \label{fig: interval x inducex by x'}
     \end{minipage}
     \quad\quad\quad
     \begin{minipage}[b]{0.33\textwidth}

         \centering
         \includegraphics[width=\textwidth]{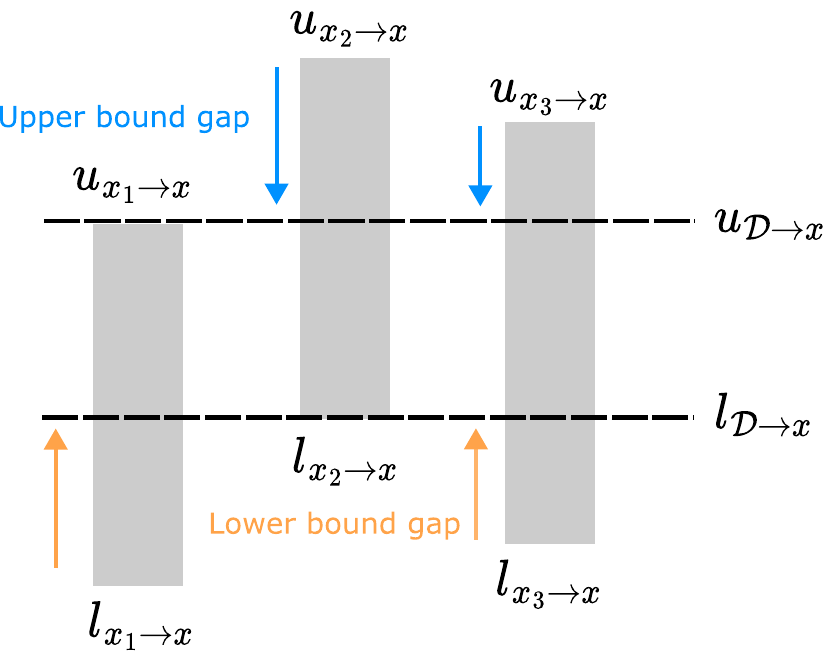}
         \caption{Upper and lower bound gaps}
         \label{fig: bound gap}
     \end{minipage}

        \caption{ (\ref{fig: interval x inducex by x'}) Based on the smoothness property, the difference between $f(x)$ and $f(x')$ cannot exceed $m\lVert x - x' \rVert$. As a result, the upper and lower bounds of $f(x')$ imply the corresponding bounds for $f(x)$. (\ref{fig: bound gap}) The lower bound gap of $x'$ to $x$ is defined as the difference between the lower bound of $f(x)$ induced by $x'$ and the largest lower bound ($\tl_{\Dtx}^{(m)}$); similarly for the upper bound gap. These gaps are crucial in bounding the size of $r_\eta(x)$ and $s_\eta(x)$ (how much we have to compensate when $f \in \wcF_\eta$) where larger gaps lead to larger values (Theorem \ref{thm: main bound v2}).}

\end{figure}

\noindent First, we observe that $[l_{\cD \to x}^{(m)}, u_{\cD \to x}^{(m)}]$ is always smaller than $[l_x, u_x]$ because when we set $x' = x$, we  have $[l_{x' \to x}^{(m)}, u_{x' \to x}^{(m)}] = [l_x, u_x]$. Second, \reb{if} the hypothesis becomes more smooth, the interval $[l_{\cD \to x}^{(m)}, u_{\cD \to x}^{(m)}]$ gets smaller. This phenomenon can also be interpreted as implicitly ``denoising'' the original intervals by leveraging the smoothness of the hypothesis class.\\

\noindent Next, we extend Proposition~\ref{prop: f0 with smooth v2} to $\wcF_\eta$. The key technical challenge is that for $f \in \wcF_\eta$, $f(x)$ may lie outside the interval so we can't simply use $l_{x'}, u_{x'}$ as lower and upper bounds of $f(x')$ anymore. This complicates the application of the Lipschitz property because $f(x')$ can now be arbitrarily large or small for any $x'$, as long as the expected projection loss is smaller than $\eta.$ The following result uses a new notion of a bound gap of $f(x)$ induced by $x'$ which is the difference between the lower and upper bounds induced by a given $x'$ and the best lower and upper bounds from all $x'$ (Figure \ref{fig: bound gap}). 

\colt{
\begin{definition}
    [A lower and upper bound gap induced by $x'$]
    We uses the notation, 
    \[
    lg_{x' \to x}^{(m)} = l_{\Dtx} - l_{x' \to x}^{(m)}, \quad \text{ and }\quad ug_{x' \to x}^{(m)} = u_{x' \to x}^{(m)} - u_{\Dtx},
    \]
    to respectively denote the lower bound gap and upper bound gap for $f(x)$ induced by $x'$.
\end{definition}
}

\begin{theorem}
\label{thm: main bound v2}
    Let $\cF$ be a class of functions that are $m$-Lipschitz, and $\ell(y,y') = |y-y'|^p$ for any $p \geq 1$. For any $f \in \wcF_\eta$ and for each $x$ \update{with $p(x) > 0$} we have,
    \begin{align}
    &f(x) \in  [l_{\cD \to x}^{(m)} - r_\eta(x), u_{\cD \to x}^{(m)} + s_\eta(x)], \text{ where, }\\
    &r_\eta(x) = r \quad \text{s.t. } \quad \mathbb{E}_X[(r -lg_{X \to x}^{(m)} )^p_+] = \eta, and \\
     &s_\eta(x) = s \quad \text{s.t. } \quad \mathbb{E}_X[(s - ug_{X \to x}^{(m)})^p_+] = \eta. 
\end{align}
\end{theorem}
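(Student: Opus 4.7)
The plan is to establish the upper bound $f(x) \leq u_{\cD \to x}^{(m)} + s_\eta(x)$; the lower bound $f(x) \geq l_{\cD \to x}^{(m)} - r_\eta(x)$ follows by an entirely analogous argument (swapping the Lipschitz inequality to $f(x') \leq f(x) + m\lVert x-x'\rVert$ and replacing $ug_{x'\to x}^{(m)}$ by $lg_{x'\to x}^{(m)}$). Fix $x$ with $p(x) > 0$ and set $s := f(x) - u_{\cD \to x}^{(m)}$; it suffices to show $s \leq s_\eta(x)$.

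The key step is to convert a putative violation at $x$ into a pointwise lower bound on the projection loss at every other point $x'$ in the support of $\cD$. By $m$-Lipschitzness, $f(x') \geq f(x) - m\lVert x-x'\rVert$, so substituting the definitions $u_{x'\to x}^{(m)} = u_{x'} + m\lVert x-x'\rVert$ and $ug_{x'\to x}^{(m)} = u_{x'\to x}^{(m)} - u_{\cD \to x}^{(m)}$ gives $f(x') - u_{x'} \geq s - ug_{x'\to x}^{(m)}$. Combining this with the closed form from Proposition~\ref{prop: proj loss} under $\ell(y,y') = |y-y'|^p$ and a short case split on the sign of $s - ug_{x'\to x}^{(m)}$ (if nonpositive, the target inequality is trivial; otherwise $f(x') > u_{x'}$ and hence $\pi_\ell(f(x'), l_{x'}, u_{x'}) = (f(x')-u_{x'})^p \geq (s - ug_{x'\to x}^{(m)})^p$), I obtain
\begin{equation*}
\pi_\ell(f(x'), l_{x'}, u_{x'}) \;\geq\; \bigl(s - ug_{x'\to x}^{(m)}\bigr)^p_+ \quad \text{for every } x' \text{ in the support of } \cD.
\end{equation*}

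Integrating against $\cD$ and using $f \in \wcF_\eta$ yields $\eta \geq \mathbb{E}_X[(s - ug_{X\to x}^{(m)})^p_+]$. Writing $\phi(t) := \mathbb{E}_X[(t - ug_{X\to x}^{(m)})^p_+]$, the definition of $s_\eta(x)$ is exactly $\phi(s_\eta(x)) = \eta$, so the argument reduces to showing that $\phi$ is strictly increasing on $[0,\infty)$; then $\phi(s) \leq \phi(s_\eta(x))$ forces $s \leq s_\eta(x)$ as required. This last piece is the main technical point I anticipate: since $ug_{x'\to x}^{(m)} \geq 0$ with infimum zero over the support of $\cD$, the nonatomicity assumption on $\cD_I$ from the preliminaries forces $\Pr(ug_{X\to x}^{(m)} < t) > 0$ for every $t > 0$, which gives strict monotonicity of $\phi$ (continuity is immediate from dominated convergence). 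Beyond this monotonicity step, the proof is essentially a one-line Lipschitz bound combined with the closed-form expression for $\pi_\ell$; no further ingredient appears necessary.
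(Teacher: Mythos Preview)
Your proposal is correct and follows essentially the same approach as the paper: propagate the deviation at $x$ to every $x'$ via the Lipschitz inequality, lower-bound the projection loss pointwise by $(s - ug_{x'\to x}^{(m)})^p_+$, integrate against $\cD$, and invert the resulting monotone map. The paper's appendix proof detours through a more general statement (arbitrary loss satisfying Assumption~\ref{assum: loss}, probabilistic intervals) before specializing to the $L_p$/deterministic case, and is in fact looser than you are about the strict monotonicity of $\phi$; otherwise the arguments coincide.
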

\begin{proof} (Sketch)
    The proof leverages the smoothness property of $f$ to establish bounds on how far the function values can deviate from their projected intervals. The key insight is that if $f(x)$ significantly deviates from the reduced interval $[l_{\mathcal{D} \to x}^{(m)}, u_{\mathcal{D} \to x}^{(m)}]$, then by Lipschitz continuity, $f(x')$ must also deviate from $[l_{x'}, u_{x'}]$ for nearby points $x'$. However, such deviations are constrained by the expected projection loss being bounded by $\eta$. The proof proceeds in three main steps: i) using the Lipschitz property, we show that if $f(x)$ deviates below its lower bound $l_{\mathcal{D} \to x}^{(m)}$ by some amount $r$, then for all points $x'$: $f(x') \leq \tilde{l}_{x'} - (r - (\tilde{l}_{\mathcal{D} \to x}^{(m)} - \tilde{l}_{x \to x}^{(m)}))$, ii) the projection loss bound $\mathbb{E}[\pi_\ell(f(X), L, U)] \leq \eta$ implies that such deviations cannot be too large. iii) the maximum possible deviation $r_\eta(x)$ is characterized by the equation:
$\eta = \mathbb{E}[1[g(x, X, r) < L] \ell(g(x, X, r), L)]$
where $g(x,x',r)$ represents the upper bound on $f(x')$ derived in step i). We can also apply a similar argument for the upper bound.
\end{proof}

\noindent  We compensate for $f \in \wcF_\eta$ by adding a buffer of size $r$ and $s$ to the interval derived in Proposition \ref{prop: f0 with smooth v2}.  If the average lower and upper bound gap is large, then we would have a larger compensation $r,s$. When $\eta = 0$, we have $r = s = 0$. In general, we can bound the buffers $r,s$ in terms of $\eta$.
\begin{proposition}
\label{prop: bound r,s v2}
Under the conditions of Theorem $\ref{thm: main bound v2}$,  we can bound $r_\eta(x)$ and $s_\eta(x)$, as
\begin{align}
\label{eq: bound r2 (main)}
    r_\eta(x) \leq \inf_{\delta} \delta + (\eta/\Pr(lg_{X \to x}^{(m)} \leq \delta))^{1/p} \quad\text{and}\quad s_\eta(x) \leq \inf_{\delta} \delta + (\eta/\Pr(ug_{X \to x}^{(m)} \leq \delta))^{1/p}.
\end{align} 
\end{proposition}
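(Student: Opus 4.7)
The plan is to exploit the implicit definition of $r_\eta(x)$ via the equation $\mathbb{E}_X[(r - lg_{X \to x}^{(m)})^p_+] = \eta$ together with a truncation-style lower bound. Since the map $r \mapsto \mathbb{E}_X[(r - lg_{X \to x}^{(m)})^p_+]$ is non-decreasing and continuous in $r$ (by dominated convergence), it suffices to show that for every $\delta \geq 0$ with $\Pr(lg_{X \to x}^{(m)} \leq \delta) > 0$, the candidate value $\delta + (\eta/\Pr(lg_{X \to x}^{(m)} \leq \delta))^{1/p}$ is at least $r_\eta(x)$; values of $\delta$ for which the probability is zero give a trivially infinite bound and can be ignored in the infimum.

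Fix such a $\delta$, and set $r^\star := \delta + (\eta/\Pr(lg_{X \to x}^{(m)} \leq \delta))^{1/p}$. The key observation is a pointwise lower bound: on the event $\{lg_{X \to x}^{(m)} \leq \delta\}$ and for any $r \geq \delta$, we have $r - lg_{X \to x}^{(m)} \geq r - \delta \geq 0$, so $(r - lg_{X \to x}^{(m)})^p_+ \geq (r - \delta)^p$. Taking expectations with the indicator of $\{lg_{X \to x}^{(m)} \leq \delta\}$ gives
\begin{equation*}
\mathbb{E}_X[(r - lg_{X \to x}^{(m)})^p_+] \;\geq\; (r - \delta)^p \cdot \Pr(lg_{X \to x}^{(m)} \leq \delta).
\end{equation*}
Substituting $r = r^\star$ on the right-hand side and using the definition of $r^\star$ yields $\mathbb{E}_X[(r^\star - lg_{X \to x}^{(m)})^p_+] \geq \eta$. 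By monotonicity in $r$, this forces $r_\eta(x) \leq r^\star$, and taking the infimum over $\delta$ gives the first claimed bound. The bound on $s_\eta(x)$ follows by the identical argument with $ug_{X \to x}^{(m)}$ in place of $lg_{X \to x}^{(m)}$.

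There is no real obstacle here beyond this tail-type inequality; the only mild technical points are (i) handling the degenerate case $\Pr(lg_{X \to x}^{(m)} \leq \delta) = 0$, which is absorbed into the infimum, and (ii) confirming that $r_\eta(x)$ is well-defined, which follows from the continuity and monotonicity of $r \mapsto \mathbb{E}_X[(r - lg_{X \to x}^{(m)})^p_+]$ together with Assumption \ref{assum: non-atomic} on $\mathcal{D}_I$.
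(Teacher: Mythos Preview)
Your proof is correct and takes essentially the same approach as the paper: both rely on the truncation inequality $\mathbb{E}_X[(r - lg_{X \to x}^{(m)})^p_+] \geq (r-\delta)_+^p \, \Pr(lg_{X \to x}^{(m)} \leq \delta)$ and then rearrange. The only cosmetic difference is that the paper applies this inequality directly at $r = r_\eta(x)$ and solves for $r$, whereas you plug in the candidate $r^\star$ and invoke monotonicity to conclude $r_\eta(x) \leq r^\star$; these are two sides of the same coin.
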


\section{Main results}
\label{sec: generalization bound}

\colt{We present our main theoretical results on learning with interval targets. Our analysis proceeds into three steps: first establishing a basic error bound for the realizable setting, then extending it to provide explicit sample complexity guarantees and finally extending it to the agnostic setting. We provide the sample complexity results and their interpretation here and provide the full analysis in Appendix \ref{appendix: sample complexity bound}}. The following result is also applicable to $L_p$ loss or a general loss function satisfying Assumption \ref{assum: loss} but we state the result for the $l_1$ loss for simplicity.

\begin{theorem}[Generalization bound, Realizable Setting]
\label{thm: sample complexity, realizable}
Let $\cF$ be a hypothesis class satisfying i) Realizability and $m$-Lipschitzness, ii) Rademacher complexity decays as $O(1/\sqrt{n})$, iii) support of the distribution $\mathcal{D}_I$ is bounded, iv) loss function is  $\ell(y,y') = |y - y'|$. With probability at least $1 - \delta$, for any $f$ that minimize the objective \eqref{eq: proj obj}, for any $\tau > 0$, 
\begin{equation}
    \operatorname{err}(f) \leq \underbrace{\mathbb{E}_X[ |u_{\mathcal{D} \to X}^{(m)} - l_{\mathcal{D} \to X}^{(m)}|]}_{(a)} + \underbrace{\tau + \left(\frac{D}{\sqrt{n}} + M\sqrt{\frac{\ln(1/\delta)}{n}}\right) \Gamma(\tau)}_{(b)},
\end{equation}
where $D,M$ are constants and  $\Gamma(\tau) = \mathbb{E}_{\widetilde{X}}\left[1/{\min(\Pr_X(lg_{X \to \widetilde{X}}^{(m)} \leq \tau), \Pr_X(ug_{X \to \widetilde{X}}^{(m)} \leq \tau))}\right]$ is decreasing in $\tau$. 
\end{theorem}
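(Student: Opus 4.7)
The plan is to chain four ingredients already in the paper: (i) the Rademacher bound \eqref{eq: rademacher} to put the empirical minimizer into $\wcF_{\eta_n}$ for a sample-size-dependent $\eta_n$, (ii) Theorem~\ref{thm: main bound v2} to localize $f(x)$ to a slightly enlarged reduced interval, (iii) Proposition~\ref{prop: f0 with smooth v2} applied to $f^*$ to localize $f^*(x)$ to the (unenlarged) reduced interval, and (iv) Proposition~\ref{prop: bound r,s v2} with $p = 1$ to convert $r_{\eta_n}(x), s_{\eta_n}(x)$ into the explicit $\tau$-expression that appears in the theorem.

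First, I would observe that in the realizable setting $f^* \in \cF$ and $f^*(x) \in [l,u]$ almost surely under $\cD_I$, so $\pi_\ell(f^*(X), L, U) = 0$ a.s. and consequently $f^* \in \wcF_0$. This has two uses: on the empirical side, $f^*$ achieves empirical projection risk $0$, so the ERM minimizer $f$ also satisfies $\tfrac{1}{n}\sum_i \pi_\ell(f(x_i),l_i,u_i) = 0$; on the population side, Proposition~\ref{prop: f0 with smooth v2} gives $f^*(x) \in [l_{\cD \to x}^{(m)}, u_{\cD \to x}^{(m)}]$ for $p(x) > 0$. Next, invoking \eqref{eq: rademacher} (one-sided uniform convergence) together with $R_n(\Pi(\cF)) \le D'/\sqrt{n}$ gives, with probability $\ge 1-\delta$,
\begin{equation*}
\mathbb{E}[\pi_\ell(f(X),L,U)] \;\le\; \eta_n \;:=\; 2 R_n(\Pi(\cF)) + M \sqrt{\tfrac{\ln(1/\delta)}{n}} \;=\; O\!\big(1/\sqrt{n}\big),
\end{equation*}
so $f \in \wcF_{\eta_n}$. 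Applying Theorem~\ref{thm: main bound v2} then yields $f(x) \in [l_{\cD \to x}^{(m)} - r_{\eta_n}(x),\ u_{\cD \to x}^{(m)} + s_{\eta_n}(x)]$ for each such $x$.

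Combining the localizations of $f(x)$ and $f^*(x)$ inside the common enlarged interval and using that $\ell$ is $\ell_1$,
\begin{equation*}
|f(x) - f^*(x)| \;\le\; \big(u_{\cD \to x}^{(m)} - l_{\cD \to x}^{(m)}\big) + r_{\eta_n}(x) + s_{\eta_n}(x).
\end{equation*}
Taking expectation over $X$ produces the first term $(a)$ plus $\mathbb{E}[r_{\eta_n}(X) + s_{\eta_n}(X)]$. I would then apply Proposition~\ref{prop: bound r,s v2} with $p=1$ at the choice $\delta = \tau$, which gives $r_{\eta_n}(x) \le \tau + \eta_n/\Pr(lg_{X \to x}^{(m)} \le \tau)$ and analogously for $s_{\eta_n}(x)$. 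Using $\tfrac{1}{a}+\tfrac{1}{b} \le \tfrac{2}{\min(a,b)}$ and taking expectation over the outer $\widetilde X$ bounds the residual by $2\tau + 2\eta_n \Gamma(\tau)$. Plugging in $\eta_n$ and absorbing the factors of $2$ into new constants $D, M$ matches the form in the statement. Monotonicity of $\Gamma$ is immediate since $\Pr(lg \le \tau), \Pr(ug \le \tau)$ are nondecreasing in $\tau$.

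The main obstacle is the simultaneous pointwise control delivered by Theorem~\ref{thm: main bound v2}: because that result requires $f \in \wcF_{\eta_n}$ with the \emph{population} projection risk at most $\eta_n$, the argument hinges on getting a one-sided uniform-convergence bound on the full hypothesis class $\cF$ (not just an in-sample bound on the chosen $f$), which is exactly what \eqref{eq: rademacher} together with the $O(1/\sqrt{n})$ Rademacher decay provides. The bounded-support assumption on $\cD_I$ is used only to keep $\pi_\ell$ uniformly bounded (so the Rademacher bound is applicable and the constant $M$ is finite) and to ensure integrability when passing to expectations. Everything else is algebraic substitution and absorbing constants.
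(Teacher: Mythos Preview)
Your proposal follows essentially the same route as the paper: localize the ERM $f$ via Theorem~\ref{thm: main bound v2}, localize $f^*$ via Proposition~\ref{prop: f0 with smooth v2}, combine for a pointwise $\ell_1$ bound, and use Proposition~\ref{prop: bound r,s v2} together with the Rademacher inequality~\eqref{eq: rademacher} to make $\eta_n$ explicit. Two small points are worth flagging. First, the paper bounds $|f(x)-f^*(x)|$ by $(u_{\cD\to x}^{(m)}-l_{\cD\to x}^{(m)})+\max(r_{\eta}(x),s_{\eta}(x))$ rather than the sum $r_\eta+s_\eta$; using the max gives $\tau+\eta\,\Gamma(\tau)$ directly and avoids the extra factor of $2$ on $\tau$, which you cannot literally ``absorb into $D,M$'' since $\tau$ is not multiplied by those constants (you can only reparameterize $\tau\mapsto\tau/2$, which changes $\Gamma$). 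Second, the assumption is that $R_n(\cF)=O(1/\sqrt n)$, not $R_n(\Pi(\cF))$; the paper bridges this via a Talagrand contraction argument, using the bounded-support hypothesis not merely for uniform boundedness of $\pi_\ell$ but to show each map $t\mapsto\pi_\ell(t,l_i,u_i)$ is Lipschitz with a universal constant, whence $R_n(\Pi(\cF))\le B\,R_n(\cF)$. You should make that step explicit rather than asserting $R_n(\Pi(\cF))\le D'/\sqrt n$.
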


\noindent{\textbf{Interpretation:} } Our error bound is divided into two parts. 
\begin{enumerate}[label= (\alph*),left=0pt]
    \item The first term represents an \textbf{irreducible }error term which depends on the smoothness property of our function class $\cF$ and the quality of the given intervals (it does not decrease as $n$ is larger).  However, this term can be small. For example, in the case when the ambiguity degree is small, this error term would be zero, ensuring a perfect recovery of the true labels. 
    \item The second and third term capture how well we can learn a hypothesis that belongs to the intervals and these would decay as we have a larger sample size $n$. To see this, assume that we have a fixed value of $\tau$, if one set $n \to \infty$ then the third term would converge to zero. That is, $(b)$ would converge to $\tau$ as $n \to \infty$. Since $\tau$ is arbitrary, we can set $\tau$ to be small so that $(b)$ would decay to zero as $n \to \infty$ and we are left with the first term $(a)$. In addition, the function $\Gamma(\tau)$ depends on the distribution of intervals $\cD_I$. In particular, when $\cD_I$ has small lower/upper bound gaps, $\Gamma(\tau)$ would also be small which leads to a better generalization bound for any fixed $n$.
\end{enumerate}

\begin{theorem}[Generalization Bound, Agnostic Setting]
\label{thm: sample complexity agnostic}
Under the conditions of Theorem \ref{thm: sample complexity, realizable} apart from realizability, with probability at least $1 - \delta$, for any $f$ that minimize the empirical projection objective, for any $\tau > 0$, 
\begin{equation}
    \operatorname{err}(f) \leq \underbrace{\operatorname{OPT}}_{(a)}+ \underbrace{\mathbb{E}_X[ |u_{\mathcal{D} \to X}^{(m)} - l_{\mathcal{D} \to X}^{(m)}|]}_{(b)} + \underbrace{2\tau + \left( \operatorname{err}_{\text{proj}}(f) +  \frac{D}{\sqrt{n}} + M\sqrt{  \frac{\ln(1/\delta)}{n}} + \operatorname{OPT}\right) \Gamma(\tau)}_{(c)},
\end{equation}
where $D,M$ are constants and $\Gamma(\tau) = \mathbb{E}_{\widetilde{X}}\left[1/{\min(\Pr_X(lg_{X \to \widetilde{X}}^{(m)} \leq \tau), \Pr_X(ug_{X \to \widetilde{X}}^{(m)} \leq \tau))}\right]$ is a decreasing function of $\tau$, $\operatorname{err}_{\text{proj}}(f)$ is an empirical projection error of $f$, and $\operatorname{OPT}$ is the expected error of the optimal hypothesis in $\mathcal{F}$.
\end{theorem}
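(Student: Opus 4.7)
The plan is to reduce to the realizable-case argument behind Theorem \ref{thm: sample complexity, realizable} by treating the optimal hypothesis $\fopt$ as a ``surrogate realizer''. The key observation is that $Y \in [L,U]$ always, so the projection loss is pointwise no larger than the true loss, giving $\mathbb{E}[\pi_\ell(\fopt(X), L, U)] \leq \mathbb{E}[\ell(\fopt(X), Y)] = \operatorname{OPT}$. This replaces the realizability assumption $f^* \in \wcF_0$ by the weaker inclusion $\fopt \in \wcF_{\operatorname{OPT}}$, and the rest of the proof proceeds in parallel to Theorem \ref{thm: sample complexity, realizable} with $\operatorname{OPT}$ absorbing the model-misspecification slack.

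First, I would apply a uniform-convergence bound over the class $\Pi(\cF)$, which by the $O(1/\sqrt{n})$ Rademacher assumption yields, with probability at least $1 - \delta$,
\[
\mathbb{E}[\pi_\ell(f(X), L, U)] \leq \operatorname{err}_{\text{proj}}(f) + \frac{D}{\sqrt{n}} + M\sqrt{\frac{\ln(1/\delta)}{n}} =: \eta_f.
\]
Together with $\mathbb{E}[\pi_\ell(\fopt(X), L, U)] \leq \operatorname{OPT} =: \eta_{\text{opt}}$, both $f$ and $\fopt$ lie in $\wcF_\eta$ for $\eta := \eta_f + \eta_{\text{opt}}$. By Theorem \ref{thm: main bound v2} applied to this common $\eta$, for every $x$ with $p(x) > 0$,
\[
f(x),\ \fopt(x) \in [l_{\mathcal{D} \to x}^{(m)} - r_\eta(x),\ u_{\mathcal{D} \to x}^{(m)} + s_\eta(x)].
\]

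Next, I apply the $\ell_1$ triangle inequality and the joint containment above to get
\[
\operatorname{err}(f) \leq \mathbb{E}[|f(X) - \fopt(X)|] + \operatorname{OPT} \leq \mathbb{E}[|u_{\mathcal{D} \to X}^{(m)} - l_{\mathcal{D} \to X}^{(m)}|] + \mathbb{E}[r_\eta(X) + s_\eta(X)] + \operatorname{OPT}.
\]
To control the middle term I invoke Proposition \ref{prop: bound r,s v2} with $p = 1$: $r_\eta(x) \leq \tau + \eta/\Pr_X(lg_{X \to x}^{(m)} \leq \tau)$ and symmetrically for $s_\eta$. Bounding each reciprocal by $1/\min(\Pr_X(lg_{X \to x}^{(m)} \leq \tau), \Pr_X(ug_{X \to x}^{(m)} \leq \tau))$ and taking expectation over $x$ gives $\mathbb{E}[r_\eta(X) + s_\eta(X)] \leq 2\tau + \eta \cdot \Gamma(\tau)$. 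Substituting $\eta = \operatorname{err}_{\text{proj}}(f) + D/\sqrt{n} + M\sqrt{\ln(1/\delta)/n} + \operatorname{OPT}$ reproduces the stated bound.

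The main obstacle is making the projection-loss budget $\eta$ large enough to contain both $f$ and $\fopt$ while still collapsing the bound on $r_\eta + s_\eta$ into a single $\eta \cdot \Gamma(\tau)$ factor: this forces me to place both hypotheses into the \emph{same} $\wcF_\eta$ before invoking Theorem \ref{thm: main bound v2} and to use the min-based definition of $\Gamma(\tau)$ to merge the lower-gap and upper-gap contributions. A related subtlety is that the triangle-inequality step genuinely uses $\ell = \ell_1$; the $\ell_p$ extension mentioned above the theorem would need either the inequality $|a-b|^p \leq 2^{p-1}(|a-c|^p + |c-b|^p)$ (introducing a $2^{p-1}$ constant) or a direct width-based bound on $|f(X) - \fopt(X)|^p$ via the expanded reduced interval.
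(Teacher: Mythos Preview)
Your overall strategy mirrors the paper's: use $\pi_\ell \le \ell$ to place $\fopt$ in some $\wcF$-class, apply Theorem~\ref{thm: main bound v2} to localize both $f$ and $\fopt$ in reduced intervals, then combine via the $\ell_1$ triangle inequality and Proposition~\ref{prop: bound r,s v2}. However, there is a computational slip and a related structural difference from the paper that together cost you exactly the stated constant.

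The slip: after bounding $r_\eta(x) \le \tau + \eta/\Pr(lg \le \tau)$ and $s_\eta(x) \le \tau + \eta/\Pr(ug \le \tau)$ and replacing each denominator by the minimum, you get $r_\eta(x) + s_\eta(x) \le 2\tau + 2\eta/\min(\cdot)$, not $2\tau + \eta/\min(\cdot)$. With your merged budget $\eta = \eta_f + \operatorname{OPT}$ this yields $2\tau + 2(\operatorname{err}_{\text{proj}}(f) + D/\sqrt{n} + M\sqrt{\ln(1/\delta)/n} + \operatorname{OPT})\,\Gamma(\tau)$, which is a valid bound but is a factor of $2$ looser than the theorem as stated.

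The paper recovers the sharper constant by \emph{not} merging $f$ and $\fopt$ into a common $\wcF_\eta$. It keeps $f \in \wcF_{\eta_f}$ and $\fopt \in \wcF_{\operatorname{OPT}}$ separately, invokes Theorem~\ref{thm: agnostic bound} to get $\operatorname{err}(f) \le \operatorname{OPT} + \mathbb{E}[d(\ell, I_{\eta_f}(X), I_{\operatorname{OPT}}(X))]$, and then observes that the interval distance $d$ produces the \emph{cross} terms $\max(r_{\eta_f} + s_{\operatorname{OPT}},\, r_{\operatorname{OPT}} + s_{\eta_f})$ rather than $r_\eta + s_\eta$ for a single inflated $\eta$. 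Bounding $r_{\eta_f} + s_{\operatorname{OPT}} \le 2\tau + \eta_f/\Pr(lg \le \tau) + \operatorname{OPT}/\Pr(ug \le \tau) \le 2\tau + (\eta_f + \operatorname{OPT})/\min(\cdot)$ (and symmetrically for the other term) gives $(\eta_f + \operatorname{OPT})\Gamma(\tau)$ with no extra factor of $2$. So your merged-budget shortcut is where the constant is lost; otherwise the argument is the same.
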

\noindent{\textbf{Interpretation:} } Our error bound for the agnostic setting is divided into three parts. 
\begin{enumerate}[label= (\alph*), left = 0pt]
\item The first term represent an error term of the optimal hypothesis in $\cF$, given by $\operatorname{OPT}$.
    \item The second term represent an error term which depends on the smoothness property of our function class $\cF$ and the quality of the given intervals similar to the realizability setting. 
    \item The third and the fourth term  capture how well we can learn a hypothesis that belongs to the intervals. The key difference between this agnostic setting and the realizability setting is that this term would not decay to zero anymore as $n \to \infty$. In particular, for a fixed $\tau$, we can see that as $n \to \infty$, we would have $\operatorname{err}_{\text{proj}}(f) \leq \operatorname{OPT}$ since we are minimizing the empirical projection loss and as a result, this third part would converge to \begin{equation}
        2\tau +  2\operatorname{OPT}\cdot\Gamma(\tau).
    \end{equation}
    Since this hold for any $\tau$, the optimal $\tau$ would be the one such that $\tau =  \operatorname{OPT}\cdot\Gamma(\tau)$ and this value depends on the distribution $\cD_I$.
\end{enumerate}
Overall, when $n \to \infty$, the upper bound would converge to
\begin{equation}
    \operatorname{OPT}+ \mathbb{E}_X[ |u_{\mathcal{D} \to X}^{(m)} - l_{\mathcal{D} \to X}^{(m)}|] + 2\tau +  2\operatorname{OPT}\cdot\Gamma(\tau).
\end{equation}
This can be small as long as the $\operatorname{OPT}$ is small, the expected lower/upper bound gaps are small and when the noise in the given intervals are small.  Overall, our theoretical insight suggests that we can improve our error bound by  (i) having a smoother hypothesis class (smaller $m$) which would reduce the interval size $|u_{\mathcal{D} \to X}^{(m)} - l_{\mathcal{D} \to X}^{(m)}|$ in the term (b) (ii) increasing the number of data points $n$ which leads to a smaller bound in the term (c). However, if $m$ is too small (our hypothesis is too smooth), $\cF$ may not contain a good hypothesis, causing $\operatorname{OPT}$ to be large. Our theoretical results suggest that selecting an appropriate level of smoothness to balance the two terms can lead to improved performance in practice. In practice, we can find the right level of smoothness by treating $m$ as a hyperparameter and tuning it on a validation set.

\section{Learning from intervals using a minmax objective}
 In this section, we explore a different learning strategy: we aim to learn a function $f \in \cF$ that minimizes the maximum loss with respect to the worst-case $\tilde{y}$ within the interval. We demonstrate that this approach yields a \reb{point-wise} solution that can be evaluated efficiently. First, we define the worst-case loss as
\begin{equation}
    \rho_\ell(f(x), l , u)  := \max_{\tilde{y} \in [l,u]} \ell(f(x), \tilde{y}).
\end{equation}
\begin{proposition}
\label{prop: close form minmax}
Let $\ell$ be a loss function that satisfies Assumption \ref{assum: loss}, then 
    \begin{equation}
       \rho_\ell(f(x), l , u)  =   1[f(x) \leq  \frac{l+u}{2}]\ell(f(x), u) + 1[f(x) > \frac{l+u}{2}] \ell(f(x), l).
    \end{equation}
\end{proposition}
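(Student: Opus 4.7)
The plan is to reduce the maximization of $\ell(f(x),\tilde{y}) = \psi(|f(x)-\tilde{y}|)$ over $\tilde{y}\in[l,u]$ to a one-dimensional maximization of the distance $|f(x)-\tilde{y}|$, exploiting the fact that $\psi$ is non-decreasing. First, I would invoke Assumption \ref{assum: loss} to write
\[
\rho_\ell(f(x),l,u) = \max_{\tilde{y}\in[l,u]} \psi(|f(x)-\tilde{y}|).
\]
Because $\psi$ is non-decreasing and $[l,u]$ is compact, the maximum over $\tilde{y}\in[l,u]$ of $\psi(|f(x)-\tilde{y}|)$ is attained at any $\tilde{y}^\star$ that maximizes $|f(x)-\tilde{y}|$ over $[l,u]$.

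Next, I would observe that $\tilde{y}\mapsto|f(x)-\tilde{y}|$ is a convex piecewise-linear function on $[l,u]$, so its maximum over the interval is attained at one of the endpoints $\{l,u\}$. A direct comparison gives $|f(x)-u|\geq|f(x)-l|$ iff $f(x)\leq \frac{l+u}{2}$, and $|f(x)-l|>|f(x)-u|$ iff $f(x)>\frac{l+u}{2}$. Therefore the maximizer is $\tilde{y}^\star=u$ in the first case and $\tilde{y}^\star=l$ in the second case (at $f(x)=\frac{l+u}{2}$ both endpoints yield the same value, so the tie-breaking in the indicator is irrelevant).

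Substituting $\tilde{y}^\star$ back into $\ell(f(x),\tilde{y}^\star) = \psi(|f(x)-\tilde{y}^\star|)$ and rewriting $\psi(|f(x)-u|)=\ell(f(x),u)$ and $\psi(|f(x)-l|)=\ell(f(x),l)$ via Assumption \ref{assum: loss} yields the stated closed form
\[
\rho_\ell(f(x),l,u) = \mathbf{1}[f(x)\leq \tfrac{l+u}{2}]\,\ell(f(x),u) + \mathbf{1}[f(x) > \tfrac{l+u}{2}]\,\ell(f(x),l).
\]

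There is no real obstacle here: the only ingredients are the monotonicity of $\psi$, which lets one pull the max inside, and the elementary geometric fact that the farthest point of an interval from a given real number is an endpoint determined by comparison with the midpoint. The argument is essentially symmetric to (and a bit simpler than) the proof of Proposition \ref{prop: proj loss}, where one instead uses the projection onto $[l,u]$.
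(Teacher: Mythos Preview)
Your proof is correct and follows essentially the same approach as the paper: use the monotonicity of $\psi$ to pull the max inside, reduce to maximizing $|f(x)-\tilde{y}|$ over $[l,u]$, and then identify the maximizing endpoint by comparison with the midpoint $\frac{l+u}{2}$. The paper's proof is slightly terser (it writes $\psi(\max_{\tilde{y}}|f(x)-\tilde{y}|)$ directly and does a two-case split), but the logic is identical.
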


Since $y \in [l,u]$, this objective serves as an upper bound on the true loss: $\rho_\ell(f(x), l , u) \geq \ell(f(x),y)$. Consequently, if we have a hypothesis with a small expected value $\mathbb{E}[\rho_\ell(f(x), l , u)]$, then the error $\err(f)$ will also be small. Based on Proposition \ref{prop: close form minmax}, we define the \textbf{Minmax} objective as 
\begin{equation}
\label{eq: rho objective}
    \min_f \sum_{i=1}^n 1[f(x_i) \leq  \frac{l_i+u_i}{2}]\ell(f(x_i), u_i) + 1[f(x_i) > \frac{l_i+u_i}{2}] \ell(f(x_i), l_i).
\end{equation}
In particular, when $\ell(y,y') = |y-y'|$, we can show that minimizing $\rho$ is equivalent to performing supervised learning using the mid-point of each interval.
\begin{corollary}
\label{cor: mid point}
    Let $\ell(y,y') = |y-y'|$ then $\rho_\ell(f(x), l , u)  = |f(x) - \frac{l+u}{2}| + \frac{u - l}{2}$ and the solution of \eqref{eq: rho objective} is equivalent to
    \begin{equation}
        f' = \arg\min_{f \in \cF} \sum_{i=1}^n |f(x_i) - \frac{l_i + u_i}{2}|.
    \end{equation}
\end{corollary}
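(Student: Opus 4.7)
The plan is to derive the claimed closed-form for $\rho_\ell$ directly from Proposition~\ref{prop: close form minmax}, and then observe that the resulting objective differs from the mid-point MAE objective by an additive constant (in $f$), which immediately yields the equivalence of minimizers.

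First, I would instantiate Proposition~\ref{prop: close form minmax} with the loss $\ell(y,y') = |y-y'|$, which gives
\[
\rho_\ell(f(x), l, u) = \mathbf{1}\!\left[f(x) \leq \tfrac{l+u}{2}\right] |f(x) - u| + \mathbf{1}\!\left[f(x) > \tfrac{l+u}{2}\right] |f(x) - l|.
\]
Then I would case-split on the sign of $f(x) - \tfrac{l+u}{2}$. On the event $\{f(x) \leq \tfrac{l+u}{2}\}$, since $l \leq u$ we have $f(x) \leq \tfrac{l+u}{2} \leq u$, so $|f(x) - u| = u - f(x)$; rewriting $u - f(x) = \left(\tfrac{l+u}{2} - f(x)\right) + \tfrac{u-l}{2}$ and using that the first summand is nonnegative in this case gives $\rho_\ell = \left|f(x) - \tfrac{l+u}{2}\right| + \tfrac{u-l}{2}$. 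Symmetrically, on the event $\{f(x) > \tfrac{l+u}{2}\}$, we have $f(x) > \tfrac{l+u}{2} \geq l$, so $|f(x) - l| = f(x) - l = \left(f(x) - \tfrac{l+u}{2}\right) + \tfrac{u-l}{2} = \left|f(x) - \tfrac{l+u}{2}\right| + \tfrac{u-l}{2}$. Combining the two cases yields the claimed identity.

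For the equivalence of minimizers, I would simply sum over the training data:
\[
\sum_{i=1}^n \rho_\ell(f(x_i), l_i, u_i) = \sum_{i=1}^n \left|f(x_i) - \tfrac{l_i + u_i}{2}\right| + \sum_{i=1}^n \tfrac{u_i - l_i}{2}.
\]
The second term is a constant independent of $f$, so it can be dropped from the minimization, giving that any $f'$ solving \eqref{eq: rho objective} is equivalently a minimizer of $\sum_{i=1}^n \left|f(x_i) - \tfrac{l_i + u_i}{2}\right|$ over $\cF$.

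There is no substantive obstacle here: the entire proof is a two-line case analysis plus an additive-constant observation. The only point requiring mild care is verifying that $f(x) \leq \tfrac{l+u}{2}$ (respectively $f(x) > \tfrac{l+u}{2}$) automatically entails $f(x) \leq u$ (respectively $f(x) \geq l$), so that the absolute values $|f(x) - u|$ and $|f(x) - l|$ in Proposition~\ref{prop: close form minmax} resolve with the expected signs; this uses only $l \leq u$.
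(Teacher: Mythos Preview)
Your proof is correct and follows essentially the same route as the paper's: both instantiate Proposition~\ref{prop: close form minmax} with the $\ell_1$ loss, perform the two-case split at the midpoint $\tfrac{l+u}{2}$, rewrite each branch as $\left|f(x)-\tfrac{l+u}{2}\right|+\tfrac{u-l}{2}$, and then drop the $f$-independent constant from the empirical objective. Your explicit check that $f(x)\le \tfrac{l+u}{2}\le u$ (resp.\ $f(x)>\tfrac{l+u}{2}\ge l$) to resolve the absolute values is a welcome bit of rigor that the paper's proof leaves implicit.
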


 This corollary establishes a connection between the heuristic of using the midpoint as a target and our approach of minimizing the maximum loss $\rho$. However, we note that $\rho$ does not take the smoothness of the hypothesis class $\cF$ into account and may lead to the worst-case labels that are overly conservative and not reflective of the target labels. Therefore, it would be beneficial to incorporate knowledge about certain properties of the true labels. In particular, in the realizable setting, $f^* \in \wcF_0$, so we may consider the worst-case labels that can be generated by some $f \in \wcF_0$,
\begin{equation}
    \min_{f\in\cF}\max_{f' \in \wcF_0} \mathbb{E}[\ell(f(X), f'(X)].
\end{equation}
In the realizable setting, this method also provides an upper bound for $\err(f)$, but it is \reb{stronger} than $\rho$ because we are comparing against the worst-case $f' \in \wcF_0$ rather than any possible $\tilde{y} \in [l,u]$.
\begin{proposition}
\label{prop: minmax (hyp) < minmax (label)}
    In the realizable setting where $f^* \in \wcF_0$, for a bounded loss $\ell$, for any $f \in \cF$,
    \begin{equation}
        \err(f) \leq \max_{f' \in \wcF_0} \mathbb{E}[\ell(f(X), f'(X))] \leq  \mathbb{E}[\rho_\ell(f(X), L, U)].
    \end{equation}
\end{proposition}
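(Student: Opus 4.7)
The plan is to prove the two inequalities separately. Both rely on the simple observation (deferred from Section 3) that if $f' \in \wcF_0$, i.e., $\mathbb{E}[\pi_\ell(f'(X), L, U)] = 0$, then since $\pi_\ell \geq 0$ we must have $\pi_\ell(f'(X), L, U) = 0$ almost surely under $\cD_I$, hence $f'(X) \in [L, U]$ almost surely.

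For the first inequality $\err(f) \leq \max_{f' \in \wcF_0} \mathbb{E}[\ell(f(X), f'(X))]$, I would use realizability: by assumption $f^\ast \in \wcF_0$, so $f^\ast$ is feasible in the maximization. Since $Y = f^\ast(X)$ deterministically under $\cD$, we have
\begin{equation}
\err(f) = \mathbb{E}_{(X,Y)\sim \cD}[\ell(f(X), Y)] = \mathbb{E}_X[\ell(f(X), f^\ast(X))] \leq \max_{f' \in \wcF_0} \mathbb{E}_X[\ell(f(X), f'(X))].
\end{equation}

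For the second inequality $\max_{f' \in \wcF_0} \mathbb{E}[\ell(f(X), f'(X))] \leq \mathbb{E}[\rho_\ell(f(X), L, U)]$, I would fix an arbitrary $f' \in \wcF_0$ and argue pointwise. Since the integrand $\ell(f(X), f'(X))$ depends only on $X$, we can freely view the expectation as being over the full triple $(X, L, U) \sim \cD_I$ (using that the marginal on $X$ agrees with the marginal of $\cD$). By the observation above, $f'(X) \in [L, U]$ almost surely, so
\begin{equation}
\ell(f(X), f'(X)) \leq \max_{\tilde{y} \in [L,U]} \ell(f(X), \tilde{y}) = \rho_\ell(f(X), L, U) \quad \text{a.s.}
\end{equation}
Taking expectations over $\cD_I$ and then supremizing over $f' \in \wcF_0$ gives the desired bound.

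I do not anticipate a serious obstacle: the only nontrivial step is justifying that $\pi_\ell(f'(X), L, U) = 0$ a.s.\ implies $f'(X) \in [L, U]$ a.s., which follows directly from Proposition~\ref{prop: proj loss} (since $\ell$ satisfies Assumption~\ref{assum: loss}, $\pi_\ell = 0$ iff the argument lies in the interval). The one place to be careful is that the upper inequality must use the \emph{probabilistic} interval version of $f' \in \wcF_0$ characterization, not only Proposition~\ref{prop: f0 with smooth v2}; but this is immediate from the definition of $\wcF_0$ via an expected projection loss equal to zero, and no smoothness is needed for this direction.
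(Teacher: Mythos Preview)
Your proposal is correct and follows essentially the same route as the paper: the first inequality by realizability ($f^\ast$ is feasible in the max), the second by showing that any $f'\in\wcF_0$ satisfies $f'(X)\in[L,U]$ almost surely and then bounding $\ell(f(X),f'(X))$ pointwise by $\rho_\ell$. The only minor difference is that the paper invokes its nonatomic lemma (Lemma~\ref{lemma: nonatomic}) together with continuity of $\pi_\ell$ to obtain $\pi_\ell(f'(x),l,u)=0$ for every $(x,l,u)$ with positive density, whereas you use the simpler observation that a nonnegative random variable with zero mean vanishes almost surely; for the purpose of taking expectations your version is sufficient and arguably cleaner.
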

 We can conclude that when a hypothesis has a small minmax objective, its expected loss would be small as well. Moreover, we demonstrate that restricting the worst-case labels to those that could be generated by some $f \in \wcF_0$ can lead to better performance than using all possible worst-case labels. This is due to worst-case labels being highly sensitive to the interval size.
\begin{proposition}
\label{prop: minmax in F is better}
    For any constant $c>0$ and $\ell(y,y') = |y - y'|$, there exists a distribution $\cD_I$ and a hypothesis class $\cF$ and  $f^* \in \cF$ such that for 
    $f_1 = \arg\min_{f\in\cF}\max_{f' \in \wcF_0} \mathbb{E}[\ell(f(X), f'(X)]$ and $f_2 = \arg\min_{f\in\cF}\mathbb{E}[\rho_\ell(f(X), L , U)]$, $\err(f_1) = 0$ while $\err(f_2) > c$.
\end{proposition}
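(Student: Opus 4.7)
The plan is to exhibit a single simple example in which the two objectives diverge arbitrarily. The intuition is that by Corollary \ref{cor: mid point}, minimizing $\rho_\ell$ is a pointwise $L_1$ fit to the midpoints $(l+u)/2$, so asymmetric intervals can drag $f_2$ arbitrarily far from $f^*$. By contrast, $f_1$ only competes against $f' \in \wcF_0$, which must lie inside every interval simultaneously, so even a small tightly-constrained region can pin $\wcF_0$ down to $\{f^*\}$.

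Concretely, I would set $\cX = [0,1]$ with $X \sim \mathrm{Uniform}[0,1]$, take $\cF = \{f_a(x) = a : a \in \sR\}$ to be the class of constant functions on $\cX$, and put $f^* \equiv 0 \in \cF$. I would place deterministic intervals $[l_x, u_x] = [0,0]$ for $x \in [0, 0.1]$ and $[l_x, u_x] = [0, 2c+1]$ for $x \in (0.1, 1]$. Since $X$ is continuous, the induced $\cD_I$ on $(X,L,U)$ carries no atoms, satisfying the nonatomic assumption.

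For $f_1$, a constant $f_a$ has zero expected projection loss iff $a \in [l_x, u_x]$ for almost every $x$; the degenerate interval on $[0, 0.1]$ forces $a = 0$, so $\wcF_0 = \{f_0\}$. Hence $f_1 = \argmin_a \max_{f_b \in \{f_0\}} \mathbb{E}[|a - b|] = 0$, giving $\err(f_1) = 0$. For $f_2$, Corollary \ref{cor: mid point} reduces the problem to $\argmin_a \mathbb{E}[|a - m(X)|]$, where the midpoint $m(X)$ equals $0$ with probability $0.1$ and $c + 0.5$ with probability $0.9$. Since the mass at $c + 0.5$ exceeds $1/2$, the weighted median (and hence $f_2$) equals $c + 0.5$, so $\err(f_2) = c + 0.5 > c$.

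The main design choice in obtaining $\err(f_1) = 0$ exactly is pinning $\wcF_0 = \{f^*\}$ via the degenerate tight interval $[0,0]$; a strictly positive-width version only yields $\err(f_1)$ of order the width, so this degenerate step is where the construction must be tight. Everything else is a routine weighted-median calculation combined with a direct use of Corollary \ref{cor: mid point}.
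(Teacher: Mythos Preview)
Your construction is correct and follows the same high-level template as the paper: take $\cF$ to be constant functions, use Corollary~\ref{cor: mid point} to reduce $f_2$ to a midpoint fit, and arrange the intervals so that one forces $\wcF_0$ to concentrate at $f^*$ while the other has a midpoint far from $f^*$. The differences are in the mechanics. The paper works on the two-point domain $\cX=\{0,1\}$ with intervals $[-a,\epsilon]$ and $[-\epsilon,2\epsilon]$; their intersection gives $\wcF_0=\{f_c:c\in[-\epsilon,\epsilon]\}$, and $f_1=0$ follows from the \emph{symmetry} of this set about $f^*=0$ (the minimax of $|a-c|$ over a symmetric interval is its center). Your version instead uses a continuous domain and a degenerate interval $[0,0]$ on a positive-measure set to collapse $\wcF_0$ to the singleton $\{f_0\}$, which makes the $f_1$ step trivial and also sidesteps the non-uniqueness of the $L_1$ median that creeps into the paper's two-point $f_2$ computation. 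One small correction to your commentary: your claim that ``a strictly positive-width version only yields $\err(f_1)$ of order the width'' is not quite right---the paper's construction shows that with positive-width intervals arranged symmetrically about $f^*$, one still gets $\err(f_1)=0$ exactly. So the degenerate interval is a convenient shortcut, not a necessity.
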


The proof is in Appendix \ref{sec: proof of prop minmax in F is better}. An empirical Minmax objective using labels from $\wcF_0$ is given by
\begin{equation}
\label{eq: minmax from f0}
    \update{\min_{f \in \cF}} \max_{f' \in \wcF_0} \sum_{i=1}^n \ell(f(x_i), f'(x_i)).
\end{equation}
However, there is no closed-form solution for the inner maximization of objective in \ref{eq: minmax from f0}, making it less efficient to optimize  than \eqref{eq: rho objective}. To address this, we propose alternative approaches \update{by approximately} learning $f' \in \wcF_0$ to solve this objective. 

\subsection{Alternative approaches to solving a minmax objective with constraints}

Recall that an empirical Minmax objective using labels from $\wcF_0$ is given by equation \ref{eq: minmax from f0}. However, there is no closed-form solution for the inner maximization of objective in \ref{eq: minmax from f0}, making it less efficient to optimize  than \eqref{eq: rho objective}. To address this, we propose alternative approaches \update{by approximately} learning $f' \in \wcF_0$ to solve this objective.\\

\textbf{1) Regularization.} \update{We keep track of two hypothesis $f,f' \in \cF$ and introduce a regularization term based on the projection loss to ensure that $f'$ is close $\wcF_0$}. We call this method \textbf{Minmax (reg)},
    \begin{equation}
    \label{eq: minmax reg}
        \update{\min_{{f \in \cF}}\max_{f' \in \cF}}  \sum_{i=1}^n \ell (f(x_i), f'(x_i)) - \lambda \sum_{i=1}^n \pi(f'(x_i), l_i, u_i).
    \end{equation}  
    Here the regularization term is always non-positive and depends only on $f'$. We can use a gradient descent ascent \citep{korpelevich1976extragradient,chen1997convergence,lin2020gradient} algorithm that updates $f$ and $f'$ with one gradient step at a time to solve this objective.\\
\textbf{2) Pseudo labels.} We could replace a hypothesis class $\wcF_0$ with a finite set of hypotheses $\{f_1, f_2, \dots, f_k \}$ where \update{$f_j\in \wcF_\eta$ for some small $\eta$. We can get $f_j$ by minimizing the empirical projection loss}. We then relax our objective by learning $f$ that minimizes the maximum loss with respect to $f_j$. We call this method \textbf{PL (Max)},
    \begin{equation}
    \label{eq: PL max}
        \update{\min_{f\in \cF}} \max_{j \in \{1,\dots, k\}} \sum_{i=1}^n \ell(f(x_i), f_j(x_i)).
    \end{equation}
 Since $f_j$ are fixed, learning $f$ becomes a minimization problem, which is more stable to solve compared to the original minmax problem. Alternatively, to further stabilize the learning objective, we can replace the $\operatorname{max}$ over $f_j$ with $\operatorname{mean}$. We refer to this variant as \textbf{PL (Mean)},
    \begin{equation}
    \label{eq: PL mean}
        \update{\min_{f \in \cF}}  \sum_{j=1}^k \sum_{i=1}^n \ell(f(x_i), f_j(x_i)).
    \end{equation}

\section{Experiments}
\begin{figure}[t]
    \centering
    \includegraphics[width= 1\linewidth]{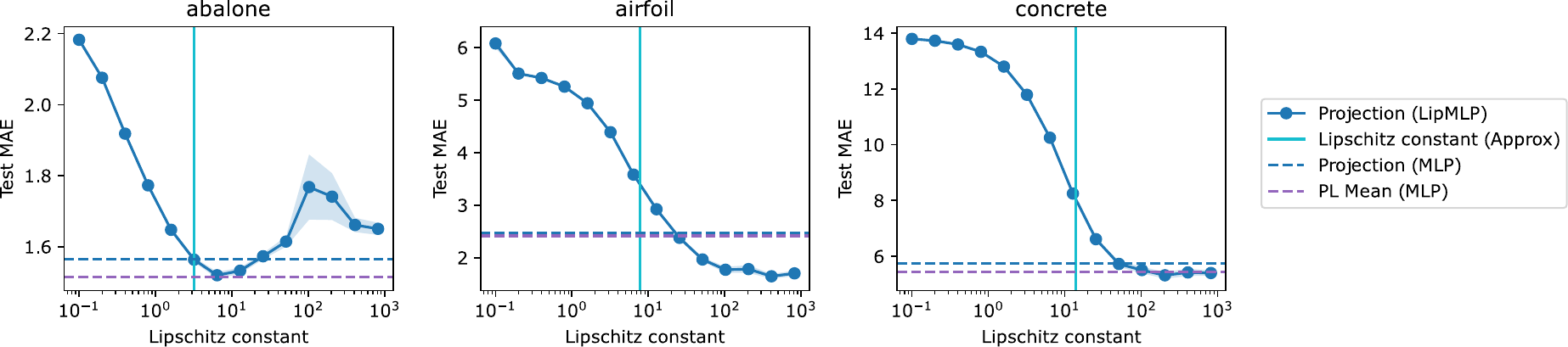}
    \caption{Test MAE of the projection method with Lipschitz MLP using different values of the Lipschitz constant. The vertical line is the Lipschitz constant approximated from the training set. The dashed horizontal lines are the test MAE of PL (Mean) and Projection approach with a standard MLP. Optimal smoothness level leads to a performance gain.}
    \label{fig: result lip new}
\end{figure}
We empirically validate our theoretical results with comprehensive experiments on five public datasets from the UCI Machine Learning Repository and 18 additional tabular regression datasets \citep{grinsztajn2022tree}, where we vary our proposed interval-generating algorithms to simulate different scenarios and convert regression targets into interval targets (see Appendix \ref{sec: experiment} for full details). To control the smoothness of our hypothesis as required by our theoretical results, we utilize Lipschitz MLPs—MLPs augmented with spectral normalization layers \citep{miyato2018spectral} that ensure the Lipschitz constant is less than 1, then scaled by a factor of $m$ to control the hypothesis smoothness. We compare standard MLPs against these Lipschitz MLPs, where both model types use projection losses, and we also compare with the minmax loss and our proposed minmax loss variants PL(Mean) and PL(Max). We summarize our findings as follows. In terms of learning methods, the projection objective and our proposed PL methods generally perform best in the uniform interval setting (where interval sizes and locations are uniformly sampled), while naive minmax excels when the target value is known to be near the interval center (consistent with Corollary \ref{cor: mid point}). More importantly, we demonstrate that Lipschitz-constrained hypothesis classes indeed achieve smaller reduced intervals, as predicted by Theorem \ref{thm: main bound v2}, with average interval size decreasing as the Lipschitz constant decreases. Our key theoretical insight about the relationship between smoothness and error bounds is supported by experiments showing that the optimal Lipschitz constant balances constraining the hypothesis class while maintaining enough capacity for low error. Finally, on 18 additional tabular regression benchmarks, Lipschitz MLPs significantly outperform standard MLPs on 14 datasets (Table \ref{tab:lipmlp_vs_mlp}), establishing smoothness as a simple yet effective method for enhancing learning with interval targets. Additional results and ablation studies are provided in Appendix \ref{appendix: additional experiments on the benchmark}. Our code is available at \url{https://github.com/bloomberg/interval_targets}.

\begin{table*}[h]
\centering
\small
\setlength{\tabcolsep}{6pt}
\renewcommand{\arraystretch}{0.95}
\caption{Comparison of the test MAE of LipMLP and MLP results on datasets from the tabular regression benchmark (with interval targets).}
\vspace{2mm}
\resizebox{1\linewidth}{!}{
\begin{tabular}{@{}lrr lrr@{}}
\toprule
Dataset & LipMLP & MLP & Dataset & LipMLP & MLP \\
\midrule
Ailerons            & $\bm{3.278 \pm 0.034}$ & $4.323 \pm 0.098$
& Airlines Delay      & $\bm{38.974 \pm 0.005}$ & $39.077 \pm 0.008$ \\
Allstate Claims     & $86.547 \pm 0.001$ & $\bm{86.542 \pm 0.002}$
& Analcatdata Supreme & $\bm{17.685 \pm 0.041}$ & $17.856 \pm 0.072$ \\
CPU Activity        & $\bm{10.271 \pm 0.026}$ & $10.560 \pm 0.087$
& Elevators           & $59.663 \pm 0.167$ & $59.926 \pm 0.251$ \\
GPU                 & $29.817 \pm 0.100$ & $\bm{25.123 \pm 0.888}$
& House 16H           & $\bm{5.728 \pm 0.031}$ & $5.837 \pm 0.025$ \\
House Sales         & $76.607 \pm 0.116$ & $76.716 \pm 0.073$
& Houses              & $\bm{30.689 \pm 0.152}$ & $31.515 \pm 0.332$ \\
Mercedes            & $\bm{8.791 \pm 0.187}$ & $11.207 \pm 0.218$
& Miami House         & $\bm{1.013 \pm 0.028}$ & $1.671 \pm 0.055$ \\
Sulfur              & $\bm{10.681 \pm 0.082}$ & $14.421 \pm 0.279$
& Superconduct        & $\bm{0.540 \pm 0.021}$ & $1.459 \pm 0.099$ \\
Topo 21             & $\bm{1.305 \pm 0.013}$ & $2.192 \pm 0.177$
& Visualizing Soil    & $\bm{15.803 \pm 0.311}$ & $17.898 \pm 0.640$ \\
Wine Quality        & $\bm{28.537 \pm 0.126}$ & $29.537 \pm 0.148$
& YProp 4             & $\bm{2.360 \pm 0.050}$ & $3.828 \pm 0.435$ \\
\bottomrule
\end{tabular}
}
\label{tab:lipmlp_vs_mlp}
\end{table*}

\section{Conclusion}
We theoretically investigated the problem of learning from interval targets, analyzing hypotheses that lie within these intervals and those minimizing the worst-case label loss. We derived a novel theoretical bound, providing a crucial insight: understanding how smoothness can lead to benefits such as smaller predictive intervals and a regularized worst-case label. This connection makes our theoretical findings directly applicable in practice. Future directions include more challenging settings such as 'noisy' settings where targets might have small projection loss even outside the interval, and extend these methods to non-i.i.d. settings e.g. time-series.

\begin{ack}
Rattana Pukdee is supported by the Bloomberg Data Science Ph.D. Fellowship.
\end{ack}


\bibliography{reference}
\bibliographystyle{plainnat}

\newpage
\section*{NeurIPS Paper Checklist}

\begin{enumerate}

\item {\bf Claims}
    \item[] Question: Do the main claims made in the abstract and introduction accurately reflect the paper's contributions and scope?
    \item[] Answer: \answerYes{} 
    \item[] Justification: Section 4, Section 5
    \item[] Guidelines:
    \begin{itemize}
        \item The answer NA means that the abstract and introduction do not include the claims made in the paper.
        \item The abstract and/or introduction should clearly state the claims made, including the contributions made in the paper and important assumptions and limitations. A No or NA answer to this question will not be perceived well by the reviewers. 
        \item The claims made should match theoretical and experimental results, and reflect how much the results can be expected to generalize to other settings. 
        \item It is fine to include aspirational goals as motivation as long as it is clear that these goals are not attained by the paper. 
    \end{itemize}

\item {\bf Limitations}
    \item[] Question: Does the paper discuss the limitations of the work performed by the authors?
    \item[] Answer: \answerYes{} 
    \item[] Justification: Appendix B
    \item[] Guidelines:
    \begin{itemize}
        \item The answer NA means that the paper has no limitation while the answer No means that the paper has limitations, but those are not discussed in the paper. 
        \item The authors are encouraged to create a separate "Limitations" section in their paper.
        \item The paper should point out any strong assumptions and how robust the results are to violations of these assumptions (e.g., independence assumptions, noiseless settings, model well-specification, asymptotic approximations only holding locally). The authors should reflect on how these assumptions might be violated in practice and what the implications would be.
        \item The authors should reflect on the scope of the claims made, e.g., if the approach was only tested on a few datasets or with a few runs. In general, empirical results often depend on implicit assumptions, which should be articulated.
        \item The authors should reflect on the factors that influence the performance of the approach. For example, a facial recognition algorithm may perform poorly when image resolution is low or images are taken in low lighting. Or a speech-to-text system might not be used reliably to provide closed captions for online lectures because it fails to handle technical jargon.
        \item The authors should discuss the computational efficiency of the proposed algorithms and how they scale with dataset size.
        \item If applicable, the authors should discuss possible limitations of their approach to address problems of privacy and fairness.
        \item While the authors might fear that complete honesty about limitations might be used by reviewers as grounds for rejection, a worse outcome might be that reviewers discover limitations that aren't acknowledged in the paper. The authors should use their best judgment and recognize that individual actions in favor of transparency play an important role in developing norms that preserve the integrity of the community. Reviewers will be specifically instructed to not penalize honesty concerning limitations.
    \end{itemize}

\item {\bf Theory assumptions and proofs}
    \item[] Question: For each theoretical result, does the paper provide the full set of assumptions and a complete (and correct) proof?
    \item[] Answer: \answerYes{} 
    \item[] Justification: Appendix C, D for full proofs
    \item[] Guidelines:
    \begin{itemize}
        \item The answer NA means that the paper does not include theoretical results. 
        \item All the theorems, formulas, and proofs in the paper should be numbered and cross-referenced.
        \item All assumptions should be clearly stated or referenced in the statement of any theorems.
        \item The proofs can either appear in the main paper or the supplemental material, but if they appear in the supplemental material, the authors are encouraged to provide a short proof sketch to provide intuition. 
        \item Inversely, any informal proof provided in the core of the paper should be complemented by formal proofs provided in appendix or supplemental material.
        \item Theorems and Lemmas that the proof relies upon should be properly referenced. 
    \end{itemize}

    \item {\bf Experimental result reproducibility}
    \item[] Question: Does the paper fully disclose all the information needed to reproduce the main experimental results of the paper to the extent that it affects the main claims and/or conclusions of the paper (regardless of whether the code and data are provided or not)?
    \item[] Answer: \answerYes{} 
    \item[] Justification: Appendix G.
    \item[] Guidelines:
    \begin{itemize}
        \item The answer NA means that the paper does not include experiments.
        \item If the paper includes experiments, a No answer to this question will not be perceived well by the reviewers: Making the paper reproducible is important, regardless of whether the code and data are provided or not.
        \item If the contribution is a dataset and/or model, the authors should describe the steps taken to make their results reproducible or verifiable. 
        \item Depending on the contribution, reproducibility can be accomplished in various ways. For example, if the contribution is a novel architecture, describing the architecture fully might suffice, or if the contribution is a specific model and empirical evaluation, it may be necessary to either make it possible for others to replicate the model with the same dataset, or provide access to the model. In general. releasing code and data is often one good way to accomplish this, but reproducibility can also be provided via detailed instructions for how to replicate the results, access to a hosted model (e.g., in the case of a large language model), releasing of a model checkpoint, or other means that are appropriate to the research performed.
        \item While NeurIPS does not require releasing code, the conference does require all submissions to provide some reasonable avenue for reproducibility, which may depend on the nature of the contribution. For example
        \begin{enumerate}
            \item If the contribution is primarily a new algorithm, the paper should make it clear how to reproduce that algorithm.
            \item If the contribution is primarily a new model architecture, the paper should describe the architecture clearly and fully.
            \item If the contribution is a new model (e.g., a large language model), then there should either be a way to access this model for reproducing the results or a way to reproduce the model (e.g., with an open-source dataset or instructions for how to construct the dataset).
            \item We recognize that reproducibility may be tricky in some cases, in which case authors are welcome to describe the particular way they provide for reproducibility. In the case of closed-source models, it may be that access to the model is limited in some way (e.g., to registered users), but it should be possible for other researchers to have some path to reproducing or verifying the results.
        \end{enumerate}
    \end{itemize}

\item {\bf Open access to data and code}
    \item[] Question: Does the paper provide open access to the data and code, with sufficient instructions to faithfully reproduce the main experimental results, as described in supplemental material?
    \item[] Answer:  \answerYes{} 

    \item[] Guidelines:
    \begin{itemize}
        \item The answer NA means that paper does not include experiments requiring code.
        \item Please see the NeurIPS code and data submission guidelines (\url{https://nips.cc/public/guides/CodeSubmissionPolicy}) for more details.
        \item While we encourage the release of code and data, we understand that this might not be possible, so “No” is an acceptable answer. Papers cannot be rejected simply for not including code, unless this is central to the contribution (e.g., for a new open-source benchmark).
        \item The instructions should contain the exact command and environment needed to run to reproduce the results. See the NeurIPS code and data submission guidelines (\url{https://nips.cc/public/guides/CodeSubmissionPolicy}) for more details.
        \item The authors should provide instructions on data access and preparation, including how to access the raw data, preprocessed data, intermediate data, and generated data, etc.
        \item The authors should provide scripts to reproduce all experimental results for the new proposed method and baselines. If only a subset of experiments are reproducible, they should state which ones are omitted from the script and why.
        \item At submission time, to preserve anonymity, the authors should release anonymized versions (if applicable).
        \item Providing as much information as possible in supplemental material (appended to the paper) is recommended, but including URLs to data and code is permitted.
    \end{itemize}

\item {\bf Experimental setting/details}
    \item[] Question: Does the paper specify all the training and test details (e.g., data splits, hyperparameters, how they were chosen, type of optimizer, etc.) necessary to understand the results?
    \item[] Answer: \answerYes{} 
    \item[] Justification: Appendix G
    \item[] Guidelines:
    \begin{itemize}
        \item The answer NA means that the paper does not include experiments.
        \item The experimental setting should be presented in the core of the paper to a level of detail that is necessary to appreciate the results and make sense of them.
        \item The full details can be provided either with the code, in appendix, or as supplemental material.
    \end{itemize}

\item {\bf Experiment statistical significance}
    \item[] Question: Does the paper report error bars suitably and correctly defined or other appropriate information about the statistical significance of the experiments?
    \item[] Answer: \answerYes{} 
    \item[] Justification: All plots have an error bar.
    \item[] Guidelines:
    \begin{itemize}
        \item The answer NA means that the paper does not include experiments.
        \item The authors should answer "Yes" if the results are accompanied by error bars, confidence intervals, or statistical significance tests, at least for the experiments that support the main claims of the paper.
        \item The factors of variability that the error bars are capturing should be clearly stated (for example, train/test split, initialization, random drawing of some parameter, or overall run with given experimental conditions).
        \item The method for calculating the error bars should be explained (closed form formula, call to a library function, bootstrap, etc.)
        \item The assumptions made should be given (e.g., Normally distributed errors).
        \item It should be clear whether the error bar is the standard deviation or the standard error of the mean.
        \item It is OK to report 1-sigma error bars, but one should state it. The authors should preferably report a 2-sigma error bar than state that they have a 96\% CI, if the hypothesis of Normality of errors is not verified.
        \item For asymmetric distributions, the authors should be careful not to show in tables or figures symmetric error bars that would yield results that are out of range (e.g. negative error rates).
        \item If error bars are reported in tables or plots, The authors should explain in the text how they were calculated and reference the corresponding figures or tables in the text.
    \end{itemize}

\item {\bf Experiments compute resources}
    \item[] Question: For each experiment, does the paper provide sufficient information on the computer resources (type of compute workers, memory, time of execution) needed to reproduce the experiments?
    \item[] Answer:  \answerNo{} 
    \item[] Justification: The paper requires a very small amount of compute to run so we did not provide this information.
    \item[] Guidelines:
    \begin{itemize}
        \item The answer NA means that the paper does not include experiments.
        \item The paper should indicate the type of compute workers CPU or GPU, internal cluster, or cloud provider, including relevant memory and storage.
        \item The paper should provide the amount of compute required for each of the individual experimental runs as well as estimate the total compute. 
        \item The paper should disclose whether the full research project required more compute than the experiments reported in the paper (e.g., preliminary or failed experiments that didn't make it into the paper). 
    \end{itemize}
    
\item {\bf Code of ethics}
    \item[] Question: Does the research conducted in the paper conform, in every respect, with the NeurIPS Code of Ethics \url{https://neurips.cc/public/EthicsGuidelines}?
    \item[] Answer: \answerYes{}
    \item[] Justification: N/A
    \item[] Guidelines:
    \begin{itemize}
        \item The answer NA means that the authors have not reviewed the NeurIPS Code of Ethics.
        \item If the authors answer No, they should explain the special circumstances that require a deviation from the Code of Ethics.
        \item The authors should make sure to preserve anonymity (e.g., if there is a special consideration due to laws or regulations in their jurisdiction).
    \end{itemize}

\item {\bf Broader impacts}
    \item[] Question: Does the paper discuss both potential positive societal impacts and negative societal impacts of the work performed?
    \item[] Answer: \answerNA{} 
    \item[] Justification: Foundational research
    \item[] Guidelines:
    \begin{itemize}
        \item The answer NA means that there is no societal impact of the work performed.
        \item If the authors answer NA or No, they should explain why their work has no societal impact or why the paper does not address societal impact.
        \item Examples of negative societal impacts include potential malicious or unintended uses (e.g., disinformation, generating fake profiles, surveillance), fairness considerations (e.g., deployment of technologies that could make decisions that unfairly impact specific groups), privacy considerations, and security considerations.
        \item The conference expects that many papers will be foundational research and not tied to particular applications, let alone deployments. However, if there is a direct path to any negative applications, the authors should point it out. For example, it is legitimate to point out that an improvement in the quality of generative models could be used to generate deepfakes for disinformation. On the other hand, it is not needed to point out that a generic algorithm for optimizing neural networks could enable people to train models that generate Deepfakes faster.
        \item The authors should consider possible harms that could arise when the technology is being used as intended and functioning correctly, harms that could arise when the technology is being used as intended but gives incorrect results, and harms following from (intentional or unintentional) misuse of the technology.
        \item If there are negative societal impacts, the authors could also discuss possible mitigation strategies (e.g., gated release of models, providing defenses in addition to attacks, mechanisms for monitoring misuse, mechanisms to monitor how a system learns from feedback over time, improving the efficiency and accessibility of ML).
    \end{itemize}
    
\item {\bf Safeguards}
    \item[] Question: Does the paper describe safeguards that have been put in place for responsible release of data or models that have a high risk for misuse (e.g., pretrained language models, image generators, or scraped datasets)?
    \item[] Answer: \answerNA{} 
    \item[] Justification: Foundational research
    \item[] Guidelines:
    \begin{itemize}
        \item The answer NA means that the paper poses no such risks.
        \item Released models that have a high risk for misuse or dual-use should be released with necessary safeguards to allow for controlled use of the model, for example by requiring that users adhere to usage guidelines or restrictions to access the model or implementing safety filters. 
        \item Datasets that have been scraped from the Internet could pose safety risks. The authors should describe how they avoided releasing unsafe images.
        \item We recognize that providing effective safeguards is challenging, and many papers do not require this, but we encourage authors to take this into account and make a best faith effort.
    \end{itemize}

\item {\bf Licenses for existing assets}
    \item[] Question: Are the creators or original owners of assets (e.g., code, data, models), used in the paper, properly credited and are the license and terms of use explicitly mentioned and properly respected?
    \item[] Answer: \answerYes{} 
    \item[] Justification: Appendix H
    \item[] Guidelines:
    \begin{itemize}
        \item The answer NA means that the paper does not use existing assets.
        \item The authors should cite the original paper that produced the code package or dataset.
        \item The authors should state which version of the asset is used and, if possible, include a URL.
        \item The name of the license (e.g., CC-BY 4.0) should be included for each asset.
        \item For scraped data from a particular source (e.g., website), the copyright and terms of service of that source should be provided.
        \item If assets are released, the license, copyright information, and terms of use in the package should be provided. For popular datasets, \url{paperswithcode.com/datasets} has curated licenses for some datasets. Their licensing guide can help determine the license of a dataset.
        \item For existing datasets that are re-packaged, both the original license and the license of the derived asset (if it has changed) should be provided.
        \item If this information is not available online, the authors are encouraged to reach out to the asset's creators.
    \end{itemize}

\item {\bf New assets}
    \item[] Question: Are new assets introduced in the paper well documented and is the documentation provided alongside the assets?
    \item[] Answer: \answerNA{} 
    \item[] Justification: the paper does not release new assets
    \item[] Guidelines:
    \begin{itemize}
        \item The answer NA means that the paper does not release new assets.
        \item Researchers should communicate the details of the dataset/code/model as part of their submissions via structured templates. This includes details about training, license, limitations, etc. 
        \item The paper should discuss whether and how consent was obtained from people whose asset is used.
        \item At submission time, remember to anonymize your assets (if applicable). You can either create an anonymized URL or include an anonymized zip file.
    \end{itemize}

\item {\bf Crowdsourcing and research with human subjects}
    \item[] Question: For crowdsourcing experiments and research with human subjects, does the paper include the full text of instructions given to participants and screenshots, if applicable, as well as details about compensation (if any)? 
    \item[] Answer: \answerNA{}
    \item[] Justification: the paper does not involve crowdsourcing
    \item[] Guidelines:
    \begin{itemize}
        \item The answer NA means that the paper does not involve crowdsourcing nor research with human subjects.
        \item Including this information in the supplemental material is fine, but if the main contribution of the paper involves human subjects, then as much detail as possible should be included in the main paper. 
        \item According to the NeurIPS Code of Ethics, workers involved in data collection, curation, or other labor should be paid at least the minimum wage in the country of the data collector. 
    \end{itemize}

\item {\bf Institutional review board (IRB) approvals or equivalent for research with human subjects}
    \item[] Question: Does the paper describe potential risks incurred by study participants, whether such risks were disclosed to the subjects, and whether Institutional Review Board (IRB) approvals (or an equivalent approval/review based on the requirements of your country or institution) were obtained?
    \item[] Answer: \answerNA{} 
    \item[] Justification: the paper does not involve crowdsourcing nor research with human subjects
    \item[] Guidelines:
    \begin{itemize}
        \item The answer NA means that the paper does not involve crowdsourcing nor research with human subjects.
        \item Depending on the country in which research is conducted, IRB approval (or equivalent) may be required for any human subjects research. If you obtained IRB approval, you should clearly state this in the paper. 
        \item We recognize that the procedures for this may vary significantly between institutions and locations, and we expect authors to adhere to the NeurIPS Code of Ethics and the guidelines for their institution. 
        \item For initial submissions, do not include any information that would break anonymity (if applicable), such as the institution conducting the review.
    \end{itemize}

\item {\bf Declaration of LLM usage}
    \item[] Question: Does the paper describe the usage of LLMs if it is an important, original, or non-standard component of the core methods in this research? Note that if the LLM is used only for writing, editing, or formatting purposes and does not impact the core methodology, scientific rigorousness, or originality of the research, declaration is not required.
    \item[] Answer:\answerNA{}
    \item[] Justification: the core method development in this research does not involve LLMs 
    \item[] Guidelines:
    \begin{itemize}
        \item The answer NA means that the core method development in this research does not involve LLMs as any important, original, or non-standard components.
        \item Please refer to our LLM policy (\url{https://neurips.cc/Conferences/2025/LLM}) for what should or should not be described.
    \end{itemize}

\end{enumerate}

\newpage
\appendix

\begin{center}
  \hrule height 4pt 
  \vspace{0.25in} 

  {\fontsize{17}{21}\selectfont \bfseries Supplementary Materials: \\
  Learning with Interval Targets, NeurIPS 2025\par}

  \vspace{0.25in} 
  \hrule height 1pt 
\end{center}

\section{Additional related work}
\label{sec: related work}

\textbf{Weak supervision.} Our setting is part of a sub-field of weak supervision where one learns from noisy, limited, or imprecise sources of data rather than a large amount of labeled data. Learning from noisy labels assumes that we only observe a noisy version of the true labels at the training time where the noise follows different noise models (usually random noise) \citep{natarajan2013learning, li2017learning, song2022learning, angluin1988learning, karimi2020deep, awasthi2017power, chen2019understanding, long2008random, diakonikolas2019distribution}. Programmatic weak supervision, on the other hand, assumes that we have access to multiple noisy weak labels (but deterministic noise) specified by domain experts, e.g. from logic rules or heuristics methods \citep{zhang2022survey,zhang2wrench, ratner2016data, ratner2017snorkel, ruhling2021end, shin2022universalizing, karamanolakis2021self, fu2020fast, pukdeelabel}. Positive-unlabeled learning is another type of weak supervision where the training set only contains positive examples and unlabeled examples \citep{kiryo2017positive, du2014analysis, bekker2020learning, elkan2008learning, li2003learning, hsieh2015pu}.\\

\textbf{Learning with side information.} In contrast to the weakly supervised setting, we have access to standard labeled data but also have access to some additional information. This could be unlabeled data which is studied in semi-supervised learning \citep{zhu2005semi,chapellesemi,kingma2014semi, van2020survey, berthelot2019mixmatch, zhu2022introduction, laine2016temporal, zhai2019s4l,sohn2020fixmatch,yang2016revisiting} or different constraints based on the domain knowledge such as physics rules \citep{willard2020integrating,swischuk2019projection,karniadakis2021physics,wu2018physics, kashinath2021physics} or explanations \citep{ross2017right, pukdee2023learning, rieger2020interpretations,erion2021improving} or output constraints \citep{yang2020incorporating, brosowsky2021sample} which is similar to the  interval targets. In some settings, interval targets are the best thing one could have (similar to the weak supervision setting) but in many cases such as in bond pricing, target intervals are readily available in the wild and could also be considered as a side information.

\section{Limitations}
Our theoretical results rely on a Lipschitz continuity assumption to characterize the size of the reduced interval. We note that other similar assumptions, such as a modulus of continuity, could also lead to analogous results. Importantly, we do not impose any assumptions on the distribution of the intervals themselves. While this generality can be viewed as a strength, it would be an interesting direction for future work to investigate whether stronger results are possible under additional structural assumptions on the intervals. Our generalization bounds are derived via uniform convergence. This approach is necessary to accommodate general loss functions and hypothesis classes but may be suboptimal compared to specialized analyses—such as those for least squares regression—which do not rely on uniform convergence and can yield sharper rates. For clarity, we assume deterministic labels, although our framework allows for interval targets to be random (see Appendix \ref{sec: probabilistic interval}). Extending the results to fully random labels is in principle possible, though the notion of correctness—i.e., whether the interval contains the label—becomes less well-defined in such settings. Finally, we assume that the data distribution is nonatomic, which enables us to reason about zero-probability events. This is a standard technical condition that does not limit the applicability of our results to discrete or finite-support distributions.
\section{Additional proofs}
\label{sec: additional proof}

\subsection{Proof of Proposition \ref{prop: proj loss}}
\label{sec: proof proj loss}
\begin{proof}
    First, we assume that $\pi_\ell(f(x), l , u) = 0$. This implies that there exists $\tilde{y} \in [l,u]$ such that $\ell(f(x), \tilde{y}) = 0$. From the assumption on $\ell$ that $\ell(y,y') = 0$ if and only if $y = y'$, we must have $f(x) = \tilde{y} \in [l,u]$ as required. On the other hand, if $f(x) \in [l,u]$, it is clear that $\pi_\ell(f(x), l , u) = \ell(f(x), f(x)) = 0$ since $\ell(y,y') \geq 0$.\\

    Now, assume that we can write $\ell(y,y') = \psi(|y - y'|)$ for some non-decreasing function $\psi$, we have 
    \begin{align}
        \pi_\ell(f(x), l , u) &= \min_{\tilde{y} \in [l,u]} \psi(|f(x) - \tilde{y}|)\\
        &= \psi(\min_{\tilde{y} \in [l,u]}|f(x) - \tilde{y}|)\\
        &=\begin{cases}
  \psi(l - f(x))  &  f(x) < l\\
  \psi(0) &  l \leq f(x) \leq u\\
  \psi(f(x) - u) & f(x) > u
\end{cases}\\
    &=  1[f(x) < l] \ell(f(x), l) + 1[f(x) > u] \ell(f(x), u).
    \end{align}
Here we rely on the assumption that $\psi$ is non-decreasing so the minimum value of $\psi(x)$ happens when $x$ is also at the minimum value.
\end{proof}

\subsection{Proof of Proposition \ref{prop: projection loss not good for agnostic}}
\label{sec: proof of prop projection loss not good for agnostic}
\begin{proof}
    Since $f_1 \neq f_2$, there exists $x$ such that $f_1(x) \neq f_2(x)$. Without loss of generality, let $f_1(x) < f_2(x)$.
    Consider a simple one point distribution $\cD$ with only one data point $(x,y) = (x, f_2(x) + \epsilon)$ with probability mass 1 and $\cD_I$ be another one point distribution with $(x,l,u) = (x, f(x_1) - \epsilon, f(x_2) - \epsilon)$. We can see that $0 = \mathbb{E}_{\cD_I}[\pi(f_1 (X), L, U)] < \mathbb{E}_{\cD_I}[\pi(f_2 (X), L, U)] = \epsilon^p$ while  $(f(x_2) - f(x_1) + \epsilon)^p = \err(f_1) > \err(f_2) = \epsilon^p.$
\end{proof}

\subsection{Proof of Proposition \ref{prop: proj < error}}
\label{sec: proof of prop proj < error}
\begin{proof}
    From the Proposition \ref{prop: proj loss}, 
    \begin{equation}
        \pi(f(x), l, u) = 1[f(x) < l] \ell(f(x), l) + 1[f(x) > u] \ell(f(x), u)
    \end{equation}
Recall that $y \in [l,u]$, we consider 3 cases,
\begin{enumerate}
    \item $f(x) < l$, $\pi(f(x), l, u) = \ell(f(x), l) = \psi(|l - f(x)|) \leq \psi(|y - f(x)|) = \ell(f(x),y)$
      \item $f(x) > u$, $\pi(f(x), l, u) = \ell(f(x), u) = \psi(|f(x) - u|) \leq \psi(|f(x) - y|) =  \ell(f(x),y)$
      \item $l \leq f(x) \leq u$, $\pi(f(x), l, u) = 0 \leq \ell(f(x),y)$
\end{enumerate}
\end{proof}

\subsection{Proof of Theorem \ref{thm: agnostic bound}}
\label{sec: proof of thm agnostic bound}
\begin{proof}
    From the triangle inequality,
    \begin{equation}
        \ell(f(x),y) = \ell(f(x), \fopt(x)) + \ell(\fopt(x), y)
    \end{equation}
    We can take an expectation to have
    \begin{equation}
        \mathbb{E}[\ell(f(X),Y)] \leq \mathbb{E}[\ell(f(X),\fopt(X)] + \opt.
    \end{equation}
Since $\fopt \in \wcF_\opt$ which from Theorem \ref{thm: main bound v2}, we can bound
\begin{equation}
        \fopt(x) \in [l_{\cD \to x}^{(m)} - r_\opt(x), l_{\cD \to x}^{(m)} + s_\opt(x)].
    \end{equation}
Similarly, for any $f \in \wcF_\eta$, we have
    \begin{equation}
        f(x) \in [l_{\cD \to x}^{(m)} - r_\eta(x), u_{\cD \to x}^{(m)} + s_\eta(x)]
    \end{equation}
Finally, we can bound the error between any two intervals with the maximum loss between their boundaries.
\end{proof}

\subsection{Proof of Proposition \ref{prop: close form minmax}}
\label{sec: proof of prop close form minmax}
\begin{proof}
        Since we can write $\ell(y,y') = \psi(|y - y'|)$ for some non-decreasing function $\psi$, we have 
    \begin{align}
        \rho_\ell(f(x), l , u) &= \max_{\tilde{y} \in [l,u]} \psi(|f(x) - \tilde{y}|)\\
        &= \psi(\max_{\tilde{y} \in [l,u]}|f(x) - \tilde{y}|)\\
        &=\begin{cases}
  \psi(u - f(x))  &  f(x) < \frac{l + u}{2}\\
  \psi(f(x) - l) & f(x) \geq  \frac{l + u}{2}
\end{cases}\\
    &=  1[f(x) \leq  \frac{l+u}{2}]\ell(f(x), u) + 1[f(x) > \frac{l+u}{2}] \ell(f(x), l).
    \end{align}
Here we rely on the assumption that $\psi$ is non-decreasing so the maximum value of $\psi(x)$ happens when $x$ is also at the maximum value.
\end{proof}

\subsection{Proof of Corollary \ref{cor: mid point}}
\label{sec: proof of cor mid point}
\begin{proof}
   Since $\ell(y,y') = |y-y'|$, from Proposition \ref{prop: close form minmax}, we have a closed form solution of $\rho$,
    \begin{align}
        &\rho_\ell(f(x), l , u) =  1[f(x) \leq  \frac{l+u}{2}]\ell(f(x), u) + 1[f(x) > \frac{l+u}{2}] \ell(f(x), l)\\
        &=1[f(x) \leq  \frac{l+u}{2}](u -  f(x)) + 1[f(x) > \frac{l+u}{2}](f(x) - l)\\
        &=1[f(x) \leq  \frac{l+u}{2}](u - \frac{l+u}{2} + \frac{l+u}{2} - f(x)) + 1[f(x) > \frac{l+u}{2}](f(x) - \frac{l+u}{2} + \frac{l+u}{2} -l)\\
        &= \frac{u - l}{2} + 1[f(x) \leq  \frac{l+u}{2}](\frac{l+u}{2} - f(x)) +  1[f(x) > \frac{l+u}{2}](f(x) - \frac{l+u}{2})\\
        &= |f(x) - \frac{l + u}{2}| + \frac{u - l}{2}.
    \end{align}
Since $u_i,l_i$ are constants, $\frac{u_i - l_i}{2}$ would have no impact on the optimal solution of  \eqref{eq: rho objective} and therefore, the optimal would also be the same as the one that minimizes $\sum_{i=1}^n |f(x_i) - \frac{l_i + u_i}{2}|.$
\end{proof}

\subsection{Proof of Proposition \ref{prop: minmax (hyp) < minmax (label)}}
\label{sec: proof of prop minmax(hyp) < minmax(label)}
\begin{proof}
    From the realizability assumption, we know that $f^* \in \wcF_0$, therefore,
    \begin{equation}
         \err(f) = \mathbb{E}[\ell(f(X), f^*(X))] \leq \max_{f' \in \wcF_0} \mathbb{E}[\ell(f(X), f'(X))].
    \end{equation}
    On the other hand, Let $f'' \in \wcF_0$, be a hypothesis that achieves the maximum value of $\mathbb{E}[\ell(f(X), f''(X))]$. Since  $f'' \in \wcF_0$ we know that 
    \begin{equation}
        \mathbb{E}[\pi_\ell(f''(X), L, U)]  = 0.
    \end{equation}
    
    Since the projection loss is always non-negative and is continuous, from Lemma \ref{lemma: nonatomic}, we can conclude that $\pi_\ell(f''(x), l, u) = 0$  for any $x,l,u$ with positive density function $p(x,l,u) > 0$ which implies $f''(x) \in [l,u]$. Therefore, for any $x$ with $p(x) > 0$,
    \begin{equation}
        \ell(f(x), f''(x)) \leq \max_{\tilde{y} \in [l,u]}\ell(f(x), \tilde{y}) = \rho_\ell(f(x),l,u).
    \end{equation}
We can take an expectation over $X,L,U$ and have the desired result.
\end{proof}

\subsection{Proof of Proposition \ref{prop: minmax in F is better}}
\label{sec: proof of prop minmax in F is better}
\begin{proof}
Consider when $\cX = \{0,1\}$ and $f^*$ such that $f^*(0) = f^*(1) = 0$.  Consider a hypothesis class of constant functions $\cF = \{f:\cX \to \mathbb{R} \mid f(x) = d,\forall x \in \cX\}$. We can see that $f^* \in \cF$. Assume that we have a uniform distribution over $\cX$ and we also have deterministic interval $[l(x), u(x)]$. Assume that for $x = 0$, we have an interval $[l(0), u(0)] = [-a, \epsilon]$ for some $a > 0$ and for $x = 1$, we have an interval $[l(1), u(1)] = [-\epsilon, 2\epsilon]$. Since $\cF$ is a class of constant hypothesis, for all $x$, we must have $f(x) \in [-a, \epsilon] \cap [-\epsilon, 2\epsilon] = [-\epsilon, \epsilon]$. This implies that
\begin{equation}
    \wcF_0 = \{f \mid f(x) = c, \forall x \in \cX,  c \in [-\epsilon, \epsilon] \}.
\end{equation}
Therefore,
\begin{align}
    f_1 &= \arg\min_{f\in\cF}\max_{f' \in \wcF_0} \mathbb{E}[\ell(f(X), f'(X)]\\
     &=\arg\min_{f\in\cF}\max_{f' \in \wcF_0} \frac{1}{2}(|f(0) - f'(0)| + |f(1) - f'(1)|)\\
     &=\arg\min_{f\in\cF}\max_{c \in [-\epsilon, \epsilon]} |f(0) - c|\\
\end{align}
By symmetry, we can see that the optimal $f_1(x) = 0$ which means that $\err(f_1) = 0$. On the other hand, consider $f_2$, from Corollary  \ref{cor: mid point}, $f_2$ is equivalent to the solution of supervised learning with the midpoint of each interval,
\begin{align}
    f_2 &=  \arg\min_{f\in\cF}\mathbb{E}[\rho_\ell(f(X), L , U)]\\
    &= \arg\min_{f\in\cF}\frac{1}{2}[|f(0) - \frac{-a + \epsilon}{2} | + |f(1) - \frac{-\epsilon + 2\epsilon}{2}|].
\end{align}
By symmetry, the optimal $f_2$ should lie in the middle between these two points so that $f_2(x) = -a/2 + \epsilon.$ We would have $\err(f_2) = |-a/2 + \epsilon|$ which can be arbitrarily large as $a \to \infty$.
\end{proof}

\section{Probabilistic interval setting}
\label{sec: probabilistic interval}
In this section, we consider the probabilistic interval setting which is when, for each $x$, the corresponding interval is drawn from some distribution $\cD_I$. 

\begin{assumption}
\label{assum: non-atomic}
    A distribution $P$ with a probability density function $p(x)$ is a nonatomic distribution when for any $x$ such that $p(x) > 0$ and for any $\epsilon > 0$, there exists a set $S_{x,\epsilon} \subseteq B(x, \epsilon)$ (a ball with radius $\epsilon$) such that $\Pr(S_{x, \epsilon}) > 0$. We assume that the distribution $\cD$ and $\cD_I$ are nonatomic distributions .
\end{assumption}

\begin{lemma}
\label{lemma: nonatomic}
    Let $P$ be a nonatomic distribution over $\cX$ with a probability density function $p(x)$. For any continuous function $f: \cX \to [0, \infty)$, if $\mathbb{E}_{P}[f(X)] = 0$ then $f(x) = 0$ for all $x$ with $p(x) > 0$.
\end{lemma}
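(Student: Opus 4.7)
The plan is to argue by contradiction, exploiting the continuity of $f$ together with the nonatomic assumption to localize a positive contribution to the expectation. Suppose, for contradiction, that there exists $x_0 \in \cX$ with $p(x_0) > 0$ and $f(x_0) > 0$. I will construct a set of positive $P$-measure on which $f$ is uniformly bounded away from $0$, forcing $\mathbb{E}_P[f(X)] > 0$ and contradicting the hypothesis.

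First I would use continuity of $f$ at $x_0$: choosing $\epsilon' = f(x_0)/2 > 0$, there exists $\delta > 0$ such that $|f(x) - f(x_0)| < f(x_0)/2$ for all $x \in B(x_0, \delta)$. In particular, $f(x) > f(x_0)/2$ on the whole ball $B(x_0, \delta)$. Next, I would invoke Assumption \ref{assum: non-atomic} with $x = x_0$ and $\epsilon = \delta$: since $p(x_0) > 0$, there exists a measurable set $S_{x_0, \delta} \subseteq B(x_0, \delta)$ with $\Pr_P(S_{x_0, \delta}) > 0$.

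Then I would lower-bound the expectation by restricting to this set:
\begin{equation}
\mathbb{E}_P[f(X)] \;\geq\; \mathbb{E}_P\bigl[f(X)\,\mathbf{1}[X \in S_{x_0,\delta}]\bigr] \;\geq\; \frac{f(x_0)}{2}\,\Pr_P(S_{x_0,\delta}) \;>\; 0,
\end{equation}
which contradicts the assumption $\mathbb{E}_P[f(X)] = 0$ and completes the proof.

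There is no real obstacle here; the argument is a standard measure-theoretic one. The only subtlety is to apply the paper's specific formulation of ``nonatomic'' correctly: the definition in Assumption \ref{assum: non-atomic} guarantees that every $\epsilon$-neighborhood of a positive-density point carries positive $P$-mass, which is exactly what is needed to convert pointwise positivity of $f$ (via continuity, upgraded to positivity on an open ball) into positivity of the integral.
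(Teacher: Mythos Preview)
Your proof is correct and follows essentially the same argument as the paper's own proof: contradiction, continuity to secure $f > f(x_0)/2$ on a ball, the nonatomic assumption to find a positive-measure subset of that ball, and then a lower bound on the expectation yielding the contradiction. The only cosmetic differences are notation ($x_0,\delta$ versus $x,\delta_1$) and that the paper writes the final bound as a chain of integrals rather than using indicator notation.
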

\begin{proof}
    We will prove this by contradiction. Assume that there exists $x$ with $p(x) > 0$ such that $f(x) > 0$.
    By the continuity of $f$, there exists $\delta_1 > 0$ such that for any $x' \in B(x, \delta_1)$ such that $ |f(x) - f(x')| \leq f(x) / 2$ which implies that $f(x') \geq f(x) / 2$. In addition, by the nonatomic assumption, there exists $S_{x, \delta_1} \subseteq  B(x, \delta_1) $ such that $\Pr(S_{x, \delta_1}) > 0$. Therefore, 

    \begin{align}
        \mathbb{E}_P[f(X)] &= \int_{w \in \cX} f(w) p(w) dw\\
        & \geq  \int_{w \in S_{x, \delta_1}} f(w) p(w) dw\\
        &\geq \int_{w \in S_{x, \delta_1}} \frac{f(x) p(w)}{2} dw\\
        &= \frac{f(x) \Pr(S_{x, \delta_1})}{2} > 0.
    \end{align}
    This leads to a contradiction since $\mathbb{E}_P[f(X)] > 0$.
\end{proof}

Similar to the deterministic interval setting, for any $f \in \wcF_0$, $f$ has to lie inside the interval as well. One difference would be that in the probabilistic interval setting, we can have multiple intervals for each $x$ and since $f$ has to lie inside all of them, $f$ would also lie inside the intersection of all of them for which we denote as $[\tilde{l}_x, \tilde{u}_x]$ for each $x$.

\begin{proposition}
\label{prop: F_0, no smooth}
    For any $f \in \wcF_0$, and a loss function $\ell$ that satisfies Assumption \ref{assum: loss}, for any $x$ with positive probability density $p(x) > 0$, we have
    \begin{equation}
        f(x) \in \bigcap_{p(x,l,u) > 0} [l,u] :=  [\tl_x, \tu_x ]. 
    \end{equation}
\end{proposition}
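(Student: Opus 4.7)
The plan is to reduce the global $L^1$-type condition $\mathbb{E}[\pi_\ell(f(X),L,U)] = 0$ to a pointwise statement, and then invoke Proposition \ref{prop: proj loss} to turn ``projection loss is zero'' into ``$f(x)$ lies in the interval.'' First I would observe that $f \in \wcF_0$ means, by definition, that $\mathbb{E}_{(X,L,U) \sim \cD_I}[\pi_\ell(f(X),L,U)] = 0$, and that the integrand is non-negative because $\pi_\ell$ is a minimum of the non-negative loss $\ell$ over $[l,u]$.

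Next I would view $g(x,l,u) := \pi_\ell(f(x),l,u)$ as a continuous non-negative function on $\cX \times \mathbb{R}^2$. Continuity follows from the closed form $g(x,l,u) = 1[f(x) < l]\,\ell(f(x), l) + 1[f(x) > u]\,\ell(f(x), u)$ in Proposition \ref{prop: proj loss}, together with the representation $\ell(y,y') = \psi(|y-y'|)$ from Assumption \ref{assum: loss} (the indicator discontinuities cancel because $\psi(0) = 0$), under the background assumption that $f \in \cF$ is continuous. With $\cD_I$ nonatomic (Assumption \ref{assum: non-atomic}), Lemma \ref{lemma: nonatomic} applied to $g$ and the joint distribution on $(X,L,U)$ yields
\begin{equation}
    \pi_\ell(f(x), l, u) \;=\; 0 \qquad \text{for every } (x,l,u) \text{ with } p(x,l,u) > 0.
\end{equation}

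Finally I would invert this via Proposition \ref{prop: proj loss}: $\pi_\ell(f(x), l, u) = 0$ if and only if $f(x) \in [l,u]$. Hence for any $x$ with $p(x) > 0$, and for every $(l,u)$ such that $p(x,l,u) > 0$, we have $f(x) \in [l,u]$. Intersecting over all such $(l,u)$ gives $f(x) \in \bigcap_{p(x,l,u) > 0} [l,u] = [\tl_x, \tu_x]$, as desired.

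The main obstacle is the measure-theoretic bookkeeping rather than any nontrivial estimate: one must (i) argue that $g(x,l,u)$ is continuous even though Proposition \ref{prop: proj loss} writes it using indicators (this is where the ``$\ell(y,y')=0 \iff y=y'$'' half of Assumption \ref{assum: loss} is essential, since it forces the two pieces to agree on the boundaries $f(x)=l$ and $f(x)=u$), and (ii) verify that the nonatomic property of $\cD_I$ on $(X,L,U)$ is what Lemma \ref{lemma: nonatomic} actually requires, so that the ``zero expectation implies pointwise zero on the support'' step is legitimate. Once these are in place, the rest is essentially a direct translation.
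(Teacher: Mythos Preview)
Your proposal is correct and follows essentially the same approach as the paper: use Lemma~\ref{lemma: nonatomic} on the non-negative function $g(x,l,u) = \pi_\ell(f(x),l,u)$ to pass from zero expectation to pointwise zero on the support, then invoke Proposition~\ref{prop: proj loss} to conclude $f(x) \in [l,u]$ and intersect. You are in fact more careful than the paper, which applies Lemma~\ref{lemma: nonatomic} without explicitly verifying the continuity hypothesis; your observation that the indicator discontinuities in the closed form cancel because $\psi(0)=0$ is exactly the missing detail.
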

\begin{proof}
    Let $f \in \wcF_0$ so we have $\mathbb{E}[\pi(f(X), L, U)] = 0$. From Lemma \ref{lemma: nonatomic}, for any $(x,l,u)$ such that $p(x,l,u) > 0$, we have  $\pi(f(x), l, u) = 0$ which implies $f(x) \in [l,u]$ (From Proposition \ref{prop: proj loss}). Therefore, by taking an intersection over all possible intervals, we would have $f(x) \in \bigcap_{p(x,l,u) > 0} [l,u] :=  [\tl_x, \tu_x ]$.
\end{proof}

\begin{proposition}
\label{prop: f0 with smooth}
    Let $\cF$ be a class of functions that are $m$-Lipschitz. For any $x,x'$, denote  $\tl_{x' \to x}^{(m)} = \tl_{x'}- m \lVert x - x'\rVert$, $\tu_{x' \to x}^{(m)}  = \tu_{x'} + m \lVert x - x'\rVert$, then for any $f \in \wcF_0$ and for any $x$ with positive probability density $p(x) > 0$,
    \begin{equation}
        f(x) \in \bigcap_{x'} [\tl_{x' \to x}^{(m)} , \tu_{x' \to x}^{(m)} ] := [\tilde{l}_{\cD \to x}^{(m)}, \tilde{u}_{\cD \to x}^{(m)}]
    \end{equation}

\end{proposition}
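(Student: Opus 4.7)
The plan is to reduce to Proposition \ref{prop: F_0, no smooth} by exploiting the Lipschitz hypothesis. By Proposition \ref{prop: F_0, no smooth}, for every $x'$ with $p(x') > 0$ we already have $f(x') \in [\tilde{l}_{x'}, \tilde{u}_{x'}]$, so the task reduces to propagating these pointwise bounds at $x'$ to bounds at $x$ using smoothness.

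First, I would fix an arbitrary $x$ with $p(x) > 0$ and an arbitrary $x'$ with $p(x') > 0$. Since $f \in \cF$ is $m$-Lipschitz, $|f(x) - f(x')| \leq m \lVert x - x'\rVert$, which rearranges to
\[
f(x') - m\lVert x - x'\rVert \leq f(x) \leq f(x') + m\lVert x - x'\rVert.
\]
Substituting the bounds $\tilde{l}_{x'} \leq f(x') \leq \tilde{u}_{x'}$ from Proposition \ref{prop: F_0, no smooth} yields
\[
\tilde{l}_{x'} - m\lVert x - x'\rVert \leq f(x) \leq \tilde{u}_{x'} + m\lVert x - x'\rVert,
\]
which is exactly $f(x) \in [\tilde{l}_{x' \to x}^{(m)}, \tilde{u}_{x' \to x}^{(m)}]$. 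Taking intersection over all $x'$ with $p(x') > 0$ then gives $f(x) \in [\tilde{l}_{\cD \to x}^{(m)}, \tilde{u}_{\cD \to x}^{(m)}]$.

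The subtle step is the quantifier on $x'$: Proposition \ref{prop: F_0, no smooth} only guarantees $f(x') \in [\tilde{l}_{x'}, \tilde{u}_{x'}]$ for $x'$ with positive density, and so, strictly speaking, the intersection should be interpreted (as in Proposition \ref{prop: f0 with smooth v2}) over $x'$ with $p(x') > 0$. I would explicitly note this in the statement, matching the convention used in the deterministic case. If the authors intend the intersection to range over all of $\cX$, one can extend the bound to arbitrary $x'$ by invoking the nonatomic assumption on $\cD$ (Assumption \ref{assum: non-atomic}) together with continuity of $f$: for any $x'$ there is a sequence $x_n' \to x'$ with positive density, and passing to the limit via Lipschitz continuity of $f$ and the definition of $\tilde{l}_{x_n' \to x}^{(m)}, \tilde{u}_{x_n' \to x}^{(m)}$ preserves the bound after taking a liminf/limsup of the interval endpoints. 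The main obstacle, if any, is this second step, since $\tilde{l}_{x'}$ and $\tilde{u}_{x'}$ as functions of $x'$ need not be continuous; however, we only need the intersection to be non-empty at $p(x') > 0$, which is all that the proposition asserts.
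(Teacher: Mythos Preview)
Your proof is correct and follows essentially the same route as the paper: apply the $m$-Lipschitz inequality to bound $f(x)$ in terms of $f(x')$, invoke Proposition~\ref{prop: F_0, no smooth} to replace $f(x')$ by $[\tilde{l}_{x'}, \tilde{u}_{x'}]$, and then intersect (equivalently, take $\sup$/$\inf$) over $x'$. Your observation about the quantifier on $x'$ is a valid refinement that the paper's proof leaves implicit; the paper simply writes ``for any $x,x' \in \cX$'' and takes a supremum over $x'$ without restricting to $p(x')>0$, even though Proposition~\ref{prop: F_0, no smooth} only delivers the bound at such points.
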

\begin{proof} Consider $f \in \wcF_0$, since $f$ is $m$-Lipschitz, for any $x,x' \in \cX$, we have $|f(x) - f(x')| \leq m\lVert x - x'\rVert$ which implies
    \begin{equation}
    \label{eq: lipschitz ineq}
       f(x') - m\lVert x - x'\rVert \leq f(x) \leq f(x') + m\lVert x - x'\rVert
    \end{equation}
We illustrate this in Figure \ref{fig: interval x inducex by x'}. Then, from Proposition \ref{prop: F_0, no smooth}, for $f \in \wcF_0$, we have $\tl_{x'} \leq f(x')\leq \tu_{x'}$
which implies
\begin{equation}
    \tilde{l}_{\xtx}^{(m)} = \tl_{x'}  - m\lVert x - x'\rVert \leq f(x') - m\lVert x - x'\rVert
\end{equation}
\begin{equation}
   \tilde{u}_{\xtx}^{(m)} = \tu_{x'}  + m\lVert x - x'\rVert \geq f(x') - m\lVert x +  x'\rVert .
\end{equation}
Substitute back to equation \eqref{eq: lipschitz ineq} and take supremum over $x'$, we have
\begin{align}
    \tilde{l}_{\xtx}^{(m)}  \leq &f(x) \leq \tilde{u}_{\xtx}^{(m)} \\
    \sup_{x'}\tilde{l}_{\xtx}^{(m)}  \leq &f(x) \leq  \inf_{x'}\tilde{u}_{\xtx}^{(m)} \\
    \tilde{l}_{\Dtx}^{(m)} \leq &f(x) \leq \tilde{u}_{\Dtx}^{(m)}.
\end{align}
\end{proof}

Next, we present the probabilistic interval version of Theorem \ref{thm: main bound v2}. Details of the proofs are the same, except that we use $\tl, \tu$ instead of $l,u$.

\begin{theorem}
\label{thm: main bound}
    Let $\cF$ be a class of functions that are $m$-Lipschitz. $\ell: \cY \times \cY \to \mathbb{R}$ is a loss function that satisfies Assumption \ref{assum: loss}. For any $f \in \wcF_\eta$ and for any $x$ with positive probability density $p(x) > 0$,
    \begin{equation}
        f(x) \in  [\tilde{l}_{\cD \to x}^{(m)} - r_\eta(x), \tilde{u}_{\cD \to x}^{(m)} + s_\eta(x)] 
    \end{equation}
where $\tilde{l}_{\cD \to x}^{(m)}, \tilde{u}_{\cD \to x}^{(m)}$ are defined as in Proposition \ref{prop: f0 with smooth} and
\begin{enumerate}
    \item $r_\eta(x) = r $ such that  $\eta =  \mathbb{E}[1[g(x, X, r) < L] \ell(g(x, X, r), L)]$ where $g(x, x', r) = \tl_{x'}  - (r -  (\tl_{\Dtx}^{(m)} - \tl_{\xtx}^{(m)})).$
    \item $s_\eta(x) = s $ such that  $\eta = \mathbb{E}[1[h(x, X, s) > U] \ell(h(x, X, s), U)]$ where $h(x,x',s) = \tu_{x'} + (s - (\tu_{\xtx}^{(m)} - \tu_{\Dtx}^{(m)}))$.
\end{enumerate}
\end{theorem}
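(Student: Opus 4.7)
My plan is to mirror the proof of Theorem \ref{thm: main bound v2} (sketched in the excerpt) but work with the intersection bounds $\tl_{x}, \tu_{x}$ from Proposition \ref{prop: F_0, no smooth} in place of the raw interval endpoints, which is the only adjustment needed to pass from the deterministic-interval case to the probabilistic one. The argument is symmetric in the lower and upper bound, so I will do the lower bound and indicate the analogous change for the upper.

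First, I would fix $x$ with $p(x) > 0$ and $f \in \wcF_\eta$. Writing $r := \tl_{\Dtx}^{(m)} - f(x)$, if $r \leq 0$ the lower bound is trivial, so assume $r > 0$ and aim to show $r \leq r_\eta(x)$. Applying the $m$-Lipschitz property of $f$ gives $f(x') \leq f(x) + m\lVert x - x'\rVert$ for every $x'$; substituting $f(x) = \tl_{\Dtx}^{(m)} - r$ and using $\tl_{\xtx}^{(m)} = \tl_{x'} - m\lVert x-x'\rVert$ rearranges to
\begin{equation}
f(x') \;\leq\; \tl_{x'} - \bigl(r - (\tl_{\Dtx}^{(m)} - \tl_{\xtx}^{(m)})\bigr) \;=\; g(x,x',r).
\end{equation}
This is the pointwise Lipschitz inequality rewritten in the form used in the definition of $r_\eta(x)$.

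Next, I would relate this to the projection loss. For any $(x',l,u)$ in the support of $\cD_I$, Proposition \ref{prop: F_0, no smooth} gives $l \geq \tl_{x'}$. A brief case analysis using the monotonicity of $\psi$ from Assumption \ref{assum: loss} yields
\begin{equation}
\pi_\ell(f(x'),l,u) \;\geq\; \mathbf{1}[g(x,x',r) < l]\,\ell(g(x,x',r), l),
\end{equation}
since when $g(x,x',r) < l$ we also have $f(x') \leq g(x,x',r) < l$, so $l - f(x') \geq l - g(x,x',r) > 0$ and $\psi$ is non-decreasing; when $g(x,x',r) \geq l$ the right side is zero. Taking expectation over $(X,L,U) \sim \cD_I$ and using $f \in \wcF_\eta$ gives
\begin{equation}
\eta \;\geq\; \mathbb{E}\bigl[\mathbf{1}[g(x,X,r)<L]\,\ell(g(x,X,r),L)\bigr].
\end{equation}

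Finally, I would invoke the monotonicity of the right-hand side in $r$: since $g(x,x',r)$ is decreasing in $r$, both the indicator and $\ell(g(x,x',r),l)$ are non-decreasing in $r$, so the expectation is a non-decreasing function of $r$. Combined with the definition $\eta = \mathbb{E}[\mathbf{1}[g(x,X,r_\eta(x))<L]\,\ell(g(x,X,r_\eta(x)),L)]$, this forces $r \leq r_\eta(x)$, i.e.\ $f(x) \geq \tl_{\Dtx}^{(m)} - r_\eta(x)$. The upper-bound inequality $f(x) \leq \tu_{\Dtx}^{(m)} + s_\eta(x)$ follows by the same argument applied to $-f$, using $h(x,x',s) = \tu_{x'} + (s - (\tu_{\xtx}^{(m)} - \tu_{\Dtx}^{(m)}))$ and comparing against $u \leq \tu_{x'}$. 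The main delicacy is the case split establishing the projection-loss lower bound (step two), and verifying monotonicity of the defining expectation in $r$; both are elementary but must be stated carefully since $g(x,x',r)$ can lie on either side of $l$ depending on the gap $lg_{\xtx}^{(m)}$.
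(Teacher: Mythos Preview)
Your proposal is correct and follows essentially the same route as the paper: derive the pointwise bound $f(x')\le g(x,x',r)$ from Lipschitzness, feed it into the lower half of the projection loss, and use monotonicity in $r$ to conclude $r\le r_\eta(x)$. One small slip: the inclusion $[\tl_{x'},\tu_{x'}]\subseteq[l,u]$ actually gives $l\le \tl_{x'}$ (not $l\ge \tl_{x'}$), but as you yourself note, your case analysis never uses this inequality, so the argument goes through unchanged.
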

\begin{proof}
Now, we will show that if $f \in \wcF_\eta$ then we have $f(x) \in [\tilde{l}_{\Dtx}^{(m)} - r_\eta(x), \tilde{u}_{\Dtx}^{(m)} + s_\eta(x)]$ instead. First, we explore what would be a requirement to change the lower bound of $f(x)$ from $\tilde{l}_{\Dtx}^{(m)}$ to $\tilde{l}_{\Dtx}^{(m)}- r$. Again, from Lipschitzness, 
\begin{equation}
 f(x') - m\lVert x - x' \rVert \leq f(x)    
\end{equation}
Taking a supremum here, we have
\begin{equation}
     \sup_{x'} f(x') - m\lVert x - x' \rVert \leq f(x).
\end{equation}
Here, we will use $\sup_{x'} f(x') - m\lVert x - x' \rVert$ as a new lower bound for $f(x)$. Assume that it is lower than $\tilde{l}_{\Dtx}^{(m)}$, we can write  
\begin{equation}
    \sup_{x'} f(x') - m\lVert x - x' \rVert  = \tilde{l}_{\Dtx}^{(m)} - r
\end{equation}
for some $r>0$, then it implies that for all $x' \in \cX$, we must have
\begin{equation}
     f(x') - m\lVert x - x' \rVert  \leq \tilde{l}_{\Dtx}^{(m)} - r
\end{equation}
\begin{equation}
    (f(x') - \tl_{x'} + (\tl_{x'} - m\lVert x - x' \rVert)  \leq \tl_{\Dtx}^{(m)} - r
\end{equation}
\begin{equation}
    f(x') \leq \tl_{x'} - \tl_{\xtx}^{(m)} + \tl_{\Dtx}^{(m)} - r
\end{equation}
\begin{equation}
\label{eq: upper bound f(x') eta}
    f(x')    \leq \tl_{x'}  - (r -  (\tl_{\Dtx}^{(m)} - \tl_{\xtx}^{(m)}))
\end{equation}
That is, if one can change the lower bound of $f(x)$ from $\tl_{\Dtx}^{(m)}$ to $\tl_{\Dtx}^{(m)} - r$ then for all $x'$, $f(x')$ has to take value lower than $\tl_{x'}$ by at least $r -  (\tl_{\Dtx}^{(m)} - \tl_{\xtx}^{(m)})$ whenever this term is positive. However, $f \in \wcF_\eta$ so that $f(x')$ can't be too far away from $\tl_{x'}$ since $\mathbb{E}[\pi_\ell(f(X), L, U)] \leq \eta$.
From Proposition \ref{prop: proj loss}, if one can write $\ell(y,y') = \psi(|y - y'|)$ for some non-decreasing function $\psi$ then we have
    \begin{equation}
        \pi_\ell(f(x), l , u) = 1[f(x) < l] \ell(f(x), l) + 1[f(x) > u] \ell(f(x), u).
    \end{equation}
Therefore,
\begin{equation}
\label{eq: eta bound for r}
    \eta \geq \mathbb{E}[\pi_\ell(f(X), L, U)] \geq \mathbb{E}[1[f(X) < L] \ell(f(X), L)].
\end{equation}
Let $g(x, x', r) = \tl_{x'}  - (r -  (\tl_{\Dtx}^{(m)} - \tl_{\xtx}^{(m)}))$ be the upper bound of $f(x')$ for any $x'$ as we derived in the equation  \eqref{eq: upper bound f(x') eta}. Since $1[a < L] \ell(a, L)]$ is a decreasing function over $a$, equation \eqref{eq: eta bound for r} implies
\begin{equation}
    \eta \geq \mathbb{E}[1[f(X) < L] \ell(f(X), L)] \geq \mathbb{E}[1[g(x, X, r) < L] \ell(g(x, X, r), L)]
\end{equation}
We can also see that $g(x,x',r)$ is a decreasing function of $r$ which means $\mathbb{E}[1[g(x, X, r) < L] \ell(g(x, X, r), L)]$ is an increasing function of $r$. The largest possible value of $r$ would then be the $r$ such that the inequality holds, 
\begin{equation}
   \eta =  \mathbb{E}[1[g(x, X, r) < L] \ell(g(x, X, r), L)].
\end{equation}
which we denoted this as $r_\eta(x)$. Similarly, we can show that if the largest possible value of $s$ such that we can change the upper bound of $f(x)$ from $\tu_{\Dtx}^{(m)}$ to $\tu_{\Dtx}^{(m)} + s$ is given by
\begin{equation}
  \eta = \mathbb{E}[1[h(x, X, s) > U] \ell(h(x, X, s), U)]
\end{equation}
where $h(x,x',s) = \tu_{x'} + (s - (\tu_{\xtx}^{(m)} - \tu_{\Dtx}^{(m)}))$.
\end{proof}

\begin{theorem}
\label{prop: bound r,s (probabilistic)}
Under the conditions of Theorem $\ref{thm: main bound}$, if further assume that for each $x$, the lower and upper bound of $y$ is given by deterministic function $[l(x), u(x)]$ and $\ell$ is an $\ell_p$ loss $\ell(y,y') = |y - y'|^p$ and denote the lower bound gap and upper bound gap of $f(x)$ induced by $x'$ as $lg_{x' \to x}^{(m)} = \tl_{\Dtx}^{(m)} - \tl_{x' \to x}^{(m)}$ and $ug_{x' \to x}^{(m)} = \tu_{x' \to x}^{(m)} - \tu_{\Dtx}^{(m)}$ then we have
\begin{align}
\label{eq: bound r}
    r_\eta(x) = r \quad &\text{s.t. } \quad \mathbb{E}[(r -lg_{X \to x}^{(m)} )^p_+] = \eta \\
     s_\eta(x) = s \quad &\text{s.t. } \quad \mathbb{E}[(s - ug_{X \to x}^{(m)})^p_+] = \eta 
\end{align}

where we denote $c_+ = \max(0,c)$. Further, we can bound $r_\eta(x)$ and $s_\eta(x)$,
\begin{align}
\label{eq: bound r2}
    r_\eta(x) \leq \inf_{\delta} \delta + (\frac{\eta}{\Pr(lg_{X \to x}^{(m)} \leq \delta)})^{1/p}\\
    s_\eta(x) \leq \inf_{\delta} \delta + (\frac{\eta}{\Pr(ug_{X\to x}^{(m)}  \leq \delta)})^{1/p}.
\end{align} 
\end{theorem}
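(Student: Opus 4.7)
The proposal is to instantiate the implicit characterization from Theorem~\ref{thm: main bound} under the two extra assumptions (deterministic interval function $[l(x),u(x)]$ and $\ell_p$ loss), obtaining a clean expectation expression for $r_\eta(x)$ and $s_\eta(x)$; then to derive the $\inf_\delta$ bound by a one-line layer-cake/Markov-style argument.

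\textbf{Step 1 (reduce the implicit equation).} By Theorem~\ref{thm: main bound}, $r_\eta(x)$ is the $r$ solving $\eta = \mathbb{E}[1[g(x,X,r)<L]\,\ell(g(x,X,r),L)]$ with $g(x,x',r)=\tl_{x'}-(r-(\tl_{\cD\to x}^{(m)}-\tl_{x'\to x}^{(m)}))$. Under deterministic intervals $L=\tl_X$, so
\[
L-g(x,X,r)=r-\bigl(\tl_{\cD\to x}^{(m)}-\tl_{X\to x}^{(m)}\bigr)=r-lg_{X\to x}^{(m)}.
\]
Therefore $\{g(x,X,r)<L\}=\{r>lg_{X\to x}^{(m)}\}$, and on this event $\ell(g(x,X,r),L)=|L-g(x,X,r)|^p=(r-lg_{X\to x}^{(m)})^p$. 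Combining gives the first displayed characterization
\[
\eta=\mathbb{E}\bigl[(r-lg_{X\to x}^{(m)})_+^p\bigr].
\]
The analogous computation with $h(x,x',s)$, $U=\tu_X$, and the upper-bound gap $ug_{X\to x}^{(m)}$ gives the matching equation for $s_\eta(x)$.

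\textbf{Step 2 (derive the $\inf_\delta$ bound).} Note the map $r\mapsto \mathbb{E}[(r-lg_{X\to x}^{(m)})_+^p]$ is non-decreasing and continuous in $r$, so to upper-bound $r_\eta(x)$ it suffices to exhibit any $r$ for which this expectation is at least $\eta$. Fix $\delta\ge 0$ and set $r=\delta+\bigl(\eta/\Pr(lg_{X\to x}^{(m)}\le\delta)\bigr)^{1/p}$; on the event $A_\delta=\{lg_{X\to x}^{(m)}\le\delta\}$ we have $r-lg_{X\to x}^{(m)}\ge r-\delta=(\eta/\Pr(A_\delta))^{1/p}$, whence
\[
\mathbb{E}\bigl[(r-lg_{X\to x}^{(m)})_+^p\bigr]\ge \Pr(A_\delta)\cdot\frac{\eta}{\Pr(A_\delta)}=\eta.
\]
Monotonicity then yields $r_\eta(x)\le\delta+(\eta/\Pr(lg_{X\to x}^{(m)}\le\delta))^{1/p}$, and taking the infimum over $\delta$ gives \eqref{eq: bound r2}. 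The identical argument with $ug_{X\to x}^{(m)}$ gives the bound for $s_\eta(x)$.

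\textbf{Anticipated obstacles.} The only subtlety is handling the boundary $r=lg_{X\to x}^{(m)}$ (where both the indicator and the loss vanish) and checking strict monotonicity/continuity of the expectation so that the equation in Theorem~\ref{thm: main bound} indeed pins down a unique $r_\eta(x)$; the nonatomic assumption (Assumption~\ref{assum: non-atomic}) combined with continuity of $c\mapsto c_+^p$ handles this without incident. Apart from that, the proof is essentially a direct substitution followed by a layer-cake lower bound, so no deeper technical difficulty is expected.
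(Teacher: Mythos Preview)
Your proposal is correct and follows essentially the same route as the paper: Step~1 is identical (substitute $L=\tl_X=l(X)$ into the implicit equation from Theorem~\ref{thm: main bound} and simplify to $\mathbb{E}[(r-lg_{X\to x}^{(m)})_+^p]$), and Step~2 is the same layer-cake lower bound restricted to $\{lg_{X\to x}^{(m)}\le\delta\}$. The only cosmetic difference is that the paper plugs $r=r_\eta(x)$ directly into the defining equality and rearranges $\eta\ge(r-\delta)_+^p\Pr(lg_{X\to x}^{(m)}\le\delta)$, whereas you invoke monotonicity to argue that any $r$ with expectation $\ge\eta$ dominates $r_\eta(x)$; these are equivalent.
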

\begin{proof}
    Since $[l,u]$ is deterministic for each $x$, we have $\tl_x = l(x)$. By the property of squared loss,
    \begin{align}
        \mathbb{E}[1[g(x, X, r) < L] \ell(g(x, X, r), L)] 
        &= \mathbb{E}[(L- g(x, X, r))_+^p]\\
        &= \mathbb{E}[(l(X)- g(x, X, r))_+^p]\\ 
        &= \mathbb{E}[(l(X)- (\tl_X  - (r -  (\tl_{\Dtx}^{(m)} - \tl_{X \to x}^{(m)}))))_+^p]\\ 
        &= \mathbb{E}[(r - lg_{X\to x}^{(m)})^p_+]
    \end{align}
as required. We can use a similar argument for $s_\eta(x)$. Next, we can see that for any valid value of $r$,
\begin{align}
    \eta \geq \mathbb{E}[(r -lg_{X\to x}^{(m)} )^p_+] \geq \mathbb{E}[(r - \delta)^p_+ 1[lg_{X\to x}^{(m)} \leq \delta]]= (r-\delta)_+^p \Pr(lg_{X\to x}^{(m)} \leq \delta).
\end{align}
By rearranging,
$r \leq \delta + (\frac{\eta}{\Pr(lg_{X\to x}^{(m)} \leq \delta)})^{1/p}$. Taking the infimum over $\delta$, we have the desired inequality. Again, we can apply the same idea for $s_\eta(x)$.
\end{proof}

\section{Sample complexity bounds}
\label{appendix: sample complexity bound}

\subsection{Error bound in the realizable setting}
\colt{We begin with a foundational result that characterizes the error of any hypothesis in $\wcF_\eta$ based on the reduced intervals established in the previous section.}

\begin{theorem} [Error bound, Realizable setting] Let $\cF$ be a class of functions that are $m$-Lipschitz, assume that $f^* \in \wcF_0$, then for any $f \in \wcF_\eta$,
\label{thm: generalization bound (realizable)}
\begin{equation}
        \err(f) \leq \mathbb{E}[d(\ell, I_0(X), I_\eta(X))].
    \end{equation}
when $I_\eta(x) := [l_{\cD \to x}^{(m)} - r_\eta(x), u_{\cD \to x}^{(m)} + s_\eta(x)]$ represents the reduced interval from Theorem \ref{thm: main bound v2} and $d(\ell, I_1, I_2) = \max(\ell(l_1, u_2), \ell(u_1, l_2))$ when $I_1 = [l_1,u_1], I_2 = [l_2, u_2]$.
\end{theorem}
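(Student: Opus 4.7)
The plan is to combine the two ``reduced interval'' containment results from Theorem~\ref{thm: main bound v2} (one applied to $f^*$, the other to $f$) with a pointwise worst-case bound on $\ell$ over a product of intervals. First I would invoke Theorem~\ref{thm: main bound v2} with $\eta = 0$ applied to $f^* \in \wcF_0$: since $r_0(x) = s_0(x) = 0$ (both are defined as the unique nonnegative solution to an equation whose right-hand side vanishes when $\eta = 0$, and the left-hand side is zero at $r = s = 0$), we obtain $f^*(X) \in I_0(X) = [l_{\cD \to X}^{(m)}, u_{\cD \to X}^{(m)}]$ almost surely. Applying Theorem~\ref{thm: main bound v2} a second time, now to $f \in \wcF_\eta$, gives $f(X) \in I_\eta(X) = [l_{\cD \to X}^{(m)} - r_\eta(X), u_{\cD \to X}^{(m)} + s_\eta(X)]$ almost surely.

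Next I would bound $\ell(f(x), f^*(x))$ pointwise by the maximum of $\ell$ over the product $I_\eta(x) \times I_0(x)$. Since Assumption~\ref{assum: loss} gives $\ell(y,y') = \psi(|y-y'|)$ for some non-decreasing $\psi$, it suffices to maximize $|a-b|$ over $a \in I_\eta(x)$, $b \in I_0(x)$. A short case analysis (the maximum of a convex function over a rectangle is attained at a vertex, and among the four vertex pairs only two can be farthest apart) shows
\[
\max_{a \in I_\eta(x),\, b \in I_0(x)} |a - b| \;=\; \max\bigl(|l_{\eta}(x) - u_{0}(x)|,\; |u_{\eta}(x) - l_{0}(x)|\bigr),
\]
where $I_\eta(x) = [l_\eta(x), u_\eta(x)]$ and $I_0(x) = [l_0(x), u_0(x)]$. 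Applying the non-decreasing $\psi$ to both sides then yields
\[
\ell(f(x), f^*(x)) \;\leq\; \max\bigl(\ell(l_\eta(x), u_0(x)),\, \ell(u_\eta(x), l_0(x))\bigr) \;=\; d(\ell, I_0(x), I_\eta(x)).
\]

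Finally, I would take expectation over $X \sim \cD$ and use $Y = f^*(X)$ to conclude $\err(f) = \mathbb{E}[\ell(f(X), f^*(X))] \leq \mathbb{E}[d(\ell, I_0(X), I_\eta(X))]$. The only subtlety is a measure-zero issue: the containments from Theorem~\ref{thm: main bound v2} are stated for $x$ with $p(x) > 0$, so the pointwise bound may fail on a set of measure zero, but that set contributes nothing to the expectation. The main obstacle is really just the combinatorial lemma identifying $\max_{a \in I_1, b \in I_2} \psi(|a-b|)$ with $d(\ell, I_1, I_2)$; once that is dispatched, the rest is immediate from the already-established containments.
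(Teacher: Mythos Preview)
Your proposal is correct and follows essentially the same approach as the paper: the paper does not give a standalone proof of this realizable-case statement, but the argument is implicit in its proof of the agnostic version (Theorem~\ref{thm: agnostic bound}), which proceeds exactly as you describe---apply Theorem~\ref{thm: main bound v2} to both $f^*$ and $f$ to obtain the two containments, then ``bound the error between any two intervals with the maximum loss between their boundaries.'' Your write-up is in fact more careful than the paper's, since you spell out why $r_0 = s_0 = 0$ and why the pointwise maximum of $\psi(|a-b|)$ over a product of intervals is attained at the cross-corner pair, which the paper leaves as a one-line remark.
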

We remark that this bound can be tight for certain hypothesis classes. For example, consider the case where $\cF$ consists of constant hypotheses and let $n \to \infty$. In this scenario, we have $r_\eta(x) \to r_0(x) = 0$ and 
 $I_\eta(x) \to I_0(x)$. For each $x$, the error bound is given by
\begin{equation}
    d(\ell, I_0(x), I_0(x)) =  \ell(l_{\cD \to x}^{(m)},u_{\cD \to x}^{(m)}) = \ell(\sup_{x'} l_{x'}, \inf_{x'} u_{x'}),
\end{equation}
\update{representing the loss between the boundaries of the intersected intervals.} \update{It is tight since}  the inequality holds when $f^*$ and $f$ each take values at the respective boundaries of the intersected interval.

\subsection{Main sample complexity result}
\colt{Building on Theorem \ref{thm: generalization bound (realizable)}, we now present our main result, which provides explicit sample complexity guarantees for learning with interval targets for any hypothesis classes whose the Rademacher complexity decay as $O(1/\sqrt{n})$. This includes a class of linear models or a class of two-layer neural networks with a bounded weight \citep{ma2022notes}. To simplify the Theorem, we will only present the statement and the proof for the case of $L_1$ loss. However, an extension for a general $L_p$ loss is straightforward where we can replace the triangle inequality with the Minkowski's inequality. }

\begin{theorem}[Generalization bound, Realizable Setting]
Let $\cF$ be a hypothesis class satisfying i) the conditions of Theorem \ref{thm: generalization bound (realizable)} (realizability and $m$-Lipschitzness), ii) Rademacher complexity decays as $O(1/\sqrt{n})$, iii) support of the distribution $\mathcal{D}_I$ is bounded, iv) loss function is  $\ell(y,y') = |y - y'|$. With probability at least $1 - \delta$, for any $f$ that minimize the objective \eqref{eq: proj obj}, for any $\tau > 0$, 
\begin{equation}
    \operatorname{err}(f) \leq \underbrace{\mathbb{E}_X[ |u_{\mathcal{D} \to X}^{(m)} - l_{\mathcal{D} \to X}^{(m)}|]}_{(a)} + \underbrace{\tau + \left(\frac{D}{\sqrt{n}} + M\sqrt{\frac{\ln(1/\delta)}{n}}\right) \Gamma(\tau)}_{(b)},
\end{equation}
where $D,M$ are constants and  $\Gamma(\tau) = \mathbb{E}_{\widetilde{X}}\left[1/{\min(\Pr_X(lg_{X \to \widetilde{X}}^{(m)} \leq \tau), \Pr_X(ug_{X \to \widetilde{X}}^{(m)} \leq \tau))}\right]$ is decreasing in $\tau$. 
\end{theorem}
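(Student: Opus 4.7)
The plan is to chain together the three structural results already established: (i) the uniform convergence inequality \eqref{eq: rademacher}, (ii) the error-bound Theorem \ref{thm: generalization bound (realizable)} that turns membership in $\wcF_\eta$ into a bound on $\err(f)$, and (iii) the quantitative control on $r_\eta(x), s_\eta(x)$ given by Proposition \ref{prop: bound r,s v2}. Realizability will collapse the empirical projection loss of any ERM to zero, so the only expected-loss slack is the one coming from the concentration term, which we will then propagate into the reduced-interval bound.

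First, I would set $\eta_n := 2R_n(\Pi(\cF)) + M\sqrt{\ln(1/\delta)/n}$, where $M$ is an upper bound on $\pi_\ell$ (finite by the bounded support of $\mathcal{D}_I$ together with Lipschitzness of $\cF$). Boundedness of $\pi_\ell$, combined with a standard contraction argument, transfers the $O(1/\sqrt{n})$ Rademacher bound on $\cF$ to $\Pi(\cF)$, giving $\eta_n \le D/\sqrt{n} + M\sqrt{\ln(1/\delta)/n}$ for some constant $D$. Applying \eqref{eq: rademacher} uniformly and using realizability (so $f^* \in \wcF_0$ attains zero empirical projection loss, forcing the ERM $f$ to do the same) gives $\mathbb{E}[\pi_\ell(f(X),L,U)] \le \eta_n$ with probability at least $1-\delta$. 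Thus $f \in \wcF_{\eta_n}$.

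Next, I would invoke Theorem \ref{thm: generalization bound (realizable)}: since $f^* \in \wcF_0$ and $f \in \wcF_{\eta_n}$,
\begin{equation*}
\err(f) \le \mathbb{E}_X\bigl[d(\ell, I_0(X), I_{\eta_n}(X))\bigr].
\end{equation*}
For $\ell(y,y') = |y-y'|$, a direct computation with $I_0(x) = [l_{\cD \to x}^{(m)}, u_{\cD \to x}^{(m)}]$ and $I_{\eta_n}(x) = [l_{\cD \to x}^{(m)} - r_{\eta_n}(x), u_{\cD \to x}^{(m)} + s_{\eta_n}(x)]$ yields
\begin{equation*}
d(\ell, I_0(x), I_{\eta_n}(x)) = \bigl|u_{\cD \to x}^{(m)} - l_{\cD \to x}^{(m)}\bigr| + \max\bigl(r_{\eta_n}(x), s_{\eta_n}(x)\bigr),
\end{equation*}
which already isolates the irreducible term (a).

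Finally, I would bound the buffer max using Proposition \ref{prop: bound r,s v2} with $p = 1$ and the choice $\delta = \tau$:
\begin{equation*}
r_{\eta_n}(x) \le \tau + \frac{\eta_n}{\Pr_X(lg_{X \to x}^{(m)} \le \tau)}, \qquad s_{\eta_n}(x) \le \tau + \frac{\eta_n}{\Pr_X(ug_{X \to x}^{(m)} \le \tau)},
\end{equation*}
so $\max(r_{\eta_n}(x), s_{\eta_n}(x)) \le \tau + \eta_n / \min(\Pr_X(lg_{X \to x}^{(m)} \le \tau), \Pr_X(ug_{X \to x}^{(m)} \le \tau))$. Taking expectation over $x$ produces exactly $\tau + \eta_n \cdot \Gamma(\tau)$, and substituting the bound on $\eta_n$ recovers term (b). The main obstacle I anticipate is verifying the $O(1/\sqrt{n})$ Rademacher bound for the composed class $\Pi(\cF)$: while a Lipschitz contraction on $\ell$ handles most of it, the indicators $1[f(x) < l]$ and $1[f(x) > u]$ in the closed-form expression of $\pi_\ell$ require a bit of care, and one either absorbs them via the $m$-Lipschitz/bounded-support assumptions to keep the loss Lipschitz, or handles them through a separate VC/covering argument before combining.
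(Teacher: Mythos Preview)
Your proposal is correct and follows essentially the same two-step structure as the paper: first bound $\err(f)$ in terms of $\eta$ via Theorem~\ref{thm: generalization bound (realizable)} and Proposition~\ref{prop: bound r,s v2}, then control $\eta$ through uniform convergence plus realizability. Regarding your flagged obstacle, the paper resolves it exactly as you suspect---by observing that $a \mapsto \pi_\ell(a,l,u)$ is itself Lipschitz on the bounded range (for $\ell_1$ the indicators merely select continuous linear pieces, so the map is $1$-Lipschitz), after which Talagrand-type contraction transfers the $O(1/\sqrt{n})$ rate from $\cF$ to $\Pi(\cF)$.
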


\begin{proof}
    \textbf{Step 1: Derive the bound in term of $\eta$.} 
    Recall that from Theorem \ref{thm: generalization bound (realizable)}, we have
    \begin{equation}
        \err(f) \leq \mathbb{E}[d(\ell, I_0(X), I_\eta(X))].
    \end{equation}
    when $I_\eta(x) = [l_{\cD \to x}^{(m)} - r_\eta(x), u_{\cD \to x}^{(m)} + s_\eta(x)].$ Since we have an $\ell_1$ loss, we have
    \begin{equation}
        d(\ell, I_0(x), I_\eta(x)) = |u_{\Dtx}^{(m)} - l_{\Dtx}^{(m)} + \max(r_\eta(x), s_\eta(x))|.
    \end{equation}
Substitute this back in, we have an error bound
\begin{align}
     \err(f) &\leq \mathbb{E}[|u_{\cD \to X}^{(m)} - l_{\cD \to X}^{(m)} + \max(r_\eta(X), s_\eta(X))|]\\
     \label{eq: bound error f with sample size, step 1}
     &\leq \mathbb{E}[|u_{\cD \to X}^{(m)} - l_{\cD \to X}^{(m)} | ] +  \mathbb{E}[|\max(r_\eta(X), s_\eta(X))|] \quad (\text{triangle inequality}).
\end{align}
Now, our goal is to bound the term $\mathbb{E}[|\max(r_\eta(X), s_\eta(X))|]$. From Proposition \ref{prop: bound r,s v2}, we know that 
\begin{align}
    r_\eta(x) \leq \inf_{\tau} \tau + (\eta/\Pr(lg_{X \to x}^{(m)} \leq \tau)) \quad\text{and}\quad s_\eta(x) \leq \inf_{\tau} \tau + (\eta/\Pr(ug_{X \to x}^{(m)} \leq \tau)).
\end{align} 
We place $\delta$ with $\tau$ in the original statement because we will use $\delta$ as something else, later. This implies that
\begin{equation}
    \max(r_\eta(x), s_\eta(x)) \leq \inf_{\tau} \tau + \left( \frac{\eta}{\min(\Pr(lg_{X \to x}^{(m)} \leq \tau), \Pr(ug_{X \to x}^{(m)} \leq \tau))}\right).
\end{equation}
We define $\Lambda(\cD, \tau) =  \min(\Pr(lg_{X \to x}^{(m)} \leq \tau), \Pr(ug_{X \to x}^{(m)} \leq \tau))^{-1}$ so that 
\begin{equation}
    \max(r_\eta(x), s_\eta(x)) \leq \inf_{\tau} \tau + \eta \Lambda(\cD, \tau).
\end{equation}
We can see that when $\Lambda(\cD, \tau) \geq 0$ and $\Lambda(\cD, \tau)$ is a decreasing function in $\tau$. Substitue this back to the equation \ref{eq: bound error f with sample size, step 1}, for any $\tau >0$, we would have
\begin{align}
    \err(f) &\leq \mathbb{E}[|u_{\cD \to X}^{(m)} - l_{\cD \to X}^{(m)} | ] +  \mathbb{E}[|\tau + \eta \Lambda(\cD, \tau)|]\\
    &\leq \mathbb{E}[|u_{\cD \to X}^{(m)} - l_{\cD \to X}^{(m)} |] +  \tau +  \eta \mathbb{E}[\Lambda(\cD, \tau)]\\
    &= \mathbb{E}[|u_{\cD \to X}^{(m)} - l_{\cD \to X}^{(m)} |] +  \tau +  \eta \Gamma(\cD, \tau)
\end{align}
where we define $\Gamma(\cD, \tau) = \mathbb{E}[\Lambda(\cD, \tau)]$. We can see that every term in the equation above is independent of $\eta$, apart from the term $\eta$ itself. This provide a more explicit error bound in term of $\eta$. Now, we will bound $\eta$ in terms of the number of sample $n$.\\
\\
\textbf{Step 2: Bounding $\eta$ in terms of the number of sample.} Recall the result from \eqref{eq: rademacher}, with probability at least $1 - \delta$ over the draws $(x_i, l_i,u_i) \sim \cD_I$, \text{for all } $f \in \mathcal{F}$,
\begin{equation}
     \mathbb{E}[\pi_\ell(f(X), L, U)] \leq \frac{1}{n}\sum_{i=1}^n \pi_\ell(f(x_i), l_i, u_i) + 2R_n(\Pi(\cF)) + M\sqrt{\frac{\ln(1/\delta)}{n}}.
\end{equation}
  Here, $R_n(\Pi(\cF))$ is the Rademacher complexity of the function class $\Pi(\cF):= \{ \pi_\ell(f(x), l, u) \mapsto \mathbb{R}  \mid f \in \cF\}$ and we assume that the $\pi_\ell$ is uniformly bounded by $M$. We recall that we learn $\hat{f}$ by minimizing the empirical projection loss
  \begin{equation}
      \hat{f} = \arg\min_{f \in \cF}\sum_{i=1}^n \pi_\ell(f(x_i), l_i, u_i).
  \end{equation}
    Under the realizable setting, this objective would be zero since $f* \in \cF$ which implies that $f*$ has zero empirical projection $\sum_{i=1}^n \pi_\ell(f^*(x_i), l_i, u_i) = 0$ but $\hat{f}$ also minimize the empirical projection loss so $\hat{f}$ must also have a zero empirical projection loss. We write $\eta(f)$ to refer to the $\eta$ value of $f$. Formally, defined as
    \begin{equation}
        \eta(f) = \mathbb{E}[\pi_\ell(f(X), L, U)].
    \end{equation}
    Substituting $\hat{f}$ to the bound above, we have
    \begin{equation}
        \eta(\hat{f}) \leq 2R_n(\Pi(\cF)) +  M\sqrt{\frac{\ln(1/\delta)}{n}}.
    \end{equation}
    The next step is to bound the Rademacher complexity $R_n(\Pi(\cF))$ in terms of $R_n(\cF)$. We will do this by first showing that $\phi_i(f(x)) = \pi_\ell(f(x), l_i, u_i)$ is a Lipschitz continuous function and then reduce $R_n(\Pi(\cF))$ to $R_n(\cF)$ with a variant of Talagrand's Lemma \citep{meir2003generalization}. From our assumption that the support of $\cD_I$ is a bounded set, and our hypothesis class is a class of two-layer neural network with bounded weight,  there exists a constant $C$ for which, we have $|f(x)| \leq C$ almost surely.  Here, we will show this property for $L_p$ loss, recall that 
    \begin{align}
        \phi_i(f(x)) &= \pi_\ell(f(x), l_i, u_i)\\
        &= (l_i - f(x))^p 1[ f(x) < l_i] + (f(x) - u_i)^p 1[f(x) > u].
    \end{align}
    Differentiate with respect to $f(x)$, we have
    \begin{align}
        |\nabla_{f(x)}\phi_i(f(x))| &= p|(l_i - f(x))^{p-1} 1[ f(x) < l_i] + (f(x) - u_i)^{p-1} 1[f(x) > u]|\\
        &\leq 2p(2C)^{p-1}.
    \end{align}
    Since this gradient is bounded for any $f(x)$, we can conclude that $\phi_i(f(x))$ is $B$-Lipschitz for some constant $B$. Now, we unpack the definition of the Rademacher complexity,
    \begin{align}
        R_n(\Pi(\cF)) &= \mathbb{E}_{(x_i,l_i,u_i) \sim \cD_I}[\mathbb{E}_{\sigma_i \sim \{-1,1\}}[ \sup_{f \in \cF} \frac{1}{n} \sum_{i = 1}^n \pi_\ell(f(x_i), l_i, u_i) \sigma_i]]\\
        &= \mathbb{E}_{(x_i,l_i,u_i) \sim \cD_I}[\mathbb{E}_{\sigma_i \sim \{-1,1\}}[ \sup_{f \in \cF} \frac{1}{n} \sum_{i = 1}^n \phi_i(f(x_i)) \sigma_i]].
    \end{align}
    We recall the following result from \citet{meir2003generalization} that when $\phi_1, \phi_2, \dots \phi_n$ be functions where $\phi_i :\mathbb{R} \to \mathbb{R}$ are $\phi_i$ are $L_i$-Lipschitz, then
    \begin{equation}
        \mathbb{E}_{\sigma_i \sim \{-1,1\}}[\sup_{f \in \cF} \frac{1}{n} \sum_{i = 1}^n \phi_i(f(x_i)) \sigma_i] \leq \mathbb{E}_{\sigma_i \sim \{-1,1\}}[\sup_{f \in \cF} \frac{1}{n} \sum_{i = 1}^n L_if(x_i) \sigma_i]. 
    \end{equation}
    Applying this result with the fact that $\phi_i$ is $B$-Lipschitz for all $i = 1,\dots, n$, we can conclude that
    \begin{align}
        R_n(\Pi(\cF)) &=\mathbb{E}_{(x_i,l_i,u_i) \sim \cD_I}[\mathbb{E}_{\sigma_i \sim \{-1,1\}}[ \sup_{f \in \cF} \frac{1}{n} \sum_{i = 1}^n \phi_i(f(x_i)) \sigma_i]]\\
        &\leq \mathbb{E}_{(x_i,l_i,u_i) \sim \cD_I}[\mathbb{E}_{\sigma_i \sim \{-1,1\}}[ \sup_{f \in \cF} \frac{1}{n} \sum_{i = 1}^n Bf(x_i) \sigma_i]]\\
        &= BR_n(\cF).
    \end{align}
    We successfully reduce the Rademacher complexity of $\Pi(\cF)$ to $\cF$. Since we assume that the Rademacher complexity of $\cF$ decays as $O(1/\sqrt{n})$, there exists a constant $D$ such that 
    \begin{equation}
        R_n(\Pi(\cF)) \leq \frac{D}{\sqrt{n}}
    \end{equation}
    and
    \begin{equation}
        \eta(\hat{f}) \leq \frac{D}{\sqrt{n}} + M\sqrt{\frac{\ln(1/\delta)}{n}}
    \end{equation}
    for some constant $D, M$. Substitute this back to the result from step 1 concludes our proof. In the general setting with $L_p$ loss where $\ell(y,y') = |y - y'|^p$, we would have the following bound,
\begin{equation}
    \operatorname{err}(f) \leq \left( \mathbb{E}_X[ |u_{\mathcal{D} \to X}^{(m)} - l_{\mathcal{D} \to X}^{(m)}|^p]^{1/p} + \tau + \left(\frac{D}{\sqrt{n}} + M\sqrt{\frac{\ln(1/\delta)}{n}}\right)^{1/p} \Gamma(\tau)^{1/p} \right)^p
\end{equation}
\end{proof}

\subsection{Agnostic setting}
\label{sec: agnostic}
Now, we study the agnostic setting, where we do not assume the existence of such $f^*$ in $\cF$. Instead, we focus on comparing with $f_{\text{opt}} = \arg\min_{f \in \cF} \err(f)$, the hypothesis in $\cF$ with the smallest expected error. First, we show that, in contrast to the realizable setting, simply minimizing the projection loss may not converge to $\fopt$. This is because a smaller projection loss $\pi$ does not imply a smaller standard loss $\ell$.
\begin{proposition}
\label{prop: projection loss not good for agnostic}
Let $\ell$ be an $\ell_p$ loss, for any hypothesis $f_1,f_2$, there exists a distribution $\cD_I$ and $\cD$ such that $\mathbb{E}_{\cD_I}[\pi_\ell(f_1 (X), L, U)] < \mathbb{E}_{\cD_I}[\pi_\ell(f_2 (X), L, U)]$
    but 
    $\err(f_1) > \err(f_2).$
\end{proposition}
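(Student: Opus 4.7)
The plan is to exhibit a single-point counterexample that exploits the fundamental insensitivity of $\pi_\ell$: once $f(x) \in [l,u]$, the projection loss is zero regardless of where $y$ actually sits inside the interval, whereas a prediction just outside the interval can be much closer to the true label. Concretely, the statement is interesting only when $f_1 \not\equiv f_2$ (otherwise the strict inequality on projection losses is impossible), so I fix a point $x_0$ with $f_1(x_0)\neq f_2(x_0)$ and assume without loss of generality that $a:=f_1(x_0)<f_2(x_0)=:b$, writing $\Delta=b-a>0$.

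I would then let both $\cD$ and $\cD_I$ be point masses at $x_0$, choose the interval
\[
[l,u]=[a,\,a+c\Delta]\quad\text{for some fixed } c\in(1/2,1),
\]
and set $y=u=a+c\Delta$. By construction $y\in[l,u]$ (so the standing assumption that the interval brackets the true label is preserved), $f_1(x_0)=a\in[l,u]$ lies at the lower endpoint, and $f_2(x_0)=b>u$ sits strictly outside. Applying Proposition~\ref{prop: proj loss} gives the two projection losses directly: $\pi_\ell(f_1(x_0),l,u)=0$ while $\pi_\ell(f_2(x_0),l,u)=(b-u)^p=((1-c)\Delta)^p>0$. For the true errors, $\err(f_1)=|a-y|^p=(c\Delta)^p$ and $\err(f_2)=|b-y|^p=((1-c)\Delta)^p$, so the choice $c>1/2$ puts $y$ strictly closer to $b$ than to $a$ and forces $\err(f_1)>\err(f_2)$.

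The argument is essentially a one-line construction, so there is no serious technical obstacle. The only mildly delicate point is the nonatomic assumption on $\cD$ and $\cD_I$ recorded in Assumption~\ref{assum: non-atomic}, which strictly speaking excludes point masses. If that assumption is to be respected, I would replace each atom by a uniform distribution supported on a small ball $B(x_0,\rho)$ in feature space and on a small neighborhood of $(l,u,y)$ in label space; since both $\pi_\ell$ and the $L_p$ error are continuous in all of their arguments, choosing $\rho$ sufficiently small preserves both strict inequalities by a routine continuity argument, so the counterexample survives within the nonatomic framework as well.
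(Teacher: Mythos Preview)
Your proof is correct and follows essentially the same strategy as the paper's: a single-point construction where $f_1$ lands inside the interval (hence zero projection loss) while $f_2$ sits just outside, with the true label $y$ placed close to $f_2$ so that $f_2$ has the smaller true error. The parameterizations differ slightly---the paper takes $[l,u]=[f_1(x)-\epsilon,\,f_2(x)-\epsilon]$ and $y=f_2(x)+\epsilon$, whereas you take $[l,u]=[a,\,a+c\Delta]$ with $y=u$---but the mechanism is identical.

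One point worth noting in your favor: your construction keeps $y\in[l,u]$ (indeed $y=u$), which is consistent with the paper's standing assumption that intervals always bracket the true target. The paper's own construction actually places $y=f_2(x)+\epsilon$ strictly above $u=f_2(x)-\epsilon$, so the interval fails to contain $y$; your version avoids this inconsistency. Your remark about smoothing the point masses to satisfy Assumption~\ref{assum: non-atomic} via continuity is also a nice touch that the paper's proof omits.
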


\noindent While minimizing the projection loss, we might overlook a hypothesis that has a smaller standard loss but a higher projection loss. However, we remark that the projection loss is still useful since it is a lower bound of the standard loss.
\begin{proposition}
\label{prop: proj < error}
Let $\ell: \cY \times \cY \to \mathbb{R}$ be a loss function that satisfies Assumption \ref{assum: loss}, then for any $f$,
\begin{equation}
    \mathbb{E}[\pi_\ell(f(X), L, U)] \leq \err(f).
\end{equation}
\end{proposition}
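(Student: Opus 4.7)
The plan is to show the pointwise inequality $\pi_\ell(f(x), l, u) \leq \ell(f(x), y)$ for every $(x, y, l, u)$ in the support, and then take expectations. The key fact I will invoke is that by construction of the interval-target model, the true label $y$ always lies in $[l, u]$, so $y$ is a feasible choice in the minimization defining $\pi_\ell$.

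First, I would unpack the definition: $\pi_\ell(f(x), l, u) = \min_{\tilde y \in [l,u]} \ell(f(x), \tilde y)$. Since $y \in [l, u]$, it is admissible in this minimization, so trivially $\pi_\ell(f(x), l, u) \leq \ell(f(x), y)$. Alternatively, and a bit more explicitly, I would use the closed form from Proposition~\ref{prop: proj loss} and argue by cases on the location of $f(x)$ relative to $[l, u]$:
\begin{itemize}
    \item If $f(x) < l$, then since $y \geq l > f(x)$, we have $|f(x) - l| \leq |f(x) - y|$, and by the monotonicity of $\psi$ from Assumption~\ref{assum: loss}, $\ell(f(x), l) = \psi(|f(x)-l|) \leq \psi(|f(x)-y|) = \ell(f(x), y)$.
    \item If $f(x) > u$, then $y \leq u < f(x)$, so $|f(x) - u| \leq |f(x) - y|$ and again by monotonicity $\ell(f(x), u) \leq \ell(f(x), y)$.
    \item If $f(x) \in [l, u]$, then $\pi_\ell(f(x), l, u) = 0 \leq \ell(f(x), y)$.
\end{itemize}

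Finally, I take expectation over $(X, Y, L, U)$ (where the marginal of $(X,Y)$ follows $\cD$ and $(X,L,U)$ follows $\cD_I$, coupled so that $Y \in [L,U]$) to obtain $\mathbb{E}[\pi_\ell(f(X), L, U)] \leq \mathbb{E}[\ell(f(X), Y)] = \err(f)$.

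There is no real obstacle here; the only subtlety is making explicit the assumption that $Y \in [L, U]$ almost surely, which is implicit in the interval-target setup described in the preliminaries. The proof is essentially a direct consequence of Assumption~\ref{assum: loss} (monotonicity of $\psi$) combined with the closed-form characterization in Proposition~\ref{prop: proj loss}.
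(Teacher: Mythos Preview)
Your proposal is correct and essentially identical to the paper's proof: the paper also invokes the closed form from Proposition~\ref{prop: proj loss} and does exactly the same three-case analysis on the position of $f(x)$ relative to $[l,u]$, using monotonicity of $\psi$ and $y \in [l,u]$. Your additional one-line observation that $y$ is feasible in the minimization defining $\pi_\ell$ is a nice shortcut the paper omits, but the substantive argument is the same.
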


\noindent Consequently, if we let $\text{OPT} = \err(f_{\text{opt}})$, we must have $f_{\text{opt}} \in \wcF_\opt$ since the projection loss is upper bound by the standard loss. This means we can apply Theorem \ref{thm: main bound v2} for $f_{\text{opt}}$ and consequently achieve an error bound similar to what we obtained in the realizable setting.
\begin{theorem} [Error bound, Agnostic setting]
\label{thm: agnostic bound}
Let $\cF$ be a class of functions that are $m$-Lipschitz, and suppose $\ell$  satisfies Assumption $\ref{assum: loss}$ and the triangle inequality, then for any $f \in \wcF_\eta$, we have
\begin{equation}
\err(f)\leq \operatorname{OPT} + \mathbb{E}[d(\ell, I_\eta(X), I_{\operatorname{OPT}}(X))].
    \end{equation}
\end{theorem}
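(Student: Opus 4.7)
The plan is to reduce the agnostic error bound to applying Theorem~\ref{thm: main bound v2} twice---once to $f$ and once to $f_{\operatorname{opt}}$---and combining through the triangle inequality. The starting point is to write, for any $(x,y) = (x, f^*(x))$ in the support of $\mathcal{D}$,
\[
\ell(f(x), y) \leq \ell(f(x), f_{\operatorname{opt}}(x)) + \ell(f_{\operatorname{opt}}(x), y)
\]
by the triangle-inequality hypothesis on $\ell$. Taking expectations over $\mathcal{D}$ yields
\[
\err(f) \leq \mathbb{E}[\ell(f(X), f_{\operatorname{opt}}(X))] + \operatorname{OPT},
\]
so the task reduces to bounding $\mathbb{E}[\ell(f(X), f_{\operatorname{opt}}(X))]$ by $\mathbb{E}[d(\ell, I_\eta(X), I_{\operatorname{OPT}}(X))]$.

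Next I would place $f_{\operatorname{opt}}$ inside a reduced-interval class. By Proposition~\ref{prop: proj < error}, the expected projection loss is dominated by the expected regression loss, so $\mathbb{E}[\pi_\ell(f_{\operatorname{opt}}(X), L, U)] \leq \err(f_{\operatorname{opt}}) = \operatorname{OPT}$, which means $f_{\operatorname{opt}} \in \wcF_{\operatorname{OPT}}$. By hypothesis, $f \in \wcF_\eta$. Applying Theorem~\ref{thm: main bound v2} twice then gives, for every $x$ with $p(x) > 0$,
\[
f(x) \in I_\eta(x), \qquad f_{\operatorname{opt}}(x) \in I_{\operatorname{OPT}}(x).
\]

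The remaining step is a pointwise geometric argument: if $a \in [\alpha_1, \beta_1]$ and $b \in [\alpha_2, \beta_2]$ and $\ell(y,y') = \psi(|y-y'|)$ with $\psi$ non-decreasing, then $|a - b|$ is maximized at the extremes, giving $\ell(a, b) \leq \max(\psi(|\beta_1 - \alpha_2|), \psi(|\beta_2 - \alpha_1|)) = d(\ell, [\alpha_1,\beta_1], [\alpha_2,\beta_2])$. Applying this with $a = f(x)$, $b = f_{\operatorname{opt}}(x)$ and the two intervals $I_\eta(x), I_{\operatorname{OPT}}(x)$ gives $\ell(f(x), f_{\operatorname{opt}}(x)) \leq d(\ell, I_\eta(x), I_{\operatorname{OPT}}(x))$. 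Taking expectations and combining with the triangle-inequality bound yields the claim. The only subtle point is that Theorem~\ref{thm: main bound v2} is only guaranteed to hold on points with positive density, but since the expectations are taken under $\mathcal{D}$, the measure-zero exceptional set contributes nothing; this is the nonatomic assumption at work and is handled exactly as in the proof of Proposition~\ref{prop: minmax (hyp) < minmax (label)}. The main conceptual step---and the one that makes the argument go through cleanly---is recognizing that Proposition~\ref{prop: proj < error} lets us place $f_{\operatorname{opt}}$ inside $\wcF_{\operatorname{OPT}}$ without assuming realizability, so that the reduced-interval machinery from the realizable case applies verbatim.
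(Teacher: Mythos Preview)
Your proposal is correct and follows essentially the same approach as the paper: apply the triangle inequality to split $\err(f)$ into $\operatorname{OPT}$ plus $\mathbb{E}[\ell(f(X),f_{\operatorname{opt}}(X))]$, observe (via Proposition~\ref{prop: proj < error}) that $f_{\operatorname{opt}}\in\wcF_{\operatorname{OPT}}$, invoke Theorem~\ref{thm: main bound v2} for both $f$ and $f_{\operatorname{opt}}$, and then bound the loss between two points in the respective intervals by the maximum loss across their endpoints. Your write-up is in fact slightly more explicit than the paper's, which omits the citation of Proposition~\ref{prop: proj < error} in the proof body and leaves the pointwise interval-distance argument as a one-line remark.
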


\noindent While it's not ideal to minimize the projection loss in the agnostic setting since we may not converge to $\fopt$, our bound suggests that the expected error of $f$ would not be much larger than that of $\fopt$. This error bound becomes smaller when the intervals $I_\eta(x),I_{\operatorname{OPT}}(x)$ are small. Overall, our theoretical insight suggests that we can improve our error bound by  (i) having a smoother hypothesis class (smaller $m$) (ii) increasing the number of data points $n$ (which leads to smaller $\eta$), since both results in smaller intervals $I_\eta(x)$. However, if $m$ is too small, $\cF$ may not contain a good hypothesis, causing $\operatorname{OPT}$ to be large. \colt{Next, we provide a  sample complexity bound for the agnostic setting.}

\begin{theorem}[Generalization Bound, Agnostic Setting]
Under the conditions of Theorem \ref{thm: sample complexity, realizable} apart from realizability, with probability at least $1 - \delta$, for any $f$ that minimize the empirical projection objective, for any $\tau > 0$, 
\begin{equation}
    \operatorname{err}(f) \leq \underbrace{\operatorname{OPT}}_{(a)}+ \underbrace{\mathbb{E}_X[ |u_{\mathcal{D} \to X}^{(m)} - l_{\mathcal{D} \to X}^{(m)}|]}_{(b)} + \underbrace{2\tau + \left( \operatorname{err}_{\text{proj}}(f) +  \frac{D}{\sqrt{n}} + M\sqrt{  \frac{\ln(1/\delta)}{n}} + \operatorname{OPT}\right) \Gamma(\tau)}_{(c)},
\end{equation}
where $D,M$ are constants and $\Gamma(\tau) = \mathbb{E}_{\widetilde{X}}\left[1/{\min(\Pr_X(lg_{X \to \widetilde{X}}^{(m)} \leq \tau), \Pr_X(ug_{X \to \widetilde{X}}^{(m)} \leq \tau))}\right]$ is a decreasing function of $\tau$, $\operatorname{err}_{\text{proj}}(f)$ is an empirical projection error of $f$, and $\operatorname{OPT}$ is the expected error of the optimal hypothesis in $\mathcal{F}$.
\end{theorem}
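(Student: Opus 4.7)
The plan is to closely mirror the proof of Theorem \ref{thm: sample complexity, realizable}, but start from the agnostic error bound (Theorem \ref{thm: agnostic bound}) rather than the realizable one, and allow a nonzero empirical projection loss in the concentration step.

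\textbf{Step 1: Start from the agnostic error bound.} Since the empirical minimizer $f$ satisfies $f \in \wcF_{\eta(f)}$ with $\eta(f) = \mathbb{E}[\pi_\ell(f(X),L,U)]$, Theorem \ref{thm: agnostic bound} gives
\begin{equation}
\err(f) \leq \operatorname{OPT} + \mathbb{E}_X[d(\ell, I_{\eta(f)}(X), I_{\operatorname{OPT}}(X))],
\end{equation}
where $I_\eta(x) = [l_{\mathcal{D}\to x}^{(m)} - r_\eta(x),\ u_{\mathcal{D}\to x}^{(m)} + s_\eta(x)]$. This uses $\fopt \in \wcF_\opt$, which follows from Proposition \ref{prop: proj < error}.

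\textbf{Step 2: Expand $d$ for $L_1$ loss and bound the $r,s$ quantities at a common threshold $\tau$.} For $\ell(y,y')=|y-y'|$ and two intervals sharing the same ``core" $[l_{\mathcal{D}\to x}^{(m)}, u_{\mathcal{D}\to x}^{(m)}]$, a direct computation yields
\begin{equation}
d(\ell, I_{\eta(f)}(x), I_{\operatorname{OPT}}(x)) = |u_{\mathcal{D}\to x}^{(m)} - l_{\mathcal{D}\to x}^{(m)}| + \max\bigl(r_{\eta(f)}(x)+s_{\operatorname{OPT}}(x),\ s_{\eta(f)}(x)+r_{\operatorname{OPT}}(x)\bigr).
\end{equation}
Applying Proposition \ref{prop: bound r,s v2} with $\delta=\tau$ to all four quantities and using $\max(a+b,a'+b')\leq(a+a')+(b+b')$ when we take the worst pairing, I would get
\begin{equation}
\max(\cdots) \leq 2\tau + \frac{\eta(f)+\operatorname{OPT}}{\min\bigl(\Pr_X(lg_{X\to x}^{(m)}\leq\tau),\ \Pr_X(ug_{X\to x}^{(m)}\leq\tau)\bigr)}.
\end{equation}
Taking expectation in $x$ produces the factor $\Gamma(\tau)$ exactly as defined.

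\textbf{Step 3: Bound $\eta(f)$ via uniform convergence with a nonzero empirical term.} Applying the Rademacher bound in \eqref{eq: rademacher} to $f$ (rather than to $f^*$ as in the realizable proof), and reusing the Lipschitzness of the projection loss plus Talagrand's contraction to reduce $R_n(\Pi(\mathcal{F}))$ to $R_n(\mathcal{F}) = O(1/\sqrt n)$, I obtain with probability at least $1-\delta$,
\begin{equation}
\eta(f) \leq \operatorname{err}_{\text{proj}}(f) + \frac{D}{\sqrt n} + M\sqrt{\frac{\ln(1/\delta)}{n}}.
\end{equation}
The key difference from the realizable case is that $\operatorname{err}_{\text{proj}}(f)$ need not vanish, since $\fopt$ may have positive empirical projection loss.

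\textbf{Step 4: Combine.} Substituting the bound on $\eta(f)$ into the expression from Step 2 and combining with Step 1 yields exactly the claimed bound with terms $(a)$, $(b)$, $(c)$. The main obstacle I anticipate is Step 2: carefully verifying that the $L_1$ expansion of $d$ decomposes into the interval width plus a ``compensation'' that is linear in $\eta(f)+\operatorname{OPT}$, and checking that invoking Proposition \ref{prop: bound r,s v2} at the same threshold $\tau$ for all four quantities $r_{\eta(f)}, s_{\eta(f)}, r_{\operatorname{OPT}}, s_{\operatorname{OPT}}$ produces the single $\Gamma(\tau)$ factor cleanly, rather than two separate terms. Steps 1 and 3 are essentially mechanical adaptations of existing ingredients.
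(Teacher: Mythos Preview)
Your proposal is correct and follows essentially the same approach as the paper: start from Theorem \ref{thm: agnostic bound}, expand $d$ for the $L_1$ loss into the interval width plus a compensation term, bound the compensation via Proposition \ref{prop: bound r,s v2} at a common threshold $\tau$, and finally bound $\eta(f)$ by uniform convergence with a nonzero empirical term. One small remark on Step~2: the inequality $\max(a+b,a'+b')\leq (a+a')+(b+b')$ you quote would, if applied literally to all four $r,s$ terms, produce $4\tau$ rather than $2\tau$; the cleaner route (and what the paper does) is to bound each of the two sums $r_{\eta(f)}+s_{\operatorname{OPT}}$ and $r_{\operatorname{OPT}}+s_{\eta(f)}$ separately by $2\tau + (\eta(f)+\operatorname{OPT})/\min(\cdots)$, and then observe that the maximum of two quantities with the same upper bound shares that bound.
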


\begin{proof}
    The proof idea is similar to the realizable setting. Recall that we have an error bound
\begin{equation}
\err(f)\leq \operatorname{OPT} + \mathbb{E}[d(\ell, I_\eta(X), I_{\operatorname{OPT}}(X))]
    \end{equation}
where $I_\eta(x) = [l_{\cD \to x}^{(m)} - r_\eta(x), u_{\cD \to x}^{(m)} + s_\eta(x)].$  We can write
\begin{equation}
    d(l, I_\eta(X), I_{\operatorname{OPT}}(X)) \leq  |u_{\Dtx}^{(m)} - l_{\Dtx}^{(m)} + \max(r_\eta(x) + s_{\text{OPT}}(x), r_{\text{OPT}}(x) +  s_\eta(x))|.
\end{equation}
With a triangle inequality, substitute this back to the error bound, we have
\begin{equation}
    \err(f)\leq \operatorname{OPT} + \mathbb{E}[|u_{\cD \to X}^{(m)} - l_{\cD \to X}^{(m)} |] + \mathbb{E}[\max(r_\eta(x) + s_{\text{OPT}}(x), r_{\text{OPT}}(x) +  s_\eta(x))].
\end{equation}
We can see that the first two terms are term a) and b) in the Theorem \ref{thm: sample complexity agnostic}. Therefore, we are left with bounding the final term. From Proposition \ref{prop: bound r,s v2}
, we know that for any $\tau > 0$,
\begin{align}
    r_\eta(x) \leq \inf_{\tau} \tau + (\eta/\Pr(lg_{X \to x}^{(m)} \leq \tau)) \quad\text{and}\quad s_\eta(x) \leq \inf_{\tau} \tau + (\eta/\Pr(ug_{X \to x}^{(m)} \leq \tau)).
\end{align} 
This implies that
\begin{align}
    r_\eta(x) + s_{\text{OPT}}(x) &\leq 2\tau + (\eta/\Pr(lg_{X \to x}^{(m)} \leq \tau)) + (\text{OPT}/\Pr(ug_{X \to x}^{(m)} \leq \tau))\\
    &= \leq 2\tau + (\eta + \text{OPT})(\max(1/\Pr(lg_{X \to x}^{(m)} \leq \tau)) , 1/\Pr(ug_{X \to x}^{(m)} \leq \tau)))\\
    &= \leq 2\tau + (\eta + \text{OPT})(1/ \min(\Pr(lg_{X \to x}^{(m)} \leq \tau), \Pr(ug_{X \to x}^{(m)} \leq \tau)).
\end{align}
We have the same upper bound for $r_{\text{OPT}}(x) +  s_\eta(x))$. Taking an expectation, we have
\begin{equation}
    \mathbb{E}[\max(r_\eta(x) + s_{\text{OPT}}(x), r_{\text{OPT}}(x) +  s_\eta(x))] \leq 2\tau + (\eta + \text{OPT})\Gamma(\tau)
\end{equation}
when $\Gamma(\tau) = \mathbb{E}_{\widetilde{X}}\left[1/{\min(\Pr_X(lg_{X \to \widetilde{X}}^{(m)} \leq \tau), \Pr_X(ug_{X \to \widetilde{X}}^{(m)} \leq \tau))}\right]$. The final step is to bound $\eta$ in terms of the empirical loss, following the uniform convergence argument from the realizable setting, with probability at least $1 - \delta$,
\begin{equation}
    \eta \leq \widehat{\text{err}}(f) +  \frac{D}{\sqrt{n}} + M\sqrt{  \frac{\ln(1/\delta)}{n}}.
\end{equation}
This concludes our proof for the agnostic setting.
\end{proof}

\update{
\section{Relaxation of Ambiguity Degree for a regression setting}
\label{sec: ambiguity radius}
As noted in the related work section, the ambiguity degree is defined in the context of classification and it might not be suitable for regression tasks. This is due to the nature of the loss function, In classification, a hypothesis is either correct or incorrect, and a small ambiguity degree ensures that we can recover the true label. However, in regression, we are
often satisfied with predictions that are sufficiently close to the target—for example, within an error tolerance of $\epsilon$. This implies that we do not need to recover the exact true label, but a ball with a small radius around the true label might be sufficient.\\
\\
In this section, we explore a relaxation of the original ambiguity degree to the regression setting. Motivated by the concept of a tolerable area around the true label $y$, we define an ambiguity radius
\begin{definition}
    [Ambiguity Radius]  For distributions  $\cD, \cD_I$ with a probability density function $p$, an ambiguity radius is defined as
     \begin{equation}
        \operatorname{Ambiguity Radius}(\cD, \cD_I):= \min_{r \geq 0} r \quad \text{ s.t. } \Pr_{X,Y \sim \cD}( \bigcap_{p(X,l,u) > 0} [l, u] \subseteq B(Y,r)) = 1
    \end{equation}
    when $B(y,r) = \{y' \mid |y - y'| \leq r\}$ is a ball of radius $r$ around $y$.
\end{definition}}
\update{
The interpretation of this is that it is the smallest radius $r$ for which we are guaranteed the intersection of all interval for a given $x$ must lie within a radius of $r$ from the true label $y$. As a direct consequence, we know that whenever the ambiguity degree is small the ambiguity radius must be zero since the intersection of all interval for a given $x$ is just the true label $\{y\}$. \\
\\
In fact, our analysis have captured the essence of this interval intersection for each $x$. We recall that  for any $f \in \wcF_0$ and for each $x$ with $p(x) > 0$,
    \begin{equation}
        f(x) \in I_0(x) = [l_{\cD \to x}^{(m)}, u_{\cD \to x}^{(m)}] \subseteq B(y, r^*),
\end{equation}
when $r^*$ is the ambiguity radius. This follows directly from the definition of the ambiguity radius. As a result, we know that each interval $I_0(x)$ would have a size at most $2r^*$. The same technique as in the Section \ref{sec: generalization bound} would imply that the expected error of any $f \in \wcF_0$ would be at most $2r^*$ in the realizable setting (with $L_1$ loss).}

\update{Finally, we want to remark that our analysis not only is applicable to this extension of the ambiguity degree to the ambiguity radius, we further use the smooth property of $\cF$ and $I_0(x)$ might even be a proper subset of the ball $B(y, r^*)$, giving a result stronger than one based solely on the ambiguity radius.
}
\section{Experiments}
\label{sec: experiment}

\colt{
\subsection{Computational efficiency}
The computational cost of our projection objective matches standard regression loss, as we only evaluate boundaries of the given interval (Proposition \ref{prop: proj loss}). The naive minmax approach maintains this cost equivalence, since the maximum loss occurs at interval boundaries. For minmax with smoothness constraints through regularization, our alternating gradient descent-ascent updates for $f$ and $f'$ double the computational overhead. The pseudo-label approach requires training $k$ hypotheses from $\wcF_\eta$ before generating labels, resulting in $(k+1)$ times the base cost - typically manageable given efficient regression training.
}

\subsection{Experiment setup}

Following prior work \citep{cheng2023weakly}, we conducted experiments on five public datasets from the UCI Machine Learning Repository: Abalone, Airfoil, Concrete, Housing, and Power Plant. Since these datasets are originally regression tasks with single target values, we transformed them into datasets with interval targets (described shortly). Dataset statistics are provided in Section~\ref{sec: dataset statistics}. For the experimental setup, we used the same configuration as \citep{cheng2023weakly}: the model architecture is a MLP with hidden layers of sizes $10$, $20$, and $30$. We trained the models using the Adam optimizer with a learning rate of $0.001$ and a batch size of $512$ for $1000$ epochs.\\

\textbf{Interval Data Generation Methodology. }
We propose a general approach for generating interval data for each target value $y$. This method depends on two factors: the interval size $q \in [0, \infty]$ and the interval location $p \in [0,1]$. The interval is then defined as $[l,u] = [y - pq, y+ (1-p)q]$. When $p = 0$, the target value $y$ is at the lower boundary of the interval whereas $p = 1$ places $y$ at the upper boundary.  In this work, we consider $q$ and $p$ to be generated from uniform distributions over specified ranges. The prior interval generation method in \citet{cheng2023weakly} could be seen as a special case of our approach when $q \sim \operatorname{Uniform}[0, q_{\max}]$ and $p \sim \operatorname{Uniform}[0,1]$.

\subsection{Results}
\begin{table}[t]
\centering 

\resizebox{0.86\textwidth}{!}{%
\begin{tabular}{@{}lccccc@{}}
\toprule
            & \textbf{Projection} & \textbf{Minmax} & \textbf{Minmax (reg)} & \textbf{PL (max)}      & \textbf{PL (mean)}     \\ 
            & (\eqref{eq: proj obj}) &(\eqref{eq: rho objective}) & (\eqref{eq: minmax reg}) &(\eqref{eq: PL max}) &(\eqref{eq: PL mean})
            \\
            \midrule
Abalone                            & $1.56_{0.01}$                           & $1.65_{0.02}$                       & $1.54_{0.01}$                             & $\mathbf{{1.52_{0.01}}}$                         & $\mathbf{{1.52_{0.01}}}$                          \\
Airfoil                            & $\mathbf{2.46_{0.08}}$                           & $2.65_{0.07}$                       & $3.41_{0.04}$                             & $3.31_{0.04}$                         & $\mathbf{2.42_{0.07}}$                          \\
Concrete                           & $5.75_{0.13}$                           & $7.34_{0.2}$                        & $6.23_{0.16}$                             & $5.86_{0.48}$                         & $\mathbf{5.43_{0.12}}$                          \\
Housing                            & $\mathbf{5.17_{0.13}}$                           & $6.88_{0.31}$                       & $5.42_{0.15}$                             & $\mathbf{5.07_{0.09}}$                         & $\mathbf{5.05_{0.09}}$                          \\
Power-plant                        & $3.4_{0.03}$                            & $3.47_{0.02}$                       & $3.48_{0.03}$                             & $\mathbf{{3.33_{0.01}}}$                         & $\mathbf{{3.33_{0.01}}}$                          \\
\midrule
Average (rank) & $2.8$                 & $4.4$           & ${4.2}$                   & ${2.2}$               & $1$  
\\
\bottomrule 
\end{tabular}
}
\vspace{5mm}
\caption{Test Mean Absolute Error (MAE) and the standard error (over 10 random seeds) for the uniform interval setting. PL (mean) is the best-performing method in this setting.}

\label{tab: test MAE (standard)}
\end{table}

\textbf{Which method works best in the uniform setting?}  We begin by evaluating methods in the uniform interval setting described in prior work \citep{cheng2023weakly}, where the interval size $q \sim  \operatorname{Uniform}[0, q_{\max}]$ and the location of the interval $p \sim \operatorname{Uniform}[0,1]$. For each dataset, we set $q_{\max}$ to be approximately equal to the range of the target values, $y_{\max} - y_{\min}$. Specifically, we set $q_{\max} = 30$ (Abalone), $30$ (Airfoil), $90$ (Concrete), $120$ (Housing), and $90$ (Power Plant). Our findings indicate that the PL (mean) method performs best in this uniform setting, with PL (max) and the projection method ranking second and third, respectively (Table \ref{tab: test MAE (standard)}).  Given the superior performance of PL (mean), we conducted an ablation study to better understand its effectiveness. We explored the impact of varying the number of hypotheses $k$ and compared it with an ensemble baseline that combines pseudo-labels \textit{before} using them to train the model, for which we still find that PL (mean) still performs better (Appendix \ref{sec: ablation for pl (mean)}).\\

\textbf{What about other interval settings?} We conducted more detailed experiments to investigate which factors impact the performance of each method. Specifically, we varied the interval size $q$ and the interval location $p$ by 1) varying $q_{\max}$, 2) varying $q_{\min}$, 3) varying $p$ with three settings designed to position the true value $y$ at: i) \textit{only} one boundary of the interval, ii) \textit{both} boundaries of the interval, iii) the middle of the interval. Full details are provided in Appendix~\ref{sec: impact of interval size and location}. We found that: (1) All methods are quite robust to changes in the interval size, except for the Minmax method, whose performance decreases significantly as the interval size increases. This is consistent with our insights from the proof of \ref{prop: minmax in F is better}), (2) The location of the true value $y$ can have a large impact on performance; specifically, the Minmax method performs better when $y$ is close to the middle of the interval. One explanation is that Minmax is equivalent to supervised learning with the midpoint of the interval (Corollary \ref{cor: mid point}). Conversely, the other methods perform better when $y$ is close to \textit{both} boundaries of the interval but not when $y$ is close to \textit{only} one boundary. Finally, we conclude that if we only know that the interval size is large, it is better to use the PL (pseudo-labeling). However, if we know the true value $y$ is close to the middle of the interval, then the Minmax method is more preferable.

\subsection{Connection to our theoretical analysis}
To validate our theoretical findings in practice, we conducted experiments designed to test whether our theory holds under empirical conditions. Recall that our main result (Theorem \ref{thm: main bound v2}) states that if a hypothesis  $f$ approximately lies within the intervals $(f \in \wcF_\eta)$ and is smooth, then $f$ will lie within intervals smaller than the original ones. To control the smoothness of our hypothesis, we utilize a Lipschitz MLP, which is an MLP augmented with spectral normalization layers \citep{miyato2018spectral}. The normalization ensures that the Lipschitz constant of the MLP is less than $1$. We then scale the output of the MLP by a constant factor $m$ to ensure that the Lipschitz constant of the hypothesis is less than $m$.\\

\textbf{Test performance}
First, we plot the test Mean Absolute Error (MAE) of the Lipschitz MLP with the projection objective, compared with the test MAE of the standard MLP (Figure \ref{fig: result lip} (Top)). We found that, with the right level of smoothness, Lipschitz MLP can achieve better performance than the standard MLP. When the Lipschitz constant is very small, the performance is poor for all datasets. However, performance improves as the Lipschitz constant increases. We observe that the optimal Lipschitz constant is always larger than the Lipschitz constant estimated from the training set (vertical line). For some datasets, performance degrades when the Lipschitz constant becomes too large. This aligns with our insight from Theorem \ref{thm: agnostic bound}, which suggests that we can improve the error bound by ensuring that the hypothesis class is as smooth as possible (smaller $m$ so that $I_\eta(x)$ is small) while still containing a good hypothesis (i.e., low $\operatorname{OPT}$). Nevertheless, we do not need to know the Lipschitz constant of the dataset and can treat it as a tunable hyperparameter in practice.

\textbf{Reduced interval size}
Second, we determine whether the intervals, within which our hypothesis $f \in \wcF_0$ lies, are smaller than the original intervals. Recall that the original intervals are given by $[l, u]$, and our theorem suggests that they would reduce to $I_\eta(x) = [\tilde{l}_{\cD \to x}^{(m)} - r_\eta(x), \tilde{u}_{\cD \to x}^{(m)} + s_\eta(x)]$. While we can use a Monte Carlo approximation to estimate $I_\eta(x)$, it does not take into account the hypothesis class $\cF$. Instead, we approximate $I_\eta(x)$ using samples of hypotheses from $\wcF_0$ by proceeding as follows: 1) We train $10$ models with the projection objective, each from different random initializations (denoted by $f_1,\dots, f_{10}$), 2) For each $x$, we approximate the reduced interval using the minimum and maximum values of the outputs from these models, given by $[\min_{i} f_i(x), \max_{i} f_i(x)]$. We set $m \in \{0.1, 0.1 \times 2^1, \ldots, 0.1 \times 2^{13}\}$ and consider a uniform interval setting with $q_{\max} = 90$. As expected, when the hypothesis becomes smoother, we observe that the average interval size decreases (Figure \ref{fig: result lip} (Bottom)). Moreover, we found that even when the Lipschitz constant is much larger than the value estimated from the data (vertical line), the average reduced interval size remains significantly smaller than the original interval (which is $45$ since $q_{\max} = 90$).  We also observe that the average interval sizes from the standard MLPs are smaller than the original values.\\

\begin{figure}[t]
    \centering
    \includegraphics[width= 0.95\linewidth]{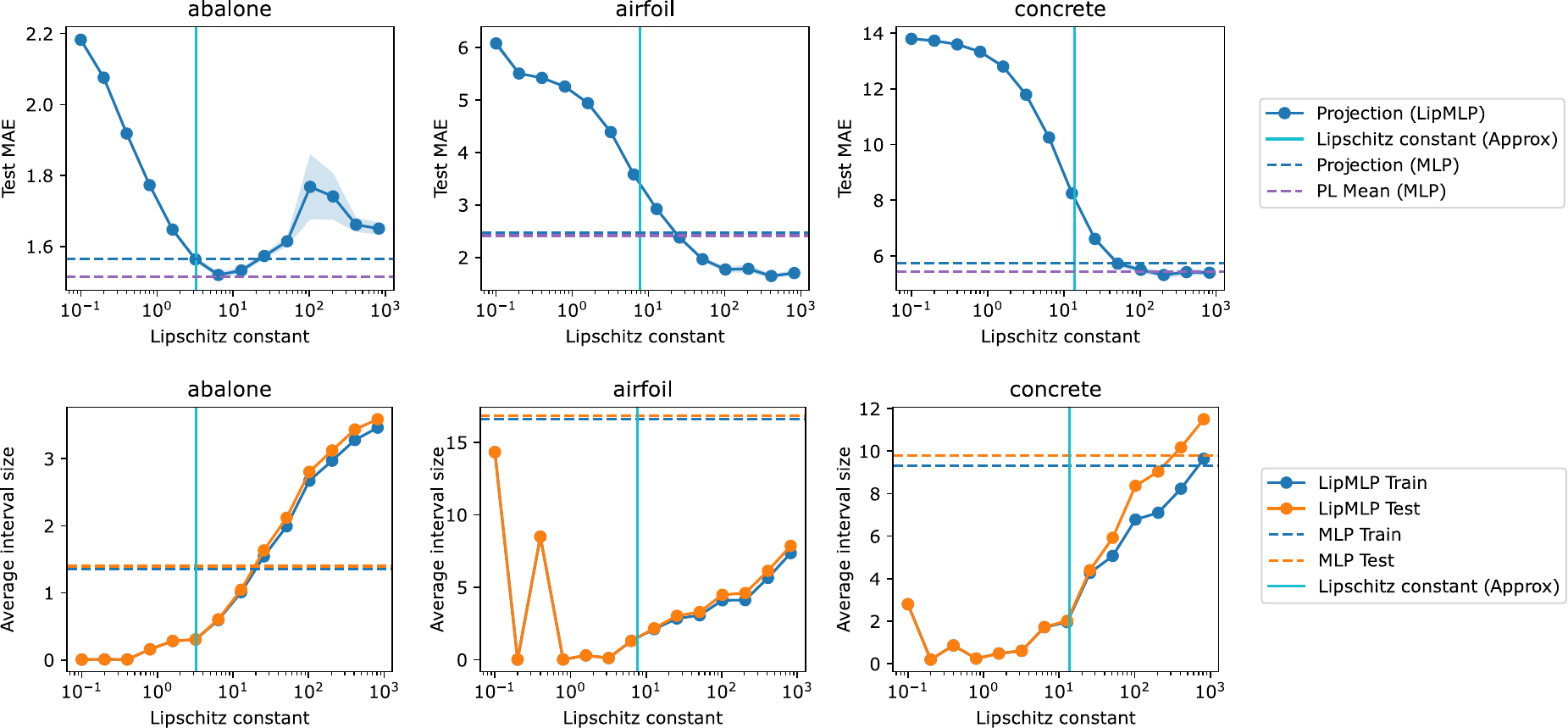}
    \caption{Test MAE of the projection method with Lipschitz MLP using different values of the Lipschitz constant. The vertical line is the Lipschitz constant approximated from the training set. (Top) The dashed horizontal lines are the test MAE of PL (Mean) and Projection approach with a standard MLP. 
 (Bottom) Approximated interval size $I_\eta(x)$ for Lipschitz MLP with a different value of Lipschitz constant $m$. The dashed horizontal lines are the values from standard (non-Lipschitz) MLP. The figures for all datasets are in Appendix \ref{sec: full figure lipmlp}.}
    \label{fig: result lip}
\end{figure}

\section{Dataset Statistics}
\label{sec: dataset statistics}
The datasets are from the UCI Machine learning repository \citep{abalone_1, airfoil_self-noise_291, concrete_compressive_strength_165, combined_cycle_power_plant_294} with  Creative Commons Attribution 4.0 International (CC BY 4.0) license. We provide the statistics of the datasets including the number of data points, the number of features, the minimum and maximum values of the target value and the approximated Lipschitz constant in Table \ref{tab: dataset statistics}.
The Lipschitz constant here is approximated by calculating the proportion $\frac{|y - y'|}{\lVert x - x' \rVert}$ for all pairs of data points then the value is given by the 95th percentiles of these proportions. We perform this procedure to avoid the outliers which have a size of around two orders of magnitude bigger than the 95th percentile value (Figure \ref{fig: lip constant}). This allows us to approximate the level of smoothness that does appear in the dataset rather than use the maximum Lipschitz constant. One could also think of this as a probabilistic Lipschitz value rather than the classical notion \citep{urner2013probabilistic}.\\

\begin{table}[h]
 \centering
\begin{tabular}{@{}lcccc@{}}
\toprule
\textbf{Dataset}     & \# \textbf{data points} & \# \textbf{features} & {[}\textbf{y min, y max}{]}  & \textbf{Lipschitz constant} \\ \midrule
Abalone     & 4177           & 10          & {[}1,29{]}      & 3.23                  \\
Airfoil     & 1503           & 5           & {[}103, 141{]} & 7.75                  \\
Concrete    & 1030           & 8           & {[}2,83{]}     & 13.8                  \\
Housing     & 414            & 6           & {[}7, 118{]}     & 11.68                 \\
Power plant & 9568           & 4           & {[}420,496{]} & 14.18                 \\ \bottomrule\\
\end{tabular}
\caption{Dataset statistics. }
\label{tab: dataset statistics}
\end{table}

\begin{figure}[h]
    \centering
    \includegraphics[width=0.45\linewidth]{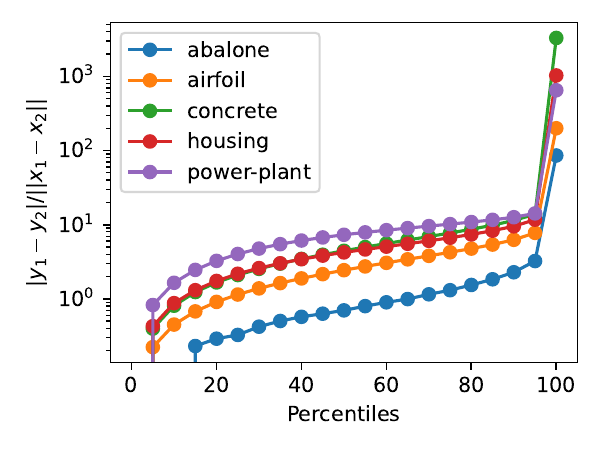}
    \caption{ The value of $\frac{|y - y'|}{\lVert x - x' \rVert}$ by percentiles. We use the 95th percentile of this value as an approximated Lipschitz constant for each dataset.}
    \label{fig: lip constant}
\end{figure}
\clearpage

\section{Impacts of the interval size and interval location}
\label{sec: impact of interval size and location}

\subsection{Impact of the interval size}
\begin{figure}[h]
    \centering
    \includegraphics[width=0.95\linewidth]{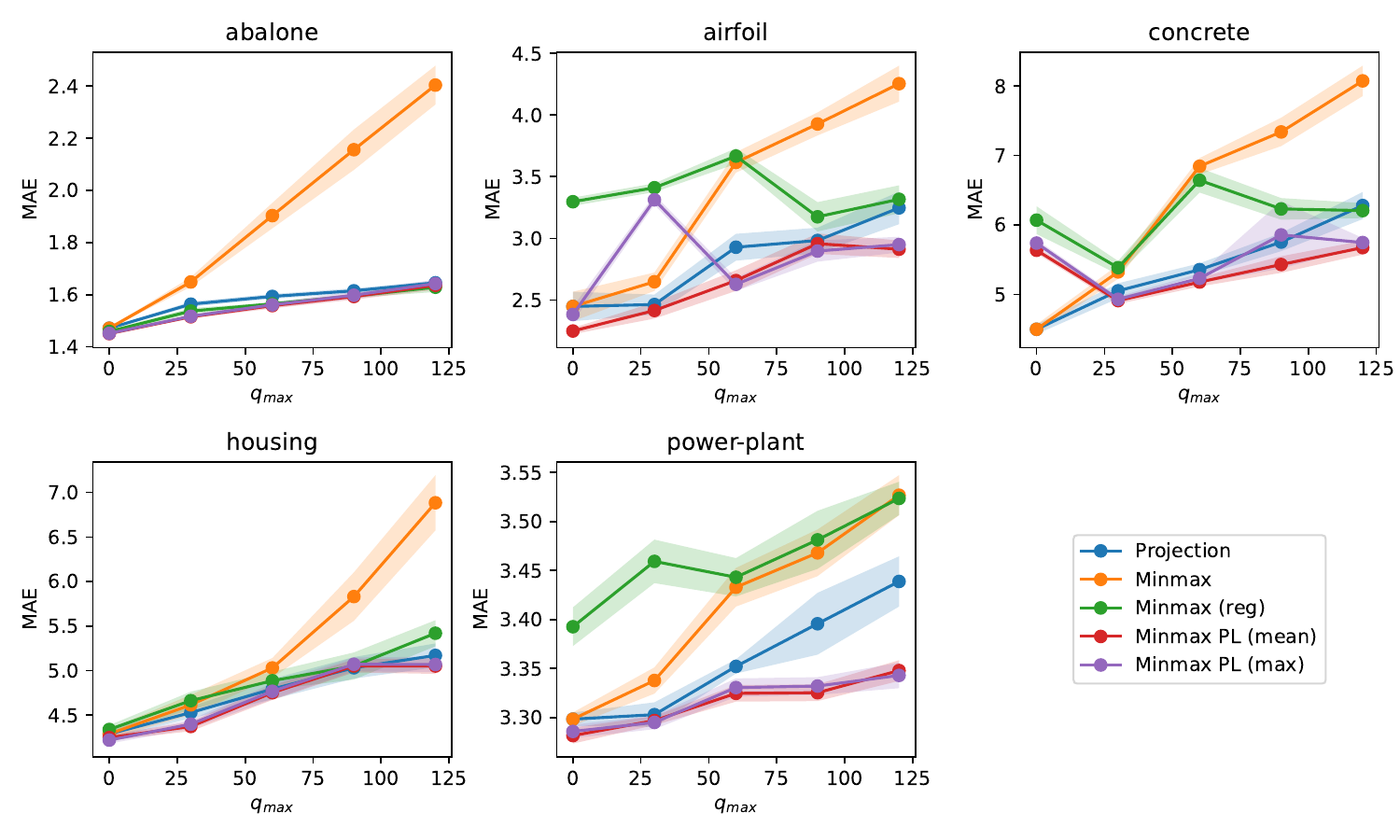}
    \caption{Test MAE when varying the maximum interval size $q_{\max} \in \{0, 30, 60, 90, 120\}$ while $q_{\min} = 0$.}
    \label{fig: vary q max}
\end{figure}

\begin{figure}[h]
    \centering
    \includegraphics[width=0.95\linewidth]{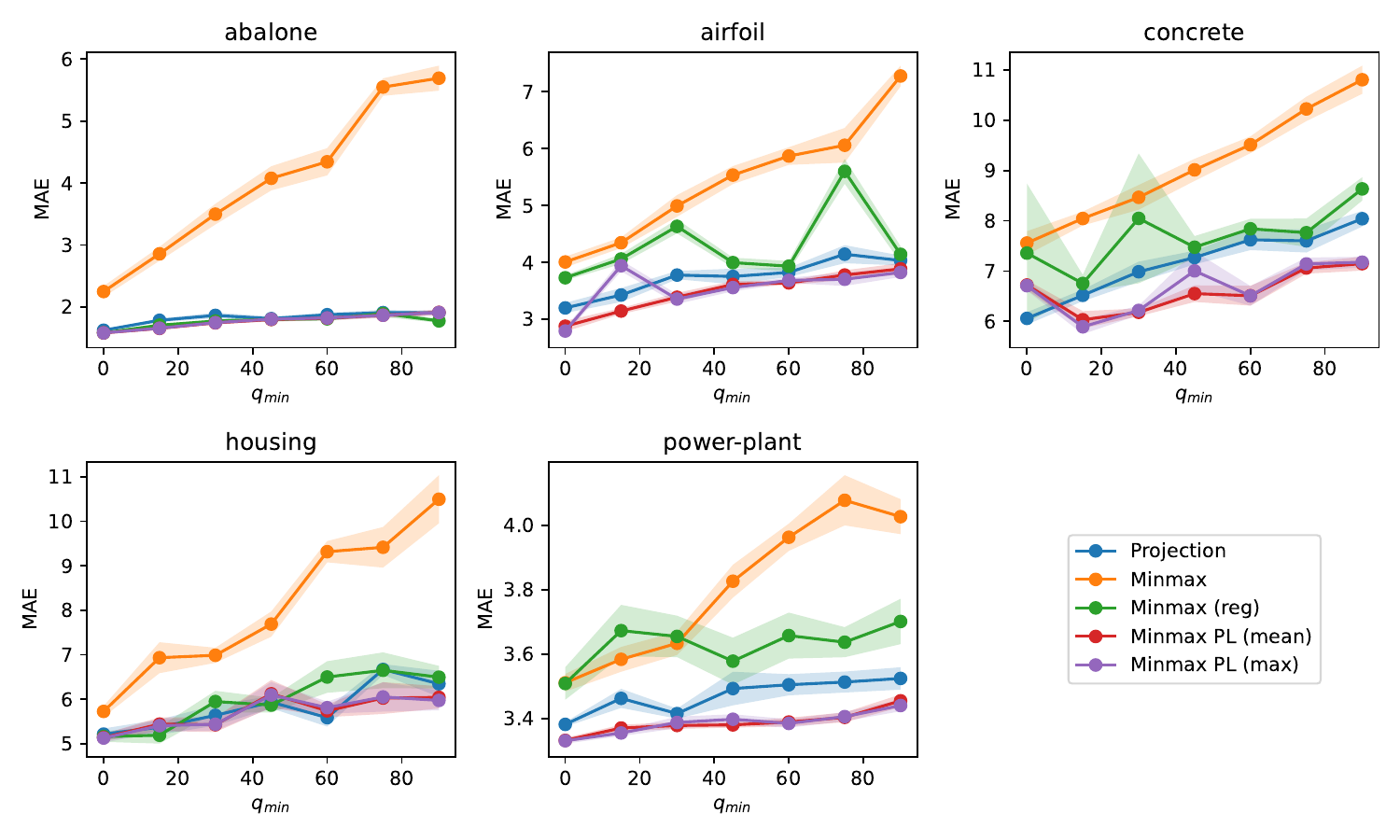}
    \caption{Test MAE when varying the minimum interval size $q_{\min} \in \{0,15, 30, 45, 60, 75,90\}$ while $q_{\max} = 90$.}
    \label{fig: vary q min}
\end{figure}
We want to investigate the impact of interval size on the performance of the proposed methods. Intuitively, a smaller interval would make the problem easier. In the extreme case when the interval size is zero, we recover the supervised learning setting. Here, we assume that the interval location $p$ is still drawn uniformly from $[0,1]$ and we consider two experiments. First, we vary the maximum interval size $q_{\max}  \in \{0, 30, 60,90,120\}$ while keeping the minimum interval size $q_{\min} = 0$. As expected, a larger maximum interval size leads to the drop in test performance across the boards (Figure \ref{fig: vary q max}). Second, we vary the minimum inter val size $q_{\min} \in \{0,15,30,45,60,75,90\}$ while keeping $q_{\max}$ fixed at $90$. We can see that the test performance also decreases for all methods as we increase the minimum interval size (Figure \ref{fig: vary q min}). Notably, the standard minmax approach is highly sensitive to the interval size where its performance degrades significantly much more than other approaches in both experiments. This is due to the nature of the approach that wants to minimize the loss with respect to the worst-case label, as we have a larger interval, these worst-case labels can be much stronger and may not represent the property of the true labels anymore. On the other hand, our other minmax approaches and the projection approach are more robust to the change in the minimum interval size and the error only went up slightly for both experiments.

\subsection{Impact of the interval location}

\textbf{When $y$ is more likely to be on one side of the interval (vary $p_{\min})$}

\begin{figure}[h]
    \centering
    \includegraphics[width=0.95\linewidth]{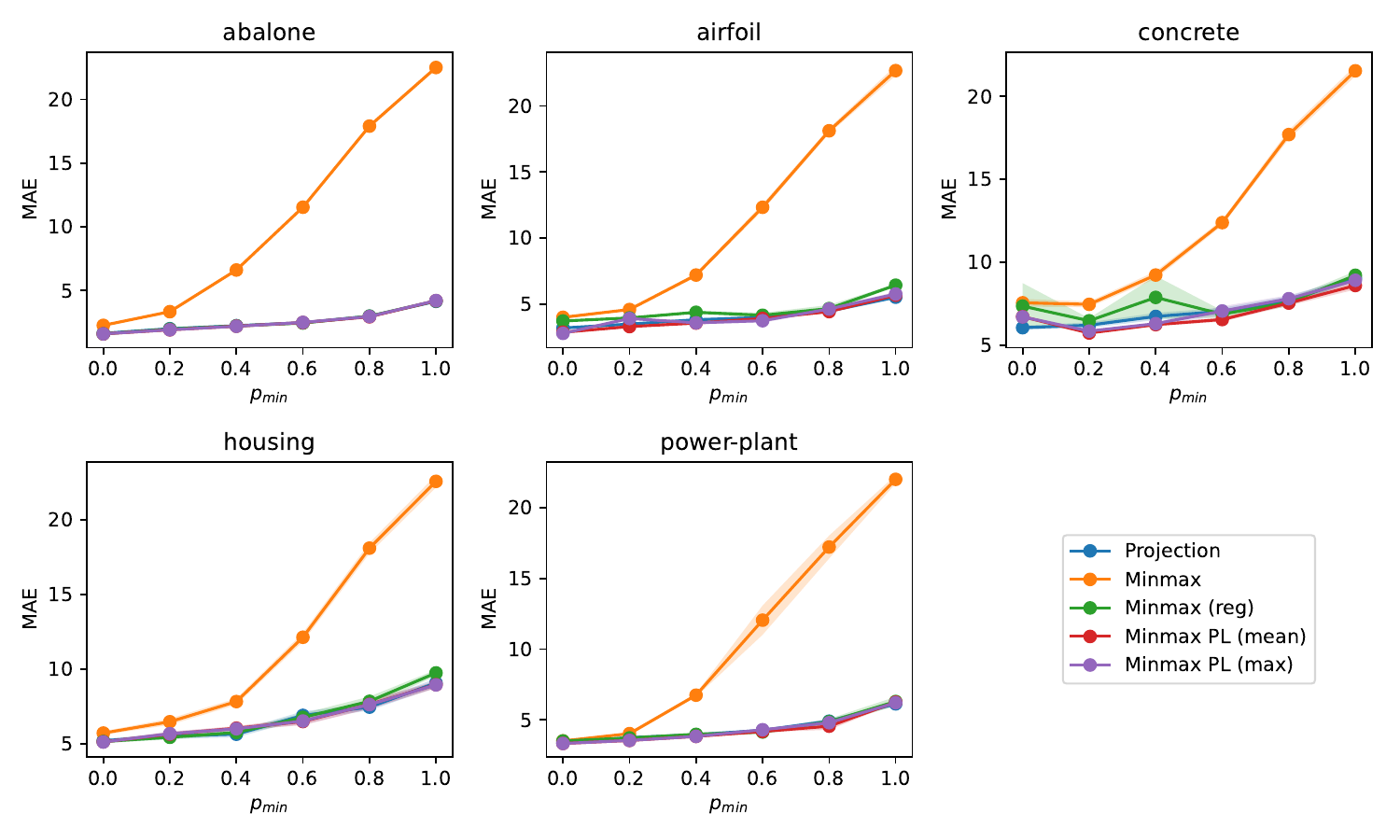}
    \caption{Test MAE when varying the minimum interval location $p_{\min} \in \{0, 0.2, 0.4, 0.6, 0.8, 1 \}$.  In this case, when $p_{\min} = 0$ we have the uniform interval setting while when $p_{\min} = 1$, $y$ true always lie on the upper bound of the intervals.}
    \label{fig: vary p min}
\end{figure}

\begin{figure}[h]
    \centering
    \includegraphics[width=0.95\linewidth]{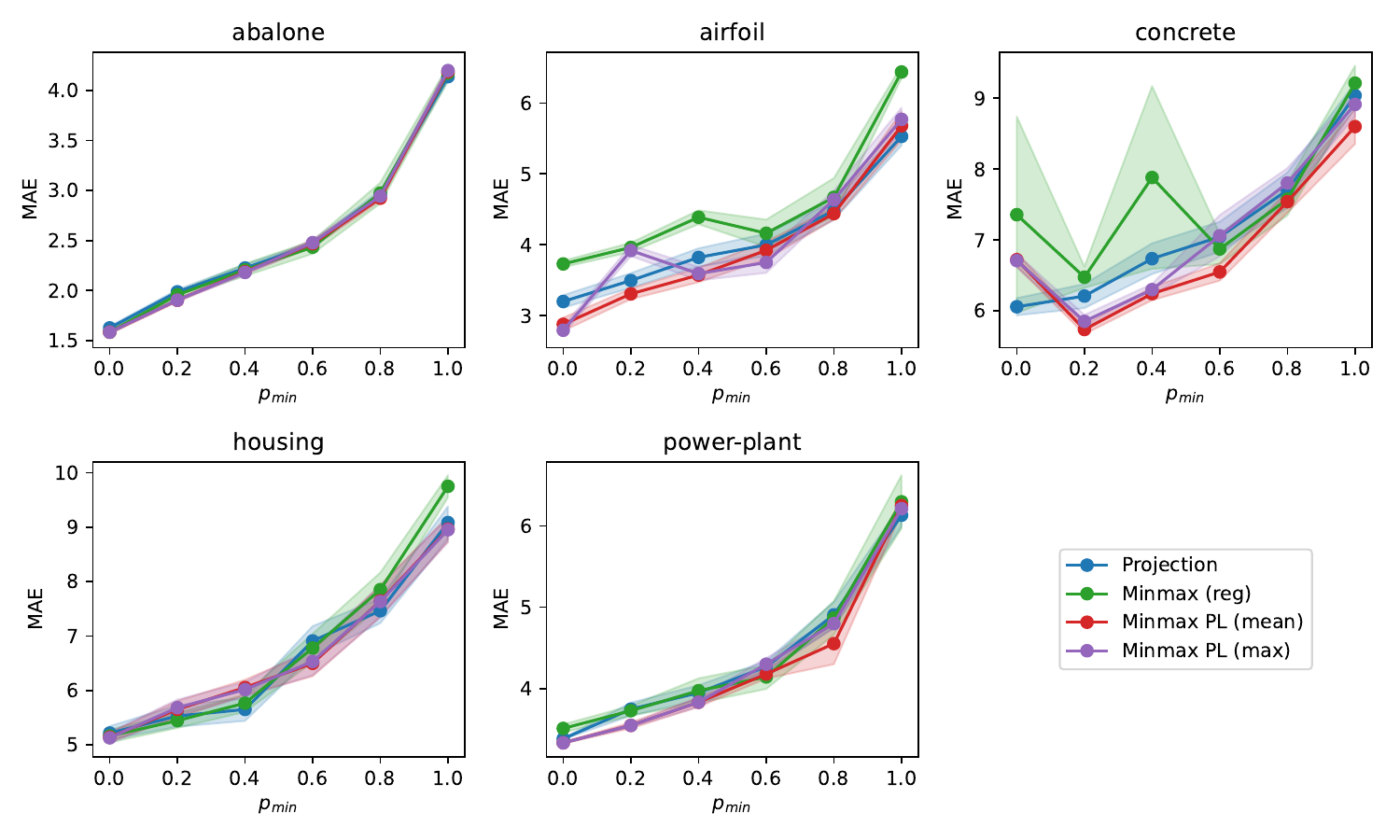}
    \caption{Test MAE when varying the minimum interval location $p_{\min} \in \{0, 0.2, 0.4, 0.6, 0.8, 1 \}$.  In this case, when $p_{\min} = 0$ we have the uniform interval setting while when $p_{\min} = 1$, $y$ true always lies on the upper bound of the intervals.(no minmax approach) }
    \label{fig: vary p min, no minmax}
\end{figure}

In the previous settings, we assume that the location of the interval $p$ is drawn uniformly from $U[0,1]$, that is, when $y$ true is equally likely to be located at anywhere on the intervals. Here, we explore what would happen when it is not the case. We assume that we fixed $q_{\min} = 0 ,q_{\max} = 90$ and consider three scenarios. First, we consider when $y$ is more likely to be on one side of the interval. Here, we consider when $p \sim U[p_{\min}, 1]$ where $p_{\min} \in \{0,0.2,0.4,0.6,0.8, 1\}$ (Figure \ref{fig: vary p min}). In this case, when $p_{\min} = 0$ we have the uniform interval setting while when $p_{\min} = 1$, $y$ true always lies on the upper bound of the intervals. We can see that the test MAE of all approaches increases as $p_{\min}$ is larger. Again, the minmax approach performs much worse than others. One explanation for this is that the minmax with respect to. the label would encourage the model to be close to the middle point of each interval (Corollary \ref{cor: mid point}). However, the the $y$ true is far away from the midpoint leads to his phenomenon. We also provide the test MAE with no minmax approach for better visualization (Figure \ref{fig: vary p min, no minmax})

\textbf{When $y$ true is more likely to be in the middle of the interval}

\begin{figure}[h]
    \centering
    \includegraphics[width=0.95\linewidth]{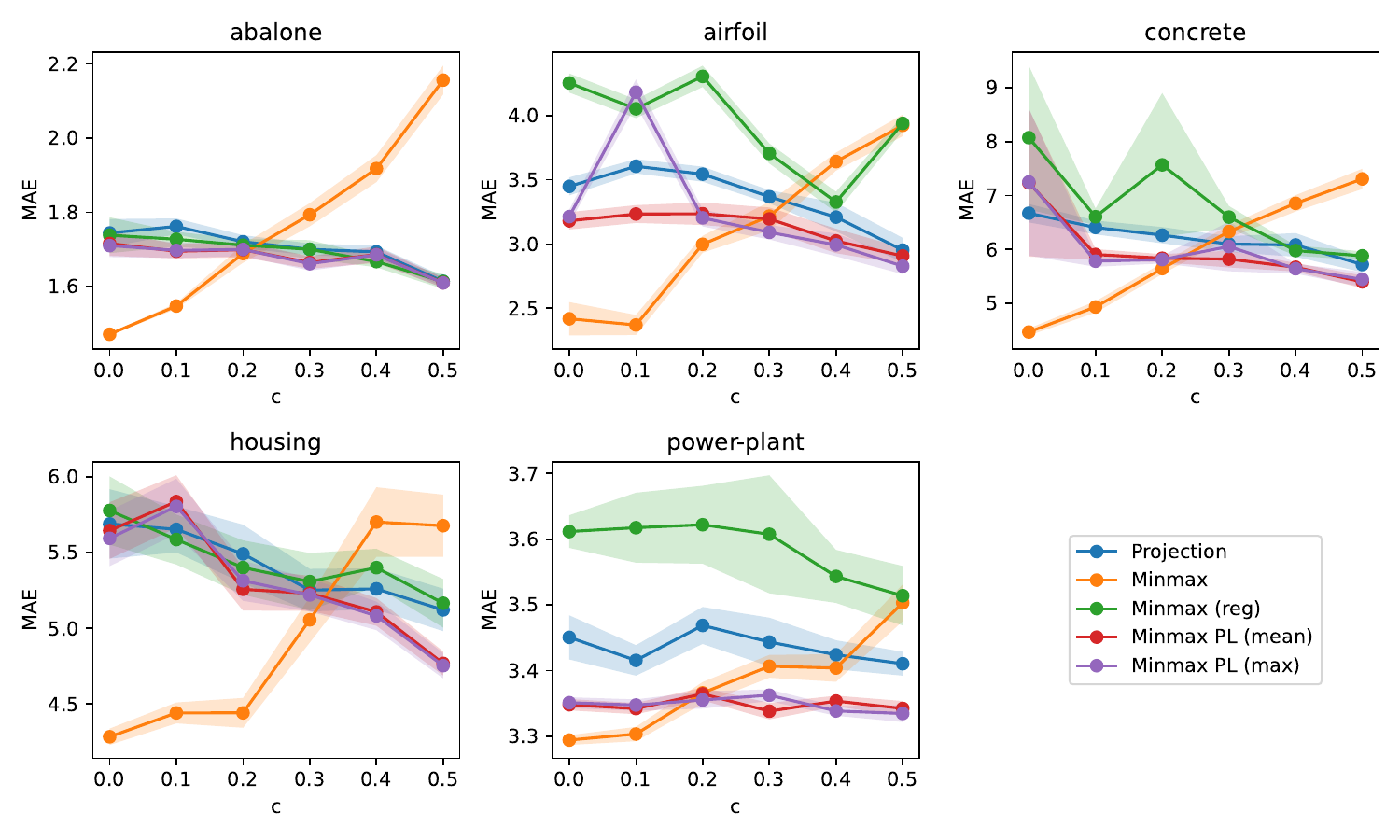}
    \caption{Test MAE when varying the interval location, $p \sim U[0.5 - c, 0.5 + c]$ for $c \in \{0, 0.1, 0.2, 0.3, 0.4, 0.5\}$. When $c = 0$, the true $y$ is always in the middle of the interval and when $c = 0.5$, we recover the uniform interval setting. }
    \label{fig: vary p mid}
\end{figure}

Second, we consider when $y$ true is more likely to be in the middle of the interval ($p$ is close to $0.5$). We capture this setting by considering $p \sim U[0.5 - c, 0.5 + c]$ for $c \in \{0, 0.1, 0.2, 0.3, 0.4, 0.5\}$ (Figure \ref{fig: vary p mid}). Intuitively, when $c = 0$, the true $y$ is always in the middle of the interval and when $c = 0.5$, we recover the uniform interval setting. In contrast to the first setting, we can see that the minmax approach performs the best in this setting for a small value of $c$. Again, this is perhaps due to the nature of the minmax approach mentioned earlier which encourages the prediction to be close to the middle point of the interval, for which, in this case, close to the $y$ true. Remarkably, minmax performs better until $c = 0.2$ which corresponds to $p \sim [0.3, 0.7]$ which is a reasonable location of $y$ true in practice. However, when $c$ is large we would recover the uniform interval setting and the minmax would go back to becoming the worst-performer. On the other hand, the performance of other approaches is better as $c$ is larger, that is when $y$ true is more spread out across the interval. 

\clearpage
\textbf{When $y$ is more likely to be on either side of the interval}

\begin{figure}[h]
    \centering
    \includegraphics[width=0.95\linewidth]{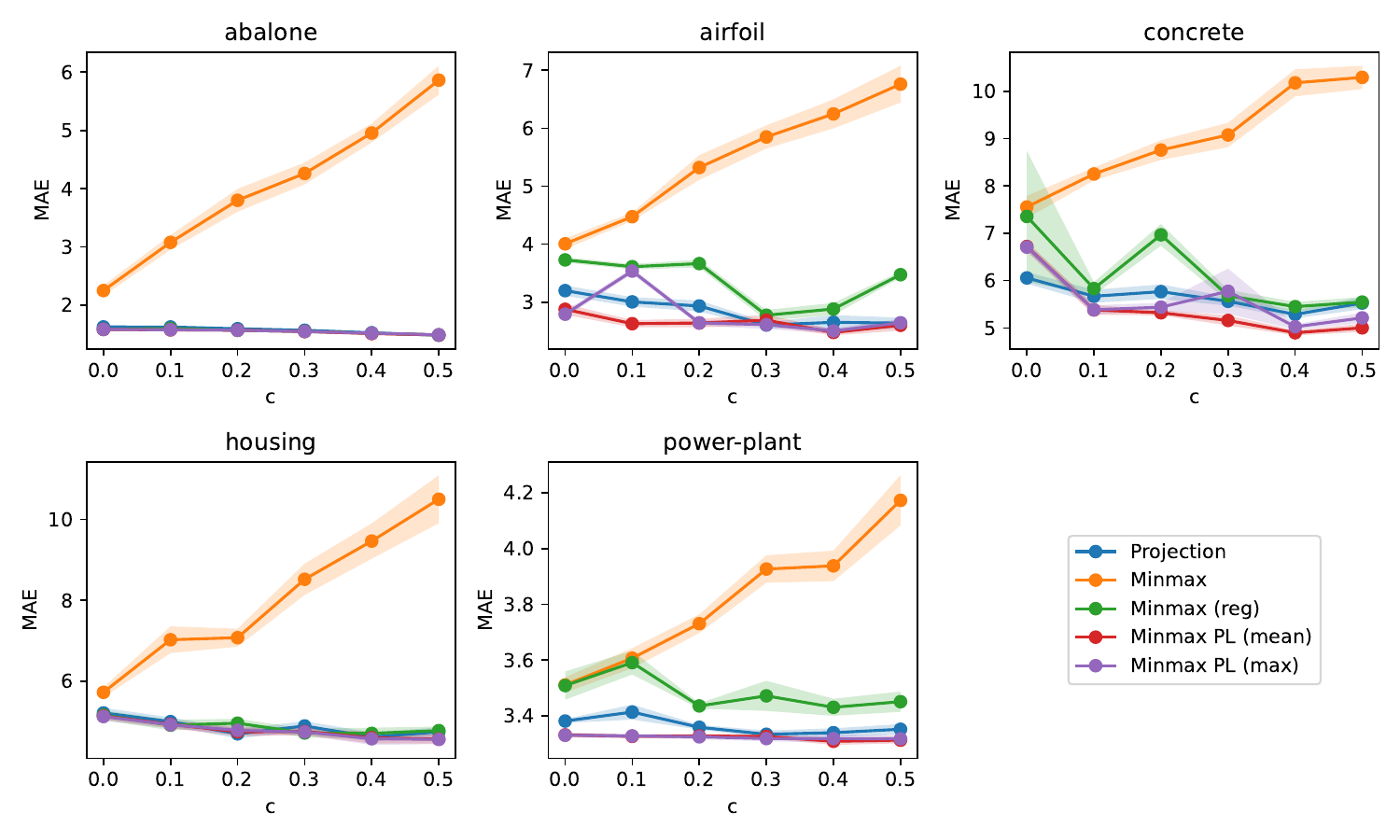}
    \caption{Test MAE when varying the interval location, when $p$ is drawn uniformly from $[0,0.5- c] \cup [0.5 + c, 1]$ when $c \in \{0, 0.1, 0.2, 0.3, 0.4, 0.5\}$.  Here, when $c = 0$ we have the uniform interval setting while when $c = 0.5$, $y$ true is either on the upper or the lower bound of the intervals.}
    \label{fig: vary p boundary}
\end{figure}
\begin{figure}[h]
    \centering
    \includegraphics[width=0.95\linewidth]{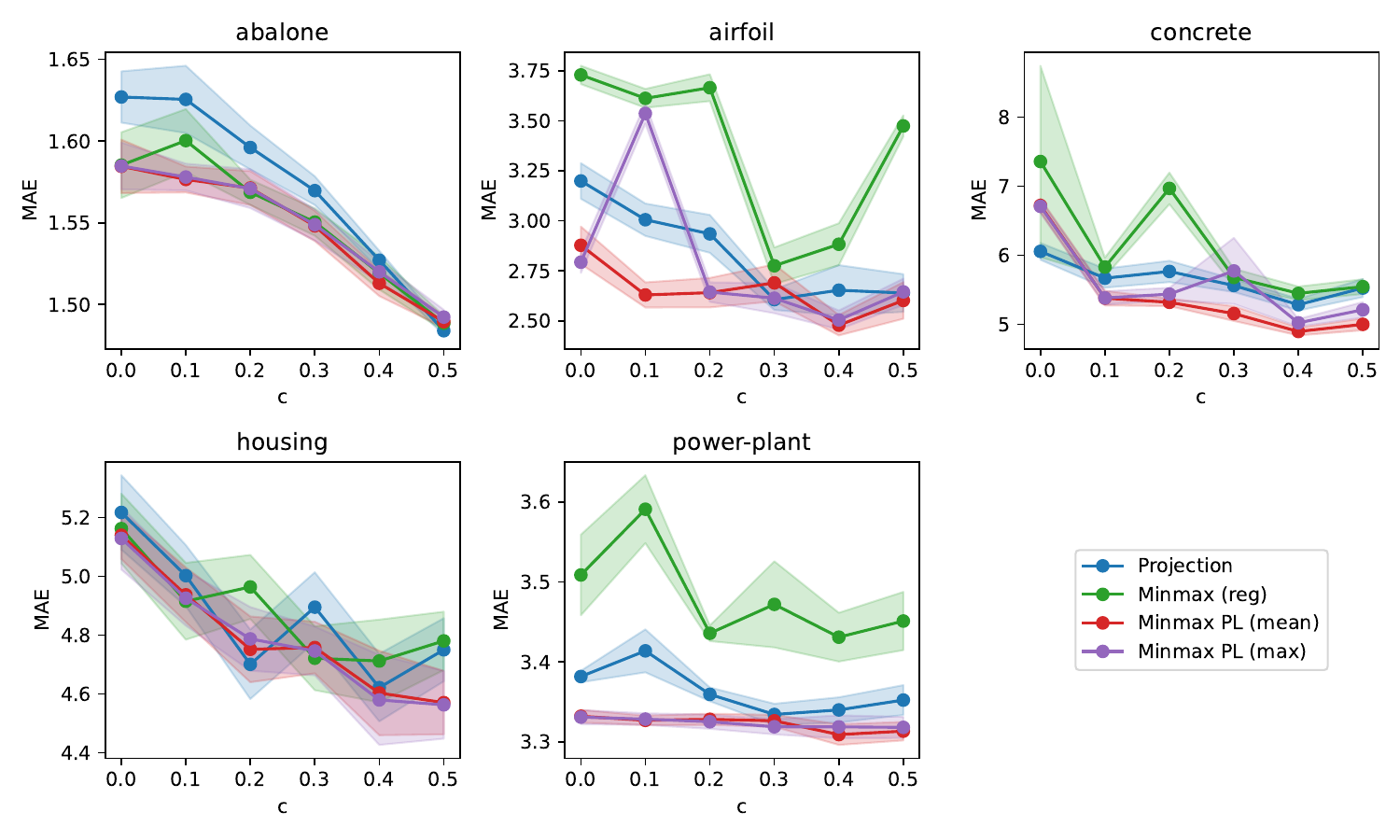}
    \caption{Test MAE when varying the interval location, when $p$ is drawn uniformly from $[0,0.5- c] \cup [0.5 + c, 1]$ when $c \in \{0, 0.1, 0.2, 0.3, 0.4, 0.5\}$.  Here, when $c = 0$ we have the uniform interval setting while when $c = 0.5$, $y$ true is either on the upper or the lower bound of the intervals.(no minmax approach)}
    \label{fig: vary p boundary (no minmax)}
\end{figure}

Finally, we consider when $y$ is more likely to be on either side of the interval where $p$ is drawn uniformly from $[0,0.5- c] \cup [0.5 + c, 1]$ when $c \in \{0, 0.1, 0.2, 0.3, 0.4, 0.5\}$.  Here, when $c = 0$ we have the uniform interval setting while when $c = 0.5$, $y$ true is either on the upper or the lower bound of the intervals. We found that as $c$ is larger where the $y$ true is more likely to be near either of the boundaries, the minmax performance drop significantly (Figure \ref{fig: vary p boundary}). However, we found that the performance of other approaches increases (Figure \ref{fig: vary p boundary (no minmax)}). This is in contrast to the first setting where we see that when $y$ is more likely to be near only one side of the boundary, the performance drops remarkably.\\

 Overall, from these experiments, we may conclude that for all approaches apart from the original minmax with respect to. labels, having $y$ true that lies near both of the boundaries of the interval are beneficial to the test performance and lying on both sides is crucial.

\subsection{Large Ambiguity degree setting}

We consider a setting with large ambiguity degree where $q \sim  \operatorname{Uniform}[q_{\min},90]$ when $q_{\min} \in \{30,60,90\}$ and $p \sim \operatorname{Uniform}[0.5 - c, 0.5 + c]$ when $c \in \{0, 0.1, 0.2, 0.3, 0.4, 0.5\}$. Here as $c$ is smaller, $y$ true would be located near the middle point of the interval while as $c$ is larger, we would recover the uniform setting. These settings have a large ambiguity degree since when $q_{\min} > 0$, interval size can't be arbitrarily small and  $[p_{\min} ,p_{\max}] \subset [0,1]$ implies that true y would not lie at the boundary of the constructed interval. As a result, the intersection of all possible intervals would no longer be just $\{y\}$ anymore which leads to the ambiguity degree of $1$. We found that there is no single method that always performs well on every interval setting.  The Minmax is the best performing method for all $c \leq 0.3$ while when $c > 0.3$ the best-performing approaches are either PL (mean) or PL (max) (Figure \ref{fig:large ambi, best performing}).
\begin{figure}[h]
    \centering
    \includegraphics[width=0.95\linewidth]{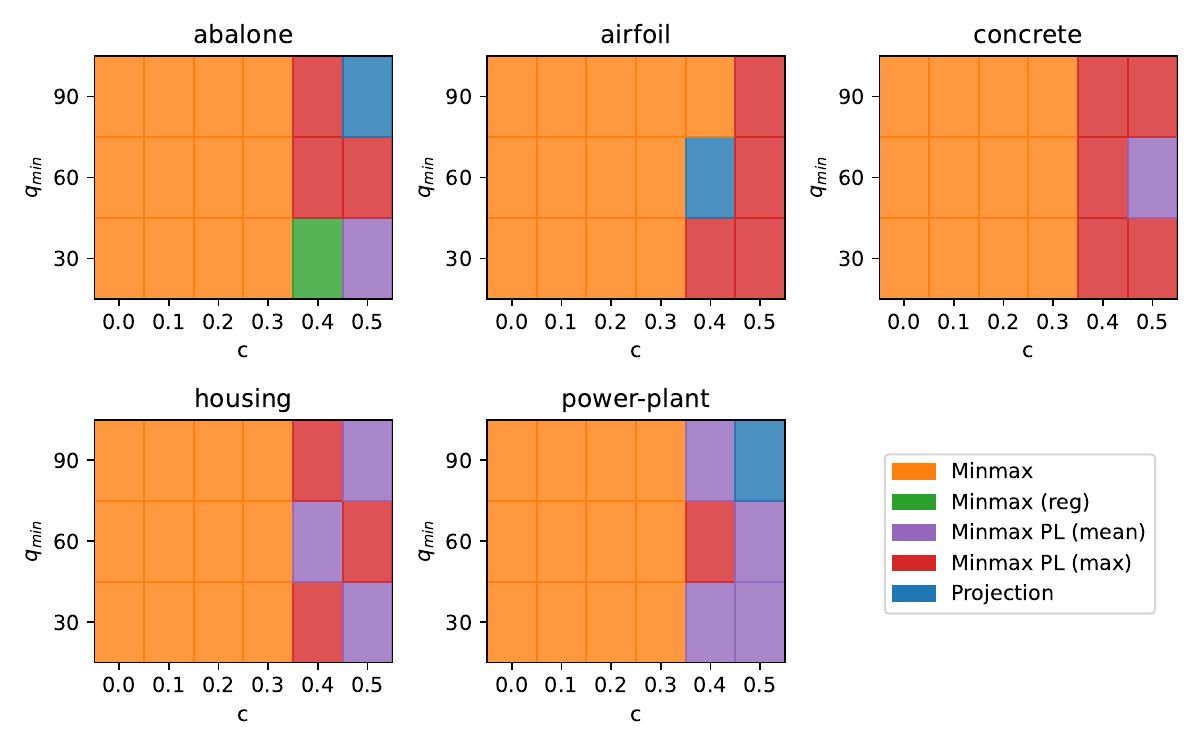}
    \caption{The best performing approach for each $c$ and $q_{\min}$}
    \label{fig:large ambi, best performing}
\end{figure}

\clearpage
\subsection{Interval padding experiment}
\begin{figure}[h]
    \centering
    \includegraphics[width=0.95\linewidth]{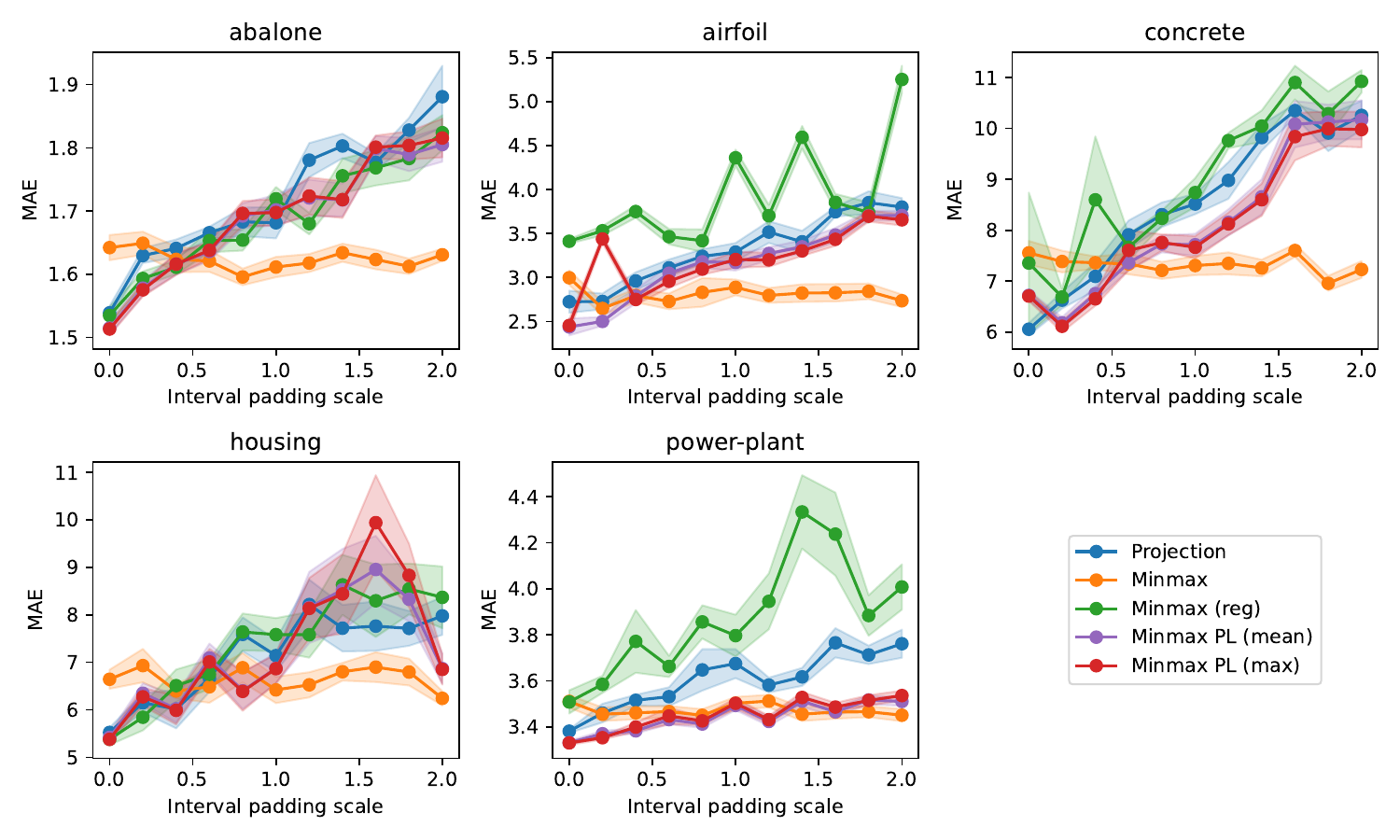}
    \caption{Interval padding experiment}
\end{figure}

From above, we found that the Minmax approach performs better when $y$ true is close to the middle point of the interval, but performs worse in the uniform interval setting when $p \sim \operatorname{Uniform}[0,1]$. In this experiment, we start with the uniform interval setting and add padding to the original interval as a factor of the interval size. Formally, for an original interval $[l,u]$ of size $q = u - l$, we have a new interval $[l - sq, u + sq]$ when $s > 0$ is a scale parameter. By doing this, $y$ true would be \textit{proportionally } closer to the midpoint of the new interval, but distancewise is the same. We found that as we add the padding, the performance of other approaches decreases significantly and gets worse than the performance of the Minmax when the scale is $0.5$ (when the padded interval is twice the size of the original interval) while the performance of Minmax is about the same. This shows that a redundant interval (padding) can harm the performance of the proposed approaches except Minmax and our result that interval location $p$ can have a large impact on the performance is still applicable to this padding setting.

\clearpage

\section{Interval size and test performance of LipMLP}
\label{sec: full figure lipmlp}
\begin{figure}[h]
    \centering
    \includegraphics[width=0.95\linewidth]{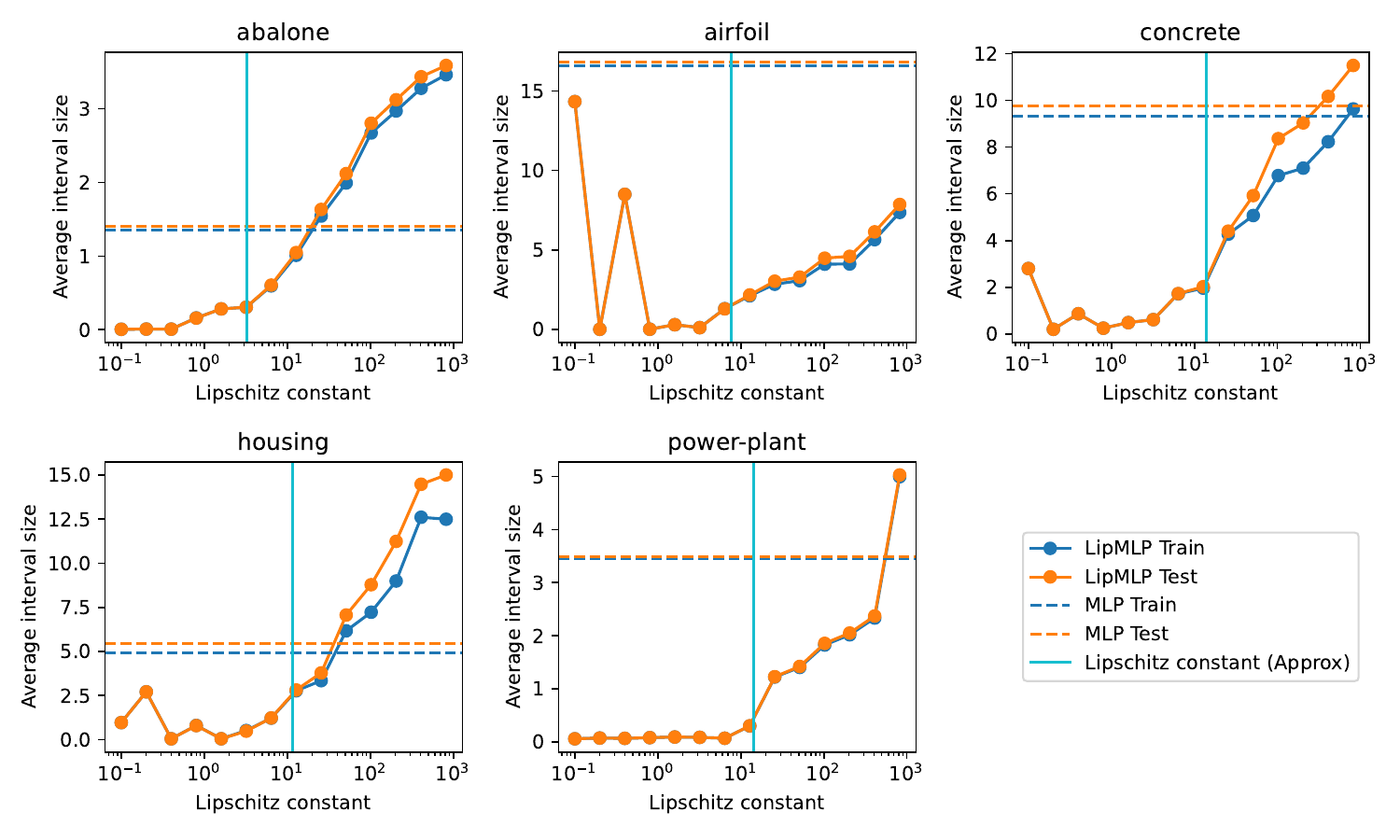}
    \caption{Approximated interval size $I_\eta(x)$ for Lipschitz MLP with a different value of Lipschitz constant $m$. The dashed horizontal lines are the values from standard MLP. }
\end{figure}

\begin{figure}[h]
    \centering
    \includegraphics[width=0.95\linewidth]{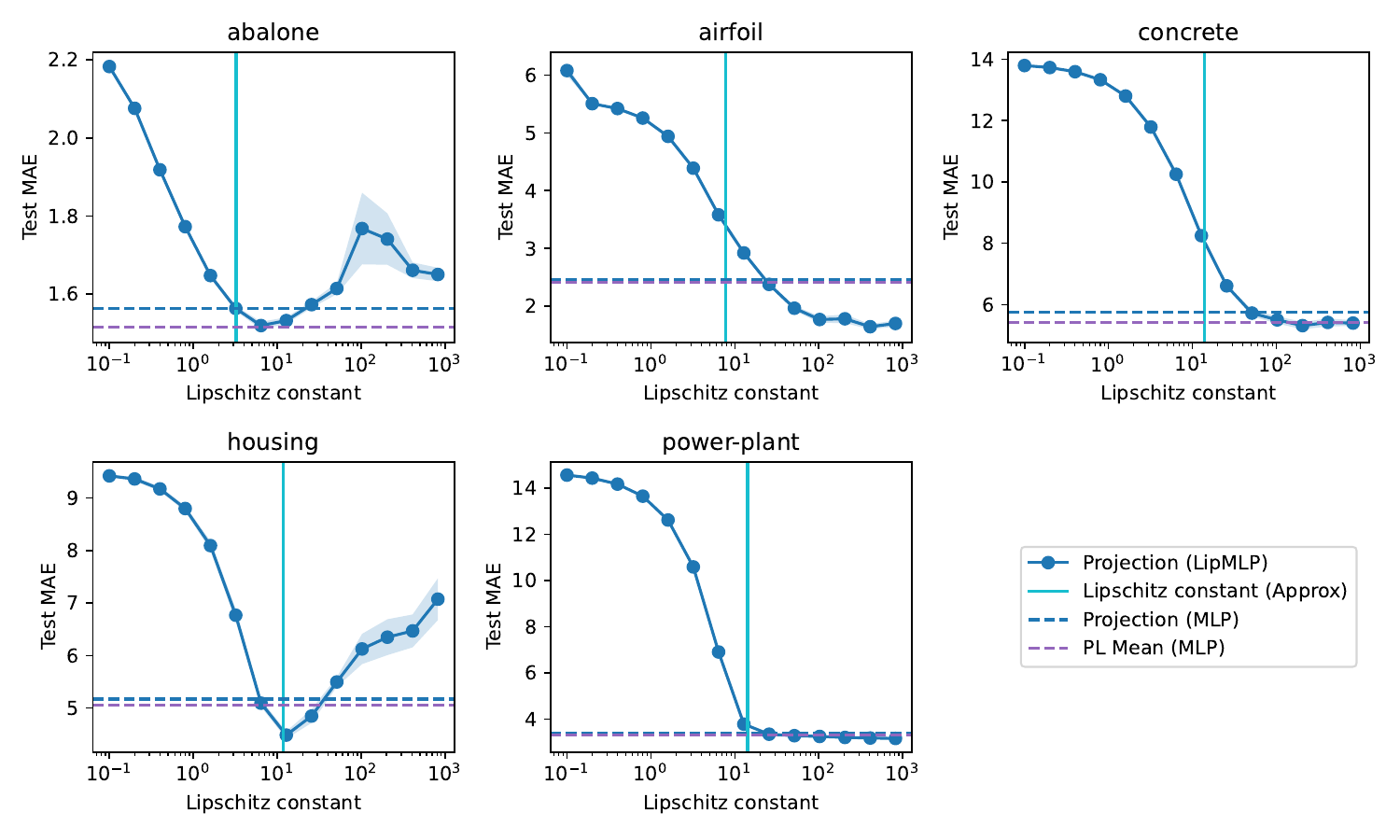}
    \caption{Test MAE of the projection method with Lipschitz MLP with different values of Lipschitz constant. The vertical line is the Lipschitz constant approximated from the training set. The dashed horizontal lines are the test MAE of PL (Mean) and Projection approach with a standard MLP. The optimal Lipschitz constant balances the trade-off between constraining the hypothesis class and maintaining enough capacity to achieve low error.}
\end{figure}

\clearpage

\section{Ablation for PL (mean)}
\label{sec: ablation for pl (mean)}

Since PL (mean) is the best-performing approach in the uniform interval setting, we also performed an ablation study to improve our understanding of this method. First, we explore the impact of the number of hypotheses $k$ used to represent $\wcF_0$. We found that for every dataset, as $k$ is larger, the test MAE becomes smaller. While we use $k=5$ for all PL experiments, this ablation suggests that we can increase $k$ to get better performance at the cost of more computation.

\begin{figure}[h]
    \centering
    \includegraphics[width=0.95\linewidth]{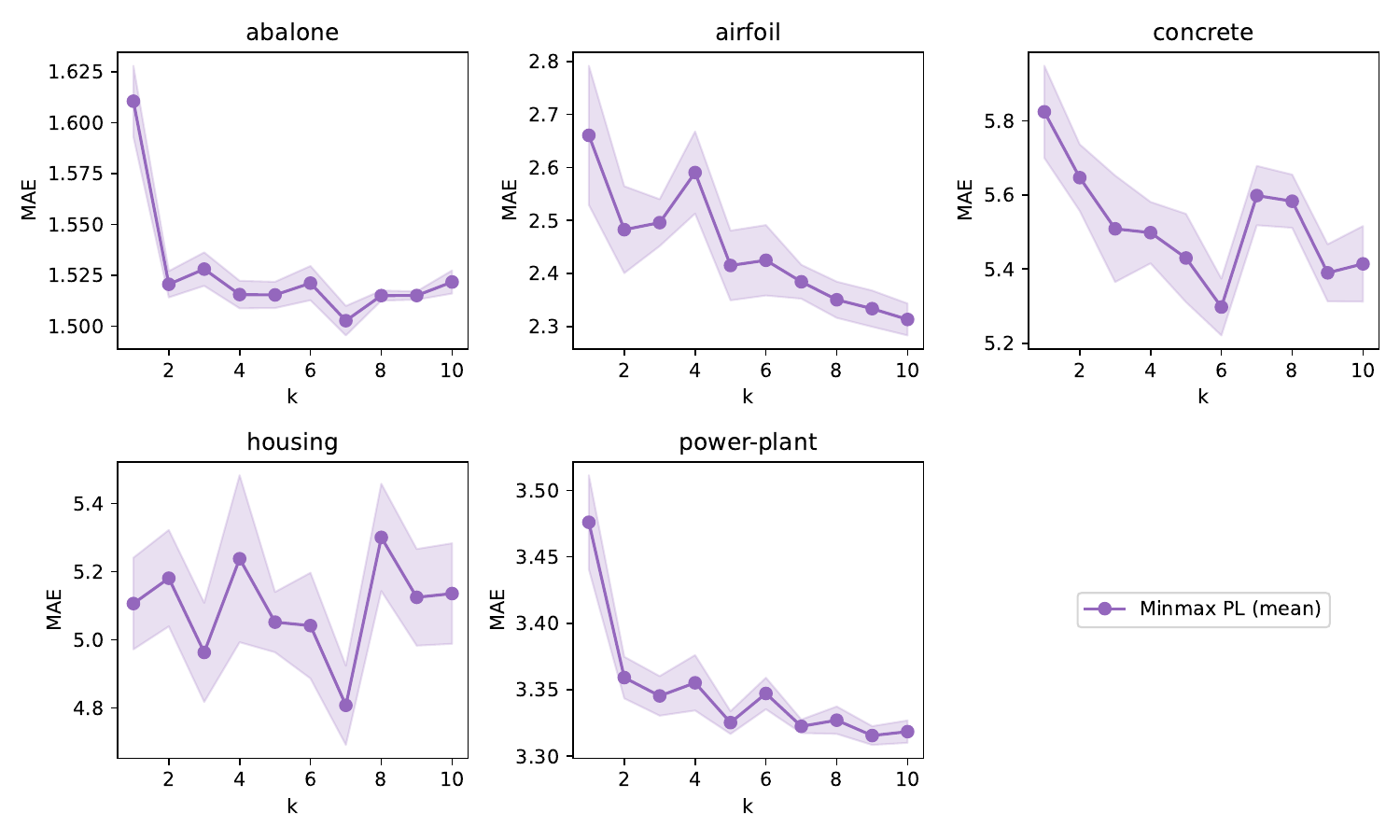}
    \caption{Test MAE for PL (mean) with different number of hypotheses $k$ used to represent $\wcF_0$. For almost every dataset, the test MAE decreases as $k$ is larger.}
\end{figure}

Second, we also compare PL (mean) with a natural ensemble baseline where we combine pseudo labels by averaging them first and then train a model with respect to. the averaged labels. In particular, the objective for the ensemble baseline is given by 
\begin{equation}
        \min_{f}  \sum_{i=1}^n \ell(f(x_i),  \sum_{j=1}^k  f_j(x_i)).
    \end{equation}

We found that PL (mean) still performs better than this baseline on 2 out of 5 datasets while the other 3 datasets are similar. 
\begin{table}[h]
\centering
\begin{tabular}{@{}lccccc@{}}
\toprule
 & Abalone & Airfoil & Concrete & Housing & Power-plant \\ \midrule
PL (mean) & $1.52_{0.01}$ & $2.42_{0.07}$ & $5.43_{0.12}$ & $5.05_{0.09}$ & $3.33_{0.01}$ \\
PL ensemble baseline & $1.51_{0.01}$ & $3.3_{0.04}$ & $5.57_{0.19}$ & $5.06_{0.08}$ & $3.32_{0.01}$ \\ 
\bottomrule
\end{tabular}
\vspace{5mm}
\caption{Test Mean Absolute Error (MAE) and the standard error (over 10 random seeds) for PL (Mean) and a PL ensemble baseline}
\end{table}

\section{Additional experiments on the tabular data benchmark}
\label{appendix: additional experiments on the benchmark}
The main takeaway from our theoretical analysis is that an appropriate level of smoothness can lead to a performance gain. In addition to our experiments on the UCI datasets, we also tested this on 18 additional regression tasks from a tabular data benchmark \citep{grinsztajn2022tree}. To ensure that the MAEs of different datasets are comparable, we used z-score rescaling on the target values of each dataset so that the standard deviation was 100. We only used the training datasets to infer the rescaling parameters. To generate the interval targets, we used the proposed algorithm with $q_{\min} = 0, q_{\max} = 50, p_{\min} = 0,$ and $p_{\max} = 1$. In our experiment, we compared MLP with LipMLP using different values for the Lipschitz constants, where $m \in \{1, 4, 16, 64, 256, 1024\}$. We used a validation dataset to select the best hyperparameters, which included the learning rate for MLP (from $\{0.01, 0.001, 0.0001, 0.00001\}$) and both the learning rate and the Lipschitz constant for LipMLP. We provide the test MAE with standard error over 5 random seeds for both methods in Table \ref{tab:lipmlp_vs_mlp}. We bolded the result whenever the mean + standard error (ste) of one method was lower than the mean - ste of the other method. We found that on almost every dataset (apart from GPU), LipMLP performed better than or at least on par with MLP. This extensive improvement demonstrates suggests that determining the right level of smoothness is a simple yet effective method for enhancing learning with interval targets.

\end{document}